\numberwithin{equation}{section}
\newtheorem{assumption}[theorem]{Assumption}
\newenvironment{lista}
  {\begin{list}
  {}
  {\setlength{\labelwidth}{60pt}
  \setlength{\leftmargin}{60pt}
  \addtolength{\itemindent}{0pt}
  \setlength{\topsep}{0pt}
  \setlength{\itemsep}{1pt}}}
 {\vspace{5pt} \end{list} }
\newcommand{\rc}{R} 
\newcommand{\1}{\mathds{1}}
\newcommand{\nt}{{N_c}}
\newcommand{\R}{\mathds{R}}
\newcommand{\N}{\mathds{N}}
\renewcommand{\P}{\mathds{P}}
\newcommand{\M}{\mathcal{M}}
\newcommand{\x}{\mathbf{x}}
\newcommand{\veps}{\varepsilon}
\newcommand{\te}{\textrm}
\DeclareMathOperator{\Dom}{Dom}
\DeclareMathOperator{\card}{card}
\newcommand{\vol}{Vol}
\DeclareMathOperator{\length}{Length}
\DeclareMathOperator{\dist}{dist}
\DeclareMathOperator{\diam}{diam}
\DeclareMathOperator{\divergence}{div}
\DeclareMathOperator{\esssup}{esssup}
\DeclareMathOperator{\Lip}{Lip}
\newcommand{\red}{\color{black}}  
\definecolor{mygreen}{rgb}{0.1,0.75,0.2}
\newcommand{\nc}{\normalcolor}
\newcommand{\mh}[1]{{\color{blue}#1}}
\newcommand{\applied}[2]{\langle #1,#2\rangle}
\DeclarePairedDelimiter\norm{\lVert}{\rVert}
\DeclarePairedDelimiter\abs{\lvert}{\rvert}
\begin{document}

\title{Error estimates for spectral convergence of the graph Laplacian on random geometric graphs towards the Laplace--Beltrami operator}
\titlerunning{Convergence of the graph Laplacian  towards the Laplace--Beltrami operator}
%
%
%
%
%

\author{Nicol\'as Garc\'ia Trillos         \and Moritz Gerlach \and Matthias Hein  \and  Dejan Slep\v{c}ev}
\institute{N.\ Garc\'ia Trillos \at
               Division of Applied Mathematics \\
              Brown University\\
              \email{nicolas\_garcia\_trillos@brown.edu}           
           \and
           M.\ Gerlach \at
           Department of Mathematics and Computer Science\\
          Saarland University\\
          \email{gerlach@cs.uni-saarland.de}
         \and
         M.\ Hein \at
         Department of Mathematics and Computer Science\\  
         Saarland University\\
         \email{hein@cs.uni-saarland.de}
          \and
         D.\ Slep\v{c}ev\at
         Department of Mathematical Sciences\\
         Carnegie Mellon University\\
         \email{slepcev@math.cmu.edu}
}

\date{\today}

\maketitle

\begin{abstract}
        We study the convergence of the graph Laplacian of a random geometric graph generated by an i.i.d. sample from a $m$-dimensional submanifold
        $\M$ in $\R^d$ as the sample size $n$ increases and the neighborhood size $h$ tends to zero. We show that eigenvalues and eigenvectors
       of the graph Laplacian converge with a rate of $O\Big(\big(\frac{\log n}{n}\big)^\frac{1}{2m}\Big)$ to the eigenvalues and eigenfunctions of the weighted Laplace-Beltrami operator of $\M$. 
       No information on the submanifold $\M$ is needed in the construction of the graph
       or the ``out-of-sample extension'' of the eigenvectors. Of independent interest is a generalization of the rate of convergence of empirical measures on 
       submanifolds in $\R^d$ in infinity transportation distance.
\end{abstract}

\keywords{graph Laplacian, spectral clustering, discrete to continuum limit, spectral convergence,  random geometric graph, point cloud}
\subclass{62G20, 65N25, 60D05, 58J50, 68R10, 05C50}

%
%

\subsection{Notation}
\begin{lista}
\item[$\M$] compact manifold without boundary embedded in $\R^d$. Riemann metric on $\M$ is the one inherited from $\R^d$. 
\item[$m$] the dimension of $\M$.
\item[$\vol(A)$] the volume of $A \subset \M$ according to Riemann volume form. 
\item[$d(x,y)$] the geodesic distance between points $x,y \in \M$.
\item[$  B_\M(x,r)$] ball in $\M$ with respect to geodesic distance on $\M$.
\item[$  B(r)$] ball in $\R^d$ of radius $r$, centered at the origin.
\item[$\mu$] probability measure supported on $\M$ that describes the data distribution.
\item[$p$] density of $\mu$ with respect to volume form on $\M$.
\item[$\rho$] density of the weight measure (which allows us to consider the normalized graph Laplacian) with respect to $\mu$.
\item[$\alpha$] constant describing the  bounds on the densities $p$ and $\rho$, see \eqref{def:alphap} and \eqref{eqn:DensityBounds}.
\item[$X$] point cloud $X = \{x_1, \dots, x_n\} \subset \M$ drawn from distribution $\mu$. Also considered as the set of vertices of the associated graph.
\item[$\mu_n$] empirical measure of the sample $X$.
\item[$\vec{m}$] The vector giving the values of the discrete weights used in various forms of graph Laplacian, see Sections \ref{sec:unGL} and \ref{sec:rwGL}. 
\item[$w_{i,j}$] edge weight between vertices $x_i$ and $x_j$.
\item[$\delta u$]  differential of function $u : X \to \R$. It maps edges to $\R$ and is defined by $\delta u_{i,j} = u(x_j) - u(x_i)$. 
\item[$i_0$] injectivity radius of $\M$. \red The injectivity radius at a point $p \in \M$ is the largest radius of a ball for which the exponential map at $p$ is a diffeomorphism. The injectivity radius $i_0$ is the infimum of the injectivity radii at all points of $\M$.\nc
\item[$K$] maximum of the  absolute value of sectional curvature of $\M$
\item[$\rc$] reach of $\M$, defined in \eqref{defreach}. 
\item[$\eta$] nonnegative function setting the edge weights as a function of the distance between the vertices, see \eqref{eqn:weights}.
\item[$h$] length scale such that weight between vertices is large if their distance is comparable to or less than $h$.
\item[$\sigma_\eta$] is the kernel dependent scaling factor relating  the graph Laplacian and the continuum Laplacian; defined in \eqref{def:sigma}.
\item[$\omega_m$] the volume of unit ball in $\R^m$.
\item[$d_\infty(\mu, \nu)$] infinity transportation distance between measures $\mu$, $\nu$. 
\item[$\veps$] upper bounds on the transportation distance between $\mu$ and $\mu_n$.
\item[$L$] Lipschitz constant of various functions: $p$, $\rho$ and $\eta$. 
\item[$P$] discretization operator defined in \eqref{eqn:InterpolatingOp}.
\item[$P^*$] is the adjoint of $P$ if $\rho \equiv 1$ and an approximate adjoint otherwise.
\item[$I$] Interpolation operator defined in \eqref{eqn:InterpolatingOp}.
\end{lista}


\newpage

\section{Introduction}

Given an i.i.d. sample $X=\left\{ x_1,\dots, x_n \right\}$ from the data generating measure $\mu$ in Euclidean space $\R^d$, the goal of most tasks in machine learning and statistics is to infer properties of $\mu$. A particularly interesting case is if $\mu$ has support on a $m$-dimensional compact submanifold $\M$ in $\R^d$ e.g. due to strong dependencies between the individual features. In this case one can construct a neighborhood
graph on the sample by connecting all vertices of Euclidean distance less than a certain  length-scale  $h$, and in this way produce a discrete approximation of the unknown manifold $\M$. Laplacian Eigenmaps \cite{Belkin02laplacianeigenmaps} and Diffusion
Maps \cite{Coifman1} have been proposed as tools to extract intrinsic structure of the manifold by considering the eigenvectors of the resulting unnormalized resp. normalized \emph{graph Laplacian};  in particular, Laplacian eigenmaps are used in the first step of spectral clustering \cite{vonLux_tutorial}, one of the most popular graph-based clustering methods . In general, it is well known that the spectrum of the graph
Laplacian resp. Laplace-Beltrami operator captures important structural resp. geometric properties of the graph \cite{Moh1991} resp. manifold \cite{Cha1984}. 

 In this paper we examine this question: under what conditions, and at what rate, does the spectrum of the graph Laplacian built from i.i.d. samples on a submanifold  converge to the spectrum of the  (weighted) Laplace--Beltrami operator of the submanifold as the sample size $n \rightarrow \infty$ and the neighborhood radius $h\rightarrow 0$?  

 Graph-based approximations to the Laplace-Beltrami operator have been studied by several authors and in a variety of settings.  The pointwise convergence of the graph Laplacian towards the Laplace-Beltrami operator has been proven in \cite{HeAuvL07,bel_niy_LB,GK,Hei2006,singer06,THJ}.  The spectral convergence of the graph Laplacian for fixed
neighborhood size $h$ for Euclidean domains has been established in \cite{vLBeBo08,RosBelVit2010}. 
The spectral convergence of the graph Laplacian
towards the Laplace--Beltrami operator for the uniform distribution has been discussed in \cite{belkin2007convergence} for the case of Gaussian weights and in \cite{SinWu13} for the connection Laplacian, without precise information on allowable scaling of neighborhood radius, $h$ and without convergence rates. In \cite{GTSspectral} the authors establish the conditions on graph connectivity for the spectral convergence on domains in $\R^m$. In particular they prove convergence when  $h \to 0$ as $n \to \infty$ and 
\[ h \gg \frac{(\log n)^{p_m}}{n^{\frac{1}{m}}}, \]
where
\begin{equation} \label{defpm}
p_m = \begin{cases}
		\frac{3}{4} \quad & \te{if } m = 2 \medskip \\
		\frac{1}{m} & \te{if } m\geq 3.
	 \end{cases}
\end{equation}
However no error estimates were established. 
\red 
The preprint \cite{Shi2015} establishes (in Theorem 1.1)  the spectral convergence of graph Laplacians constructed from data sampled from a submanifold in $\R^d$ with a convergence rate of $O\Big( \big(\frac{\log n}{n}\big)^{\frac{1}{4m+14}}\Big)$, where $m$ is the intrinsic dimension of the submanifold. 
\nc

In this paper we propose a general framework to analyze the rates of spectral convergence for a large family of graph Laplacians. This framework in particular allows us to improve the results in \cite{Shi2015} 
 and establish a convergence rate of $O\Big(\big(\frac{\log n}{n}\big)^\frac{1}{2m}\Big)$ which is a significant improvement, in particular for small dimensions $m$. These convergence rates hold for different reweighing schemes of the graph Laplacian found in the literature including the unnormalized  Laplacian, normalized Laplacian, and the random walk Laplacian. When the intrinsic dimension of the submanifold $\M$ is small, our results show, to some extent, why Laplacian eigenmaps can effectively extract geometric information from the data set, even though the number of features $d$ may be high. 
Moreover, similar to \cite{GTSspectral}, we show that
the conditions in \eqref{defpm} are sufficient for spectral convergence. This is essentially the same condition required to ensure that the constructed graph is almost surely connected \cite{PenroseBook} and thus is close to optimal. It is interesting to note that for pointwise consistency  of the graph Laplacian \cite{HeAuvL07,GK} the required stronger condition is $\frac{nh^{m+2}}{\log n} \rightarrow \infty$. 

Our framework is completely different from that in \cite{belkin2007convergence,Shi2015} and builds on two main ideas. First, it builds on an extension of the recent result of Burago, Ivanov und Kurylev \cite{BIK}, see also \cite{Fuj1995}, which shows in a non-probabilistic setting how one can approximate 
eigenvalues and eigenfunctions of the Laplace-Beltrami operator using the eigenvalues/eigenvectors of the graph Laplacian associated to an $\epsilon$-net of the submanifold.  As in our setting the manifold $\M$ is unknown, we generalize the result of \cite{BIK} by using a graph construction which requires no knowledge about the submanifold $\M$ but which achieves the same approximation guarantees for the eigenvalues. In addition, we introduce a new out-of-sample extension of the eigenvectors for the approximation of the eigenfunctions which requires no information about the submanifold without significant loss in the convergence rate compared to the
corresponding construction used in \cite{BIK}.  Our second main result generalizes the recent work of
Garc\'{i}a Trillos and Slep{\v{c}}ev \cite{GTS15a}  to the setting of empirical measures on submanifolds $\M \subset \R^d$ and establishes their rate of convergence in $\infty$-optimal transportation (OT) distance; the $\infty$-OT distance between the empirical measure associated to a point cloud and the volume form of the submanifold can be seen to be closely related to the notion of $\epsilon$-net used in \cite{BIK}. These estimates encompass all the probabilistic computations that we need to obtain our main results, and in particular, when combined with our deterministic computations, provide all the probabilistic estimates that quantify the rate of convergence of the spectrum of graph Laplacians constructed from randomly generated data towards the spectrum of a (weighted) Laplace-Beltrami operator on $\M$. We believe that both the generalization of \cite{BIK}, as well as the generalization of \cite{GTS15a} are of independent interest. The combination of these two ideas and a number of careful estimates
lead to our main results.

In what follows we make the setting that we consider in the sequel precise,  as well as  define precisely the different graph Laplacians and their continuous counterparts. 
\nc

\subsection{\red Graph construction \nc}
\label{subsec:Setting}
Let $\M$ be a compact connected $m$-dimensional Riemannian manifold without boundary, embedded in $\R^d$, with $m\geq 2$. We assume that the absolute value of sectional curvature is bounded by $K$, the injectivity radius is  $i_0$ and
with reach $\rc$. 
We write $d(x,y)$ for the distance between $x$ and $y$ on the manifold and $\abs{x-y}$ for the
Euclidean distance in $\R^d$.

Let $\mu$ be a probability measure on $\M$ that has a non-vanishing Lipschitz continuous density $p$ with respect to the Riemannian volume on $\M$ {with
Lipschitz constant $L_p$}.
Compactness of $\M$ and continuity of $p$ guarantee the existence of a constant $\alpha \geq 1$ such that
\begin{equation} \label{def:alphap}
  \frac{1}{\alpha} \leq p(x) \leq \alpha \quad \text{for all } x \in \M.
\end{equation}

We let $x_1, x_2, \dots, x_n, \dots$ be a sequence of i.i.d.\ samples from $\mu$.  In order to leverage the geometry of $\M$ from the data, we build a graph with vertex set $X \coloneqq \{x_1,\dots,x_n\}$.
In the simplest setting, for each $n\in\N$ we choose a neighborhood parameter $h=h_n$ and we put an edge from $x_i$ to $x_j$ and from $x_j$ to $x_i$ (and write $x_i \sim x_j$)  provided that $\abs{x_i-x_j} \leq h$; 
we let $E=\{(i,j) \in \{1,\dots,n\}^2 : x_i\sim x_j  \}$ be the set of such edges. More generally, we consider weighted graphs, with weights that depend  on the distance between the vertices connected by them. For that purpose, 
let us consider a decreasing function $\eta\colon [0,\infty) \rightarrow [0,\infty)$ with support on the interval $[0,1]$ such that the restriction of $\eta$ to $[0,1]$ is Lipschitz continuous. 
Normalizing $\eta$ if needed allows us to assume from here on that 
\begin{equation}\label{eq:normalizedkernel}
 \int_{\R^m} \eta(\abs{x}) dx =1.
\end{equation}
For convenience we assume that $\eta(1/2)>0$.
 We denote by
 \begin{equation} \label{def:sigma}
\sigma_\eta \coloneqq \int_{\R^m} \abs{y_1}^2 \eta(\abs{y}) dy,
\end{equation}
the \emph{surface tension} of $\eta$, where $y_1$ represents the first coordinate of the vector $y\in \R^m$. 
To every  given edge $(i,j) \in E$ we assign the weight $w_{i,j}$ where
\begin{align}
\label{eqn:weights}
  {w_{i,j} = \frac{1}{nh^m}\eta\left(\frac{\abs{x_i-x_j}}{h}\right)} 
\end{align}
and we consider the weighted graph $(X,w)$ with $w_{i,j} $ as in \eqref{eqn:weights} for every $(i,j)$. 
In fact, note that if the points $x_i$, $x_j$ are not connected by an edge in $E$ then $w_{i,j}=0$.

\begin{remark}
\label{rem:sigmaetanum}
The function $\eta$ can be chosen as $c \mathds{1}_{[0,1]}$
as well as a smooth  function like
\[ \eta(t) \coloneqq c \begin{cases} \exp\left(\frac{1}{t-1}\right) & 0 \leq t < 1 \\ 0 & t\geq 1,\end{cases}\]
(where $c$ is the appropriate constant ensuring normalization)
or simply a truncated version of a Gaussian. 
Also, we note that for $\eta = \frac{1}{\omega_m} \mathds{1}_{[0,1]}$ it follows from \cite[(2.7)]{BIK} that 
$\sigma_\eta = \frac{1}{m+2}$, where $\omega_m$ is the volume of the unit ball in $\R^m$.
While the definition of the weights is up to the constant $\sigma_\eta$ and a slightly different rescaling in terms of $h$
is similar to \cite{BIK}, the main difference is that we use the Euclidean metric of the ambient space $\R^d$ 
in \eqref{eqn:weights}, whereas in \cite{BIK} neighborhoods are throughout defined in terms of the geodesic distance. Here we are forced to use the metric from the ambient space as the manifold $\M$ is in general assumed to be unknown.  
\end{remark}

\begin{remark}
We have assumed that $\eta\colon [0,1] \rightarrow \R$ is decreasing and that $\eta(1/2)>0$, which would imply that $\eta(0)>0$. Nevertheless, we remark that none of the results 
presented in this paper change if we modify the value of $\eta(0)$. In particular we allow for $\eta(0)=0$ if desired and we can simply assume that $\eta$ is decreasing and Lipschitz in $(0,1)$ (then the condition $\eta(0)>0$ changes to $\eta(0+) >0$). 
This observation is relevant in order to allow for graphs where vertices have no edges with themselves. 
\end{remark}

\red
\begin{remark}
The requirement that $\eta$ is compactly supported is purely a technical one. It is in principle possible to 
carry out the arguments of this work for noncompact kernels, like the Gaussian one. However that would require obtaining error bounds on extra terms and would make the already involved estimates even more complicated. 
\end{remark}
\nc

\subsection{Dirichlet forms and laplacians}
\label{sec:DirichletForms}
In this section we introduce the Laplacians in both discrete and continuous settings.\nc

We use the graph structure defined in the previous section to define a Dirichlet form in the discrete setting. First, the weights $w_{i,j}$ serve as a measure on the set $E$ and thus induce a scalar product of functions $F,G \colon E \to \R$ given  by
\[ 
\applied{F}{G} \coloneqq \applied{F}{G}_{L^2(E,w)} 
\coloneqq {\frac{1}{n \sigma_{\eta}}}\sum_{(i,j) \in E} w_{i,j} F(i,j)G(i,j).
\]
Second, for functions $u,v \colon X\to\R$ on the vertices, we define the \emph{discrete differential}
\begin{equation}
 (\delta u)(i,j) \coloneqq {\frac{1}{h}}\big(u(x_j) - u(x_i)\big) \quad \text{for } (i,j) \in E .
 \label{eqn:DiscreteDiff}
 \end{equation}
We can then define the \emph{discrete Dirichlet form} between $u,v \colon X\to\R$ as
\begin{align}
\label{eqn:discreteform}
b(u,v) \coloneqq \applied{\delta u}{\delta v}_{L^2(E,w)}.
\end{align}

\medskip

In the continuous setting, on the domain $V\coloneqq H^1(\M,\mu)$ (the Sobolev space of functions in $L^2(\M , \mu)$ with distributional derivative in $L^2(\M, \mu)$) 
we define the Dirichlet form $D\colon V\times V \to \R$ as
\begin{align}
\label{eqn:contform}
D(f,g) \coloneqq \int_\M \applied{\nabla f}{ \nabla g}_x p^2(x) dVol(x),
\end{align}
where $Vol$ stands for the Riemannian volume form of $\M$, $ \nabla f$ and $ \nabla g$ are the gradients of $f$ and $g$ and $\langle \cdot, \cdot \rangle$ represents the Riemannian metric induced on $\M$. Since $p$ is bounded from above, this symmetric bilinear form is continuous, i.e.\
$\abs{D(f,g)} \leq C' \norm{f}_V \norm{g}_V$ for a suitable constant $C'>0$ and all $f,g \in V$. For the remainder we use  $b(u)$ and $D(f)$ as shorthand for $b(u,u)$ and $D(f,f)$, respectively.

Next, we choose measures on $X$ and on the manifold $\M$ and define corresponding operators associated with the forms $b$ and $D$ on $L^2(X)$ and $L^2(\M)$, respectively.
The idea is that by modifying the inner product in $L^2(X)$ and in $L^2(\M)$ we obtain different realizations of Laplacian operators.  The so-called unnormalized and random walk
graph Laplacian (see definitions below), as well as their continuous counterparts, are instances of the general framework that we consider. Let $\mu_n$ be the empirical measure of the random sample, i.e.
\[ \mu_n = \frac{1}{n} \sum_{i=1}^n \delta_{x_i}. \]
On $X$ we consider the measure $\mu_n$ endowed with a density $\vec{m}=(m_1,\dots,m_n)$, denoted by $\vec{m}\mu_n$. 
On the other hand, on $\M$, we consider the measure
$\rho\mu$, where $\rho$ is a Lipschitz continuous density {with Lipschitz constant $L_\rho$ with respect to $\mu$} satisfying
\begin{equation}
\frac{1}{\alpha} \leq \rho(x) \leq \alpha \quad \text{for all } x \in \M.
\label{eqn:DensityBounds}
\end{equation}

On the graph $\Gamma = \Gamma(X, \vec{m}\mu_n, E ,w)$, we define the associated \emph{weighted graph Laplacian} $\Delta_\Gamma$
as $\delta^* \delta$, i.e.\ as the unique operator satisfying
\[ \applied{ \Delta_\Gamma u}{v}_{L^2(X,\vec{m}\mu_n)} =  \applied{\delta u}{\delta v}_{L^2(E,w)}\]
for all $u,v \in L^2(X)$.

At the continuum level, we define a weighted Laplacian associated with the form $D$ and the measure $\rho \mu$ as follows. On the domain
\[ \Dom(\Delta) \coloneqq \Big\{ f \in V: \, \exists\, h \in L^2(\M,\rho\mu)\text{ s. t. }D(f,g)=
 \applied{h}{g}_{L^2(\M, \rho \mu)}
 \; \forall \, g\in V\Big\} \]
we set $\Delta f \coloneqq h$. The operator $\Delta$ is formally defined as 
\[  \Delta f  =- \frac{1}{\rho p} \divergence(  p^2 \nabla f),\]
where $\divergence$ stands for the divergence operator on $\M$.

One of the main results of this paper is that the spectrum of $\Delta_\Gamma$ approximates well that of $\Delta$. Intuitively, one of the elements needed for this to be true is that the measure $\vec{m} \mu_n$ approximates  $\rho\mu$ as $n \rightarrow \infty$.  We use
\begin{equation} \label{eq:mrho}
 \lVert \vec{m} - \rho \rVert_\infty := \max_{i=1, \dots, n} \lvert m_i - \rho(x_i) \rvert 
\end{equation}
to quantify this approximation. 

We now describe particular forms of the graph Laplacian which frequently used in the machine learning literature.

\subsubsection{Unnormalized graph Laplacian}   \label{sec:unGL}
To obtain the unnormalized graph Laplacian, we choose the density vector $\vec{m}$
as $(1,1,\dots)$. Then $\Delta_\Gamma$ is explicitly given by
\[ (\Delta_\Gamma u)(x_i) = {\frac{2}{\sigma_\eta \,h^2 }} \sum_{j : i\sim j} w_{i,j} (u(x_i) - u({x_j})) \]
for all $x_i \in X$, {which is, up to the factor $\frac{2}{\sigma_\eta \,h^2}$, known as the unnormalized graph Laplacian.}
In this case  $\rho \equiv1$, since $\rho$ is the limit of $\vec{m}$ as $n \to \infty$.
This results in a realization of the Laplacian on $L^2(\M,\rho\mu)$ that satisfies
\[ \int_\M \Delta f g p(x) dx = \int_\M  \langle \nabla f,   \nabla g \rangle_x p^2(x) dx = D(f,g)\]
for all $f,g \in \Dom(\Delta)$. In case $p\in C^1(\M)$, this operator $\Delta$ coincides with
\[ \Delta f =  -p\cdot \Delta_2 f  = - \frac{1}{p} \divergence(p^2 \nabla f )\] 
from Definition 8 of \cite{HeAuvL07}, where it was identified as the 
pointwise limit of the unnormalized graph Laplacian.


\subsubsection{Random walk graph Laplacian}   \label{sec:rwGL}
 In order to obtain the random walk graph Laplacian,
we choose the density vector $\vec{m}$ as the vertex degrees, i.e.\
\begin{align}
\label{eqn:minormlaplace} 
m_i \coloneqq {\sum_{j=1}^n w_{ij} = }\frac{1}{nh^m} \sum_{j=1}^n \eta\left(\frac{\abs{x_i-x_j}}{h}\right) \quad \text{for }i\in\{1,\dots,n\}
\end{align}
and $\rho(x) =  p(x)$ for all $x \in \M$. Then $\Delta_\Gamma$ is given by
\[ (\Delta_\Gamma u)(x_i) = {\frac{2}{\sigma_\eta \,h^2}} \sum_{j : i\sim j} \frac{w_{i,j}}{m_i} \big(u(x_i) - u({x_j})\big) \]
for all $x_i \in X$ and $\Delta$ satisfies 
\[ 
\int_\M \Delta f\cdot g \cdot p^2 d\vol =  D(f,g)\]
for all $f,g \in \Dom(\Delta)$.
In case that $p\in C^1(\M)$, 
$
 \Delta$ is nothing but 
\[ \Delta f = -\Delta_2  f = - \frac{1}{p^2} \divergence(p^2 \nabla  f)\]
from \cite[Definition 8]{HeAuvL07}. In the remainder we use $\Delta^{rw}_\Gamma$ to denote 
the random walk graph Laplacian and $\Delta^{rw}$ for its continuous counterpart. 
\red Showing the closeness of $\vec{m}$ and $\rho$, \eqref{eq:mrho}, reduces to showing a kernel density estimate on a manifold.  In the Appendix \ref{sec:kdeA} we show that provided $h$ satisfies Assumption \ref{stdassumptions}, we have
\begin{equation}
 \max_{i=1, \dots, n}  \lvert m_i - p(\x_i) \rvert  \leq  CL_p h  +  C \alpha \eta(0) m \omega_m \frac{\veps}{h} + C \alpha m \left(K + \frac{1}{R^2} \right)h^2 ,  
 \label{LinfWeights}
 \end{equation}
where $C>0$ is a universal constant and $\veps$ is the $\infty$-OT distance between $\mu_n$ and $\mu$ (see \eqref{epsilon} and Section \ref{sec:transport}). These estimates are proved using a simple and general approach using the transportation maps introduced in Section \ref{sec:transport};
in contrast to  usual kernel density estimation approaches. 
The estimates are not optimal, but they are on the same order of error as the approximation error of the Dirichlet form $D$ by the discrete Dirichlet form $b$ that we present in Lemma  \ref{lem:discretization} and Lemma \ref{lem:interpolation}; the bottom line is that the rates of convergence for the spectrum of the random walk graph Laplacian are unaffected by the non-optimal estimate \eqref{LinfWeights}. On the other hand our proof of \eqref{LinfWeights} has the advantage of reducing all probabilistic estimates in our problem to estimating the $\infty$-OT distance between $\mu_n$ and $\mu$; which is done in Section \ref{sec:transport}.  \nc


\subsubsection{Normalized graph Laplacian}   \label{sec:nGL}
So far we have described how one can obtain the unnormalized and random walk Laplacians as examples of the general framework introduced in this section.  Let us recall another popular version of normalized Laplacian usually referred to as \textit{symmetric} normalized graph Laplacian.  For given $u\colon X \rightarrow \R$, the symmetric normalized Laplacian of $u$ is given by
\[ (\Delta_\Gamma^S u)(x_i) \coloneqq  {\frac{2}{\sigma_{\eta} \,h^2}} \sum_{j : i\sim j} \frac{w_{i,j}}{\sqrt{m_i}} \left( \frac{u(x_i)}{\sqrt{m_i}} - \frac{u({x_j})}{\sqrt{m_j}} \right) \]
with $m_i$ defined by \eqref{eqn:minormlaplace}.
We remark that $\Delta^S_{\Gamma}$ can not be obtained by appropriately choosing the measure $\vec{m}\mu$ as described in this section (in order to recover it we would have to modify the definition of discrete 
differential in \eqref{eqn:DiscreteDiff}).  Nevertheless, we can indirectly analyze the rate convergence of its spectrum towards that of a continuous counterpart 
noting that $\Delta^S$ and $\Delta^{rw}$ are similar matrices.
Indeed, we recall that
$\Delta_\Gamma^S u = \lambda u$ if and only if $\Delta^{rw}_\Gamma v = \lambda v$ where $v(x_i)\coloneqq m_i^{-1/2}u(x_i)$. Thus, $\Delta_\Gamma^{rw}$ and $\Delta_\Gamma^S$ share the same spectrum.


\subsection{Main results}
\label{sec:mainresults}

\subsubsection{Convergence of eigenvalues \red and transportation estimates \nc}
Our first main result is the following.
\begin{theorem}\label{thm:rate}
	Let $x_1, \dots, x_n$ be i.i.d. samples from a distribution $\mu$ supported on $\M$, with density $p$ satisfying \eqref{def:alphap}. Consider $\vec{m}$ and $\rho$ as in Section \ref{sec:unGL} or Section \ref{sec:rwGL}. For $k \geq 2$ let $\lambda_k(\Gamma)$ be the $k$-th eigenvalue of the graph Laplacian $\Delta_\Gamma$ defined in  Section \ref{sec:DirichletForms} with
		\[ h := \sqrt{\frac{\log(n)^{p_m}}{n^{1/m}}}, \]
		where $p_m =3/4$ if $m=2$ and $p_m=1/m$ if $m\geq 3$.
		 Let $\lambda_k(\M)$ be the $k$-th eigenvalue of the Laplacian $\Delta$ defined in Section \ref{sec:DirichletForms}. Then,
		\[ \frac{|\lambda_k(\Gamma) - \lambda_k(\M)|}{\lambda_k(\M)} = O\Bigg(  \sqrt{\frac{\log(n)^{p_m}}{n^{1/m}}}\Bigg), \quad \textrm{ almost surely}.\]
	\end{theorem}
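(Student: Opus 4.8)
The plan is to combine a deterministic spectral approximation result with a probabilistic estimate on the $\infty$-transportation distance between $\mu_n$ and $\mu$. The key object linking the discrete and continuous worlds is the pair of operators $P$ (discretization) and $I$ (interpolation), defined via a transportation map $T_n\colon \M \to X$ realizing (almost) the $\infty$-OT distance $\veps = d_\infty(\mu,\mu_n)$. First I would invoke the transportation estimate (the generalization of \cite{GTS15a} to submanifolds announced in the introduction): with the stated choice $h = \sqrt{\log(n)^{p_m}/n^{1/m}}$ one has, almost surely for $n$ large, $\veps \ll h$, in fact $\veps/h \to 0$ at the rate $\veps/h = O\big(\sqrt{\log(n)^{p_m}/n^{1/m}}\big)$ up to the logarithmic corrections encoded in $p_m$. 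This is exactly the regime in which the graph is connected and in which the discrete Dirichlet form $b$ and the continuum Dirichlet form $D$ are comparable.

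Next I would use the two one-sided comparison lemmas referenced in the text (Lemma \ref{lem:discretization} for $P$ and Lemma \ref{lem:interpolation} for $I$): for $u = Pf$ one gets $b(Pf) \leq (1 + C\delta) D(f)$ and $\norm{Pf}_{L^2(\vec m \mu_n)}^2 \geq (1 - C\delta)\norm{f}_{L^2(\rho\mu)}^2$, and symmetrically for $If$ in the interpolation direction, where $\delta$ is a quantity controlled by $\veps/h$, $h$, $L_p$, $L_\rho$, the curvature bound $K$, $1/\rc$, and (for the random walk case) the kernel-density error $\norm{\vec m - \rho}_\infty$ from \eqref{LinfWeights}; all of these are $O(h) + O(\veps/h) = O\big(\sqrt{\log(n)^{p_m}/n^{1/m}}\big)$. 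Feeding these into the Courant--Fischer min-max characterization of $\lambda_k$ on both sides (using $P$ applied to the span of the first $k$ continuum eigenfunctions to bound $\lambda_k(\Gamma)$ from above by $(1+C\delta)\lambda_k(\M)/(1-C\delta)$, and $I$ applied to the span of the first $k$ discrete eigenvectors for the reverse bound) yields $|\lambda_k(\Gamma) - \lambda_k(\M)| \leq C\delta\,\lambda_k(\M)$ for $n$ large, which is the claimed rate. One subtlety is that $I$ is only an approximate adjoint of $P$ when $\rho \not\equiv 1$; I would absorb the resulting cross-terms into $\delta$ using \eqref{LinfWeights}, which is why the statement restricts to the $\vec m, \rho$ of Sections \ref{sec:unGL} and \ref{sec:rwGL}.

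The main obstacle, I expect, is establishing the transportation estimate $d_\infty(\mu_n,\mu) = O\big((\log n / n)^{1/2m}\big)$ (with the sharper $(\log n)^{3/4}$ exponent when $m=2$) for empirical measures on a submanifold — this is the probabilistic heart of the argument and the place where the manifold geometry (covering numbers, curvature, reach) enters, and where the dimension $m$ rather than the ambient dimension $d$ shows up. Once that is in hand, and once one has the deterministic comparison of Dirichlet forms through $P$ and $I$ with error governed by $\delta \asymp h + \veps/h$, the eigenvalue statement follows by a now-standard min-max interlacing argument; the remaining care is bookkeeping: verifying that with $h = \sqrt{\log(n)^{p_m}/n^{1/m}}$ one indeed has $h$ and $\veps/h$ both of order $\sqrt{\log(n)^{p_m}/n^{1/m}}$ so that the two error contributions balance and neither dominates, and checking the connectivity condition \eqref{defpm} is met so that $\Delta_\Gamma$ has a well-defined $k$-th eigenvalue for all large $n$.
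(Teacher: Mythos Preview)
Your proposal is correct and follows essentially the same route as the paper: combine the transportation estimate (Theorem~\ref{thm:transport}) with the deterministic two-sided comparison of Dirichlet forms through $P$ and $I$ (Lemmas~\ref{lem:discretization} and~\ref{lem:interpolation}), then feed both into the Courant--Fischer min-max to get Theorem~\ref{thm:conveigenvalues}, of which Theorem~\ref{thm:rate} is the specialization via Corollaries~\ref{cor:cor1} and~\ref{cor:cor2}. One small slip: the transportation estimate you need is $\veps = d_\infty(\mu_n,\mu) = O\big((\log n)^{p_m}/n^{1/m}\big)$, not $O\big((\log n/n)^{1/2m}\big)$ --- the latter is the order of $h=\sqrt{\veps}$, not of $\veps$ itself --- but your balancing $\veps/h \asymp h$ is exactly right.
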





\red
 The actual choice of $h$ in the previous theorem is explained by the more general and detailed result stated in Theorem  \ref{thm:conveigenvalues}, together with the estimates for the $\infty$-OT distance between $\mu$ and $\mu_n$ in Theorem \ref{thm:transport}. Indeed, we have taken $h$ to scale like $\sqrt{\veps}$ where $\veps$ is the $\infty$-OT distance between $\mu$ and $\mu_n$. More precisely, 
\begin{equation}
 \veps = d_\infty(\mu, \mu_n) := \min_{T : T_{\sharp } \mu = \mu_n} \esssup_{x \in \M} d(x,T(x)).
\label{epsilon}
\end{equation}  
where $T_{\sharp } \mu = \mu_n$ means that $\mu(T^{-1}(U)) = \mu_n(U)$ for every Borel subset $U$ of $\M$. Such mappings $T$ are called transport maps from $\mu$ to $\mu_n$.
One of the key ingredients needed to establish Theorem \ref{thm:rate} is the probabilistic estimate on $\infty$-OT distance contained in our next theorem. 
\nc

\begin{theorem} \label{thm:transport}
	Let $\mathcal{M}$ be a smooth, connected, compact manifold with dimension $m$. Let $p\colon\M \rightarrow \R$ be a probability density satisfying \eqref{eqn:DensityBounds} and consider the measure {$d \mu= p\, d\vol$}. Let $x_1,\dots, x_n$ be an i.i.d sample of $\mu$. 
	Then, for any $\beta>1$ and every $n\in\N$
	there exists a transportation map $T_n\colon \M \to X$ and a constant $A$ such that
	\begin{equation} \label{winfest}
	\sup_{x \in \M} d(x, T_n(x)) \leq  \ell := A 
	\begin{cases}
	\frac{\log (n)^{3/4}}{n^{1/2}},  & \text{if } m =2, \medskip \\
	\frac{(\log n)^{1/m}}{n^{1/m}},  & \text{if } m \geq 3,
	\end{cases}
	\end{equation}
	holds with probability at least $1-C_{K,\vol(\M),m,i_0} \cdot n^{-\beta}$,
	where $A$ depends only on $K$, $i_0$, $m$, $\vol(\M)$, $\alpha$ and $\beta$. \nc
\end{theorem}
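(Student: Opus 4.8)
The plan is to transplant the Euclidean estimate of \cite{GTS15a} onto $\M$ by cutting $\M$ into finitely many charts of fixed size, and to absorb the resulting mismatch of masses by a single global Dacorogna--Moser correction. Using compactness together with the bounds on sectional curvature $K$ and injectivity radius $i_0$, fix a partition $\M = \bigcup_{i=1}^N U_i$ into finitely many Borel cells (for instance the geodesic Voronoi cells of a maximal $i_0/4$-separated subset of $\M$): each $U_i$ lies in a geodesic ball of radius $\le i_0/2$ and, in the exponential chart $\Phi_i$ at its centre, is bi-Lipschitz to a bounded connected Lipschitz domain $\Omega_i \subset \R^m$, where $N$, the bi-Lipschitz constant $\Lambda$, the Lipschitz-boundary constant of $\Omega_i$, and the comparison between geodesic distance on $\M$ and Euclidean distance in the chart are all controlled in terms of $K$, $i_0$ and $\vol(\M)$ only. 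In particular $\mu(U_i)$ is bounded above and below by positive constants depending only on $K,i_0,\vol(\M),\alpha$.

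Let $n_i := \#\{k : x_k \in U_i\} \sim \mathrm{Bin}(n,\mu(U_i))$. By Bernstein's inequality and a union bound over the $N$ cells there is a constant $A_0 = A_0(K,i_0,\vol(\M),\alpha,\beta)$ with $|n_i - n\,\mu(U_i)| \le A_0\sqrt{n\log n}$ for all $i$, with probability at least $1 - \tfrac12 C n^{-\beta}$; on this event $n_i \ge c\,n$ for all $i$. Next, correct the masses: let $\hat\mu$ be the measure with $d\hat\mu = c_i \, d\mu$ on $U_i$, where $c_i := (n_i/n)/\mu(U_i)$, so that $\hat\mu(U_i) = n_i/n$ and $\hat\mu|_{U_i}$ is a constant multiple of $\mu|_{U_i}$; its density with respect to $\vol$ stays in $[\tfrac1{2\alpha},2\alpha]$ and differs from $p$ by $O(\sqrt{\log n/n})$ in $L^\infty$. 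By the Dacorogna--Moser construction on $\M$ in the version valid for $L^\infty$ data (solve $\Delta\phi = p - \hat p$ and flow the renormalised gradient field) there is a transport map $S : \M \to \M$ with $S_\sharp\mu = \hat\mu$ and $\sup_{x\in\M} d(x,S(x)) \le A_1\sqrt{\log n/n}$, with $A_1$ depending only on $K,i_0,\vol(\M),\alpha$; for $m \ge 2$ and $n$ large this is at most $\tfrac12\ell$, with $\ell$ as in \eqref{winfest}.

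Now work cell by cell. Conditioning on $(n_1,\dots,n_N)$ and on which sample point lies in which cell, the points of $X$ inside $U_i$ are $n_i$ i.i.d.\ samples from $\mu|_{U_i}/\mu(U_i)$, and the latter equals the normalisation of $\hat\mu|_{U_i}$. Pushing this picture through $\Phi_i^{-1}$ to $\Omega_i$ turns the target into an empirical measure of $n_i$ i.i.d.\ points and the source into a measure on $\Omega_i$ with density bounded above and below; the estimate of \cite{GTS15a}, which holds (as stated, or after a direct adaptation of its multiscale argument) for any density bounded above and below on a bounded connected Lipschitz domain, then gives a transport map on $\Omega_i$ with displacement at most $C\,(\log n_i)^{p_m}/n_i^{1/m}$ and failure probability at most $C\,n_i^{-\beta} \le C'n^{-\beta}/N$. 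Transporting back through $\Phi_i$ multiplies the displacement by at most $\Lambda$ and converts Euclidean to geodesic distance up to a constant, yielding a transport map $T^{(i)}$ from $\hat\mu|_{U_i}$ to the empirical measure on the points in $U_i$ with $\sup d(\cdot,T^{(i)}) \le A_2\,(\log n)^{p_m}/n^{1/m} \le \tfrac12\ell$. Since the $U_i$ partition $\M$ and $\hat\mu(U_i) = n_i/n = \mu_n(U_i)$, the maps $T^{(i)}$ glue to a single map $T : \M \to X$ with $T_\sharp\hat\mu = \mu_n$ and the same displacement bound, and $T_n := T\circ S$ satisfies $(T_n)_\sharp\mu = \mu_n$ and, by the triangle inequality, $\sup_{x\in\M} d(x,T_n(x)) \le \sup d(\cdot,S) + \sup d(\cdot,T) \le \ell$. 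Intersecting the finitely many good events yields the asserted probability $1 - C n^{-\beta}$ with $C$ depending only on $K,\vol(\M),m,i_0$ and $A$ in \eqref{winfest} depending on $K,i_0,m,\vol(\M),\alpha,\beta$; small values of $n$ are trivial after enlarging $A$.

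The main obstacle is making this transplantation quantitatively uniform: one must know that the bi-Lipschitz constants of the exponential charts, and the comparison between geodesic and chart-Euclidean distances on balls of radius $\sim i_0$, depend only on $K$ and $i_0$ (this is precisely where bounded geometry is used), and one must have the Euclidean estimate of \cite{GTS15a} in the form valid for general densities bounded above and below on a Lipschitz domain, not only for the cube with the uniform measure. The remaining ingredients — Bernstein concentration for the cell counts, the Dacorogna--Moser displacement bound $\lesssim \|\hat p - p\|_{L^\infty}$, the gluing, and the bookkeeping of failure probabilities — are routine. The extra factor $(\log n)^{1/4}$ in dimension $m = 2$ is inherited verbatim from the sharp two-dimensional transportation estimate behind \cite{GTS15a} and needs no new input here.
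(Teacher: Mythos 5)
Your proposal follows the same global strategy as the paper: partition $\M$ into finitely many geodesic Voronoi cells of a maximal $r$-separated net (with $r$ a fixed fraction of $\min\{1,i_0,1/\sqrt K\}$), verify that each cell is uniformly bi-Lipschitz to a fixed Euclidean ball, introduce an intermediate density whose cell masses match those of $\mu_n$, bound the $\infty$-transport distance from $\mu$ to that intermediate measure in terms of the $L^\infty$ deviation of densities (itself $\lesssim\sqrt{\log n/n}$ by a concentration inequality and a union bound), apply the Euclidean result of \cite{GTS15a} in each chart to move the intermediate measure to $\mu_n$, glue and compose. The paper uses an additive correction $p_n = p + \text{const}_i$ per cell and Hoeffding; you use a multiplicative correction $\hat p = c_i\,p$ and Bernstein — an immaterial difference.

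The one genuinely different ingredient is how you obtain the $d_\infty \lesssim \|\cdot\|_{L^\infty}$ estimate. The paper proves this as Lemma~\ref{lemma:W8L8} by an explicit iterative construction: it transports mass imbalance from cell to cell along the connectivity graph, bounding the accumulated corrections $\beta_k$ by hand, and invoking the Euclidean analogue \cite[Theorem~1.2]{GTS15a} inside each exponential chart. This is self-contained and yields an explicit constant in terms of $\nt$, $r$, $\vol(\M)$ and $\diam(\M)$. You instead invoke a Dacorogna--Moser flow on $\M$: solve $\Delta\phi = \hat p - p$ and flow the normalised gradient. This is a slicker route, but it hides two points that need justification: (i) the elliptic estimate $\|\nabla\phi\|_\infty \lesssim \|\hat p - p\|_\infty$ with a constant depending only on $K, i_0, \vol(\M), m$ requires uniform Sobolev/Poincaré/$W^{2,q}$ bounds on a manifold of bounded geometry — true, but not trivial to make quantitative; and (ii) since $\hat p - p$ is merely $L^\infty$ (discontinuous across cell boundaries), $\nabla\phi$ is only $C^{0,\gamma}$, so the ODE flow is not guaranteed to be well-posed and one must appeal to the $L^\infty$-data versions of Moser's theorem (Rivière--Ye, McMullen) with a displacement bound, which you cite only loosely. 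The paper's elementary construction avoids both issues. Everything else in your argument — the bi-Lipschitz cell charts, the transfer of \cite{GTS15a} to bounded-density Lipschitz domains, the gluing along matching cell masses, and the triangle inequality for the composition — matches the paper.
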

The exact {dependency} of $A$ in \eqref{winfest} on the geometry of $\M$ is given in Lemma \ref{lemma:W8L8}. \red We remark that the scaling on $n$ on the right-hand side is optimal, even in the Euclidean case \cite{GTS15a}. \nc

With the estimates in Theorem \ref{thm:transport} at hand, Theorem \ref{thm:rate} follows from the more general Theorem \ref{thm:conveigenvalues} below (more precisely from its corollaries). Indeed, convergence rates for the spectrum of graph Laplacians can be written in terms of $h$ and $\veps$ as long as $0 < \veps \ll h \ll 1$.  Throughout this paper we assume that $h,\veps, \frac{\veps}{h}$ and $\lVert \vec{m}-\rho \rVert_\infty$ are sufficiently small. In particular we make the following assumptions.
\begin{assumption}
	\label{stdassumptions}
	Assume that 
	\[ h < \min \left\{1,  \frac{i_0}{10}, \frac{1}{\sqrt{mK}}, \frac{R}{\sqrt{27m}}\right \} \quad \te{and} \quad 
	{(m+5)} \veps < h, \]
	where $i_0$ is the injectivity radius of the manifold $\M$, $K$ is a global upper bound on the absolute value of sectional curvatures of $\M$, $m$ is the dimension of $\M$, and $R$ is the reach of $\M$ (seen as a submanifold embedded in $\R^d$).
\end{assumption}
\nc

%

\begin{theorem}
	\label{thm:conveigenvalues}
	For $k \in \N$ let $\lambda_k(\Gamma)$ be the $k$-th eigenvalue of the graph Laplacian $\Delta_\Gamma$ defined in  Section \ref{sec:DirichletForms} using the weights $\vec{m}$, and let $\lambda_k(\M)$ be the $k$-th eigenvalue of the Laplacian $\Delta$ defined in Section \ref{sec:DirichletForms} using the weight function $\rho$. Finally let $\veps$ be the $\infty$-OT distance between $\mu_n$ and $\mu$ and assume that $h>0$ satisfies Assumptions \ref{stdassumptions}. Then,
\begin{enumerate}
		\item(Upper bound)  If $\veps$ and $\lVert \vec{m}-\rho \rVert_\infty$  are such that
		\begin{equation}
		\sqrt{\lambda_k(\M)}\, \veps + \lVert \vec{m}- \rho \rVert_\infty   < c,
		\label{Prop1Condition}
		\end{equation}
		 for a positive constant $c$ that depends only on $m, \alpha, L_\rho, L_p$ and $ \eta$,
 		then, 
		\begin{equation} \label{evupb}
		 \frac{\lambda_k(\Gamma) -  \lambda_k(\M)}{\lambda_k(\M)}  \leq \tilde{C} \left( L_p h +  \frac{\veps}{h} +  \sqrt{\lambda_k (\M)} \veps + K h^2+ \frac{h^2}{R^2} + \lVert \vec{m} - \rho \rVert_\infty  \right) 
		\end{equation}
		where $\tilde{C}$ only depends on $m, \alpha, L_\rho, L_p$, and $\eta$.
		\item (Lower bound) If $h$ and $\lVert \vec{m}- \rho \rVert_\infty$ are  such that
		\begin{equation}
		\sqrt{\lambda_k(\M)}h + \lVert \vec{m}- \rho \rVert_\infty   < c,
		\label{Prop2Condition}
		\end{equation}
		for a positive constant $c$ that depends only on $m, \alpha, L_\rho, L_p$, and $ \eta$, then, 
		\begin{equation} \label{evlowb}
	\frac{\lambda_k(\Gamma) -  \lambda_k(\M)}{\lambda_k(\M)} \geq - \tilde{C}\left( L_p h    +\frac{\veps}{h} +  \sqrt{\lambda_k(\M)} h + K h^2 + \norm{\vec{m}- \rho}_\infty  \right)
		\end{equation}
	where $\tilde{C}$ only depends on $m, \alpha, L_\rho, L_p$, and $ \eta$.
	\end{enumerate}
\end{theorem}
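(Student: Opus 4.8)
### Proof plan for Theorem \ref{thm:conveigenvalues}

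The plan is to compare the Rayleigh quotients of $\Delta_\Gamma$ and $\Delta$ through the discretization operator $P$ and the interpolation operator $I$ introduced in \eqref{eqn:InterpolatingOp}, using the $\infty$-OT transport map $T$ between $\mu$ and $\mu_n$ as the bridge. The eigenvalue $\lambda_k$ is given by the min-max formula $\lambda_k = \min_{S}\max_{f \in S}\frac{D(f)}{\|f\|^2}$ over $k$-dimensional subspaces $S$, and symmetrically for $\lambda_k(\Gamma)$ with $b$ in place of $D$ and the discrete $L^2$ norm. So the whole argument reduces to two one-sided comparisons of the form ``for every function on one side, produce a function on the other side whose Rayleigh quotient is not much larger.''

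\textbf{Upper bound.} First I would take a $k$-dimensional subspace $S^* \subset V$ spanned by the first $k$ eigenfunctions of $\Delta$, realizing $\lambda_k(\M)$. For each $f \in S^*$ I would form $Pf : X \to \R$ (the discretization, essentially $f$ restricted and locally averaged via $T$). The two things to control are: (i) the discrete energy $b(Pf)$ is bounded by $(1 + \text{error}) D(f)$ — this is exactly the content of a discretization lemma (Lemma \ref{lem:discretization}), where the error is $O(L_p h + \veps/h + Kh^2 + h^2/R^2)$ coming from (a) replacing geodesic by Euclidean distance in the kernel (curvature/reach terms $Kh^2$, $h^2/R^2$), (b) the variation of the density $p$ over scale $h$ (the $L_p h$ term), and (c) the transport displacement of size $\veps$ which, after dividing by $h$ from the discrete differential, contributes $\veps/h$; and (ii) the discrete norm $\|Pf\|^2_{L^2(X,\vec m\mu_n)}$ is close to $\|f\|^2_{L^2(\M,\rho\mu)}$, with relative error controlled by $\|\vec m - \rho\|_\infty$ (the weight mismatch), $L_\rho, L_p$ and $\veps$; here the $\sqrt{\lambda_k(\M)}\,\veps$ term appears because the $\veps$-displacement of $f$ is measured against $\|\nabla f\|_{L^2} \lesssim \sqrt{\lambda_k(\M)}\|f\|_{L^2}$. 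Combining, each $Pf$ has Rayleigh quotient at most $\lambda_k(\M)(1 + \tilde C(\cdots))$; provided the smallness condition \eqref{Prop1Condition} guarantees $PS^*$ is still $k$-dimensional (injectivity of $P$ on $S^*$, which needs the norm distortion $<1$), the min-max principle for $\Delta_\Gamma$ yields \eqref{evupb}.

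\textbf{Lower bound.} Symmetrically, I would start from a $k$-dimensional subspace of discrete functions realizing $\lambda_k(\Gamma)$, apply the interpolation operator $I$ to push each $u : X \to \R$ to a function $Iu \in V$, and invoke the interpolation lemma (Lemma \ref{lem:interpolation}): $D(Iu) \le (1+\text{error})\,b(u)$ and $\|Iu\|^2_{L^2(\M,\rho\mu)}$ close to $\|u\|^2_{L^2(X,\vec m\mu_n)}$. The error terms are of the same nature, except that here the ``smoothing scale'' of $I$ is $h$ itself (one cannot interpolate below the graph connectivity scale), so the displacement term is $\sqrt{\lambda_k(\M)}\,h$ rather than $\sqrt{\lambda_k(\M)}\,\veps$ — this is exactly why \eqref{evlowb} and \eqref{Prop2Condition} have $h$ where \eqref{evupb} and \eqref{Prop1Condition} have $\veps$. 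Under \eqref{Prop2Condition}, $I$ is injective on the chosen discrete subspace, and the min-max principle for $\Delta$ gives \eqref{evlowb}.

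\textbf{Main obstacle.} The routine part is the min-max bookkeeping; the hard part is establishing the two quantitative estimates (i) and (ii) on energies and norms under $P$ and $I$ — i.e.\ Lemmas \ref{lem:discretization} and \ref{lem:interpolation} — with the precise error terms claimed. The delicate points there are: controlling the passage between the Euclidean kernel $\eta(|x_i-x_j|/h)$ and the intrinsic geodesic geometry (this is where $K$, $R$, and the normalization constant $\sigma_\eta$ enter, and one needs the manifold geometry estimates that Assumption \ref{stdassumptions} is designed to make usable); and handling the low-regularity of the transport map $T$ — since $T$ is only an $L^\infty$-optimal map, not smooth, one must argue via averaging over balls of radius $\gtrsim \veps$ (resp.\ $h$) rather than pointwise, and verify that these averages do not destroy the energy bound (a discrete Poincaré/Jensen-type argument). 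Once those lemmas are in hand, the theorem is a two-line application of the variational characterization of eigenvalues plus the injectivity check coming from the smallness hypotheses \eqref{Prop1Condition} and \eqref{Prop2Condition}.
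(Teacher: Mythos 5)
Your plan matches the paper's proof exactly: both bounds come from the min-max principle, pushing the span of the first $k$ eigenfunctions of $\Delta$ through $P$ for the upper bound and the span of the first $k$ eigenvectors of $\Delta_\Gamma$ through $I$ for the lower bound, and then invoking Lemmas~\ref{lem:discretization} and~\ref{lem:interpolation} for the energy and norm comparisons, with the smallness hypotheses ensuring injectivity of $P$ resp.\ $I$ so the image subspaces retain dimension $k$; your identification of the $\veps$-vs-$h$ asymmetry as coming from the smoothing scale of $I$ is also the correct explanation. One detail you elide: in the lower-bound injectivity check, Lemma~\ref{lem:interpolation}(i) a priori gives a deficit of size $\tilde C'' \sqrt{\lambda_k(\Gamma)}\,h$ (not $\sqrt{\lambda_k(\M)}\,h$), so one must first invoke the already-proved upper bound to replace $\lambda_k(\Gamma)$ by a constant multiple of $\lambda_k(\M)$ before hypothesis~\eqref{Prop2Condition} can be applied — the paper handles this explicitly, and your plan should too.
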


\begin{remark}
Note that the lower bound does not depend on the reach $R$. This is due to the one sided-inequality 
\[ |x-y| \leq d(x,y) , \quad \forall x,y \in \M .\]
In contrast, for the upper bound one must use a reverse inequality with an additional higher order correction term that depends on $R$. See Proposition \ref{prop:metricestimates}. 

It is also worth pointing out that the presence of the term $\sqrt{\lambda_k(\M)} \veps $ in the upper bound ultimately comes from the estimate on how far is the map $P$ in \eqref{def:P} from being an isometry when restricted to the first $k$ eigenspaces of $\Delta$; the relevant length-scale for this estimate is the size of transport cells, i.e., $\veps$. On the other hand, the term $\sqrt{\lambda_k(\M)} h $ in the lower bound comes from the estimate on how far is the map $I$  in \eqref{eqn:InterpolatingOp} from being an isometry when restricted to the first $k$ eigenspaces of $\Delta_\Gamma$; the relevant length-scale for this estimate is $h$, which is of the same order as the bandwidth for the kernel used to define the map $I$. This can be seen from Lemmas \ref{lem:discretization} and \ref{lem:interpolation} respectively.
\end{remark}

\begin{remark}
	From the estimates \eqref{evupb} and \eqref{evlowb} we see that  curvature of $\M$ only introduces a second order correction to the rate of convergence of $\lambda_k(\Gamma)$ towards $\lambda_k(\M)$.
\end{remark}

The estimates on $\veps$ from Theorem \ref{thm:transport} combined with Theorem \ref{thm:conveigenvalues} imply that $\lambda_k(\Gamma) $ converges towards $\lambda_k(\M)$ 
with probability one whenever $\lVert \vec{m} - \rho \rVert_\infty \rightarrow 0$, $h \rightarrow 0$ ,  $\frac{\veps}{h} \rightarrow 0$.  We can specialize Theorem \ref{thm:conveigenvalues} to the examples from Section \ref{sec:DirichletForms}, where in particular we provide estimates on  $\lVert \vec{m} - \rho\rVert_\infty$ in terms of $n$.

\nc

\begin{corollary}[Convergence of eigenvalues unnormalized graph Laplacian]
\label{cor:cor1}
In the context of Theorem \ref{thm:conveigenvalues} suppose that the weights are taken to be $\vec{m} \equiv 1$ and $\rho \equiv 1$.   If $h$ is small enough for
\[ ( \sqrt{\lambda_k(\M)}+1 )h \leq c , \]
to hold for a positive constant $c$ that depends only on $m, \alpha,  L_p$, and $ \eta$,
then
\begin{equation}
\label{RelativeError1}
 \frac{|\lambda_k(\Gamma) - \lambda_k(\M)|}{\lambda_k(\M)} \leq \tilde C \left(  \frac{\veps}{h} +  (1+\sqrt{\lambda_k (\M)}) h + \left( K+ \frac{1}{R^2}\right) h^2 \right)  , 
 \end{equation}
 where $\tilde{C}$ only depends on $m, \alpha, L_p$, and $\eta$.
\end{corollary}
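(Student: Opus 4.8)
The plan is to read Corollary~\ref{cor:cor1} directly off Theorem~\ref{thm:conveigenvalues} by specializing the weights, so that the argument reduces to a substitution together with some bookkeeping. First I would record the effect of taking $\vec m \equiv 1$ and $\rho \equiv 1$: the quantity $\lVert \vec m - \rho\rVert_\infty$ from \eqref{eq:mrho} is identically zero, and the Lipschitz constant $L_\rho$ of the constant density $\rho$ is zero. Consequently the constants $c$ and $\tilde C$ of Theorem~\ref{thm:conveigenvalues}, which in general depend on $m,\alpha,L_\rho,L_p,\eta$, here depend only on $m,\alpha,L_p,\eta$, exactly as claimed in the corollary.

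Second, I would verify that the single hypothesis $(\sqrt{\lambda_k(\M)}+1)h \le c$ of the corollary implies both one-sided conditions \eqref{Prop1Condition} and \eqref{Prop2Condition} of Theorem~\ref{thm:conveigenvalues}. Since we are in the context of that theorem, Assumption~\ref{stdassumptions} is in force, which gives $(m+5)\veps < h$ and hence $\veps < h$. Therefore, using $\lVert \vec m - \rho\rVert_\infty = 0$, both $\sqrt{\lambda_k(\M)}\,\veps \le \sqrt{\lambda_k(\M)}\,h \le (\sqrt{\lambda_k(\M)}+1)h$ and $\sqrt{\lambda_k(\M)}\,h \le (\sqrt{\lambda_k(\M)}+1)h$ are at most $c$, provided $c$ is taken no larger than the two constants appearing in \eqref{Prop1Condition} and \eqref{Prop2Condition}. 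Thus both the upper bound \eqref{evupb} and the lower bound \eqref{evlowb} apply.

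Finally I would combine the two one-sided bounds. With $\lVert \vec m - \rho\rVert_\infty = 0$, \eqref{evlowb} bounds $-(\lambda_k(\Gamma)-\lambda_k(\M))/\lambda_k(\M)$ by $\tilde C\big(L_p h + \tfrac{\veps}{h} + \sqrt{\lambda_k(\M)}\,h + Kh^2\big)$, while \eqref{evupb} bounds $(\lambda_k(\Gamma)-\lambda_k(\M))/\lambda_k(\M)$ by $\tilde C\big(L_p h + \tfrac{\veps}{h} + \sqrt{\lambda_k(\M)}\,\veps + Kh^2 + \tfrac{h^2}{R^2}\big)$. Using $\veps < h$ once more to replace $\sqrt{\lambda_k(\M)}\,\veps$ by $\sqrt{\lambda_k(\M)}\,h$ in the upper bound, absorbing the term $L_p h$ into $\tilde C(1+\sqrt{\lambda_k(\M)})h$ (legitimate since $\tilde C$ may depend on $L_p$), and taking the maximum of the two $h^2$-terms, the resulting two-sided estimate is precisely \eqref{RelativeError1}. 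I do not anticipate any genuine obstacle here; the one step deserving a moment's care is confirming that the corollary's single hypothesis subsumes both \eqref{Prop1Condition} and \eqref{Prop2Condition}, and for that the inequality $\veps < h$ furnished by Assumption~\ref{stdassumptions} is exactly what makes the reduction go through.
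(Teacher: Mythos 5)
Your proposal is correct and follows exactly the paper's approach: specialize Theorem~\ref{thm:conveigenvalues} to the case $\vec m \equiv 1$, $\rho \equiv 1$, observe that $\lVert \vec m - \rho\rVert_\infty = 0$ and $L_\rho = 0$, and combine the two one-sided bounds using $\veps < h$. You simply spell out the bookkeeping that the paper leaves implicit.
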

\begin{proof}
The result follows directly from Theorem \ref{thm:conveigenvalues} after noticing that in this case $\lVert \vec{m} - \rho \rVert_\infty =0$ and $L_\rho=0$.
\end{proof}
\begin{corollary}[Convergence of eigenvalues random walk graph Laplacian]
\label{cor:cor2}
In the context of Theorem \ref{thm:conveigenvalues} suppose that the weights $\vec{m}$ are as in \eqref{eqn:minormlaplace} and $\rho \equiv p$.  If $h$ and $\veps/h$ are such that
\[ ( \sqrt{\lambda_k(\M)}+1 )h + \frac{\veps}{h} \leq c , \]
for a positive constant $c$ that depends only on $m, \alpha,  L_p$, and $\eta$,
then,
\begin{equation}
\label{RelativeError2}
\frac{|\lambda_k(\Gamma) - \lambda_k(\M)|}{\lambda_k(\M)} \leq \tilde C \left(  \frac{\veps}{h} +  (1+\sqrt{\lambda_k (\M)}) h + \left( K+ \frac{1}{R^2} \right) h^2 \right) ,  
\end{equation}
where $\tilde{C}$ only depends on $m, \alpha,  L_p$, and $\eta$.
\end{corollary}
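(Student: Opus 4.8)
The plan is to derive this corollary directly from Theorem \ref{thm:conveigenvalues}, exactly in the way Corollary \ref{cor:cor1} was obtained, the only new ingredient being control of the quantity $\lVert \vec{m}-\rho\rVert_\infty$. For the random walk Laplacian one has $\rho\equiv p$, hence $L_\rho=L_p$, and $\lVert\vec{m}-\rho\rVert_\infty=\max_i|m_i-p(x_i)|$ is precisely the kernel density estimation error bounded in \eqref{LinfWeights}. Since the context of Theorem \ref{thm:conveigenvalues} already imposes Assumption \ref{stdassumptions}, that estimate applies and gives
\[ \lVert\vec{m}-\rho\rVert_\infty \;\le\; C\Big( L_p h + \alpha\,\eta(0)\,m\,\omega_m\,\tfrac{\veps}{h} + \alpha\, m\,\big(K+\tfrac1{R^2}\big)h^2\Big), \]
so that $\lVert\vec{m}-\rho\rVert_\infty$ is bounded by a constant depending only on $m,\alpha,L_p,\eta$ times $h+\tfrac{\veps}{h}+\big(K+\tfrac1{R^2}\big)h^2$, and in particular can be made arbitrarily small by taking the constant $c$ in the hypothesis $(\sqrt{\lambda_k(\M)}+1)h+\tfrac{\veps}{h}\le c$ small enough.

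Next I would verify the smallness conditions \eqref{Prop1Condition} and \eqref{Prop2Condition} required to invoke parts (1) and (2) of Theorem \ref{thm:conveigenvalues}. Assumption \ref{stdassumptions} gives $\veps<h$, so $\sqrt{\lambda_k(\M)}\,\veps\le\sqrt{\lambda_k(\M)}\,h\le c$ and $\sqrt{\lambda_k(\M)}\,h\le c$ follow immediately from the hypothesis; together with the bound on $\lVert\vec{m}-\rho\rVert_\infty$ from the first paragraph, and after shrinking $c$ if necessary, both \eqref{Prop1Condition} and \eqref{Prop2Condition} hold. Applying \eqref{evupb} then yields the upper bound and \eqref{evlowb} the lower bound on $\tfrac{\lambda_k(\Gamma)-\lambda_k(\M)}{\lambda_k(\M)}$, with constants depending only on $m,\alpha,L_\rho,L_p,\eta$, which here means only on $m,\alpha,L_p,\eta$. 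Substituting the bound for $\lVert\vec{m}-\rho\rVert_\infty$ into these two inequalities and using $\veps<h$ to absorb $\sqrt{\lambda_k(\M)}\,\veps$ into $\sqrt{\lambda_k(\M)}\,h$, every resulting term is of one of the forms $\tfrac{\veps}{h}$, $h$, $\sqrt{\lambda_k(\M)}\,h$, or $\big(K+\tfrac1{R^2}\big)h^2$ (the $L_p h$ terms are absorbed into the constant, and the $h^2/R^2$ term in the lower bound now enters through the weights error rather than through \eqref{evlowb} itself). Collecting terms and renaming the constant produces \eqref{RelativeError2}; taking the larger of the two constants gives the two-sided inequality.

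There is no deep difficulty here: the entire content sits in Theorem \ref{thm:conveigenvalues} and the density estimate \eqref{LinfWeights}. The one point demanding care is the bookkeeping of the constant $c$: because $\lVert\vec{m}-\rho\rVert_\infty$ itself depends on $\veps/h$, $h$ and $h^2$, one must fix $c$ small relative to the universal constant $C$ and the quantities $\alpha,\eta(0),m,\omega_m,L_p$ of \eqref{LinfWeights} \emph{before} the hypotheses \eqref{Prop1Condition}--\eqref{Prop2Condition} of Theorem \ref{thm:conveigenvalues} are guaranteed. This ordering of the choices of constants is the only place where attention is needed.
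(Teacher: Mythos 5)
Your proposal is correct and follows essentially the same route as the paper's (very terse) proof: apply Theorem~\ref{thm:conveigenvalues} with $\rho \equiv p$ and use the kernel density estimate \eqref{LinfWeights} to bound $\lVert \vec{m}-\rho\rVert_\infty$, absorbing that bound into the $h$, $\veps/h$ and $h^2$ terms by enlarging constants. Your added bookkeeping of the order in which the constants are chosen, and the observation that the $h^2/R^2$ contribution to the lower bound enters via \eqref{LinfWeights} rather than via \eqref{evlowb} itself, are accurate refinements of what the paper leaves implicit.
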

\begin{proof}
The result follows directly from Theorem \ref{thm:conveigenvalues} after using \eqref{LinfWeights}. Indeed, notice that the term $\lVert \vec{m} - \rho \rVert_\infty$ can be absorbed in the $h$, $\frac{\veps}{h}$ and $h^2$ terms by enlarging constants if necessary.
\end{proof}

\begin{remark}
Notice that the estimates in the previous results provide a lower bound on the mode at which the spectrum of the graph Laplacian stops being informative about the spectrum of the Laplace-Beltrami operator. Namely, notice that the right hand sides of \eqref{RelativeError1} and \eqref{RelativeError2} are small when $h \sqrt{\lambda_k(\M)}$ is small. Using Weyl's law for the growth of eigenvalues of the Laplace-Beltrami operator
	 we know that
	\[ \sqrt{\lambda_k(\M)} \sim k^{1/m}, \]
 and thus, the relative error of approximating $\lambda_k(\M)$ with $\lambda_k(\Gamma)$ is small when $k \lesssim \frac{1}{h^{m}}$ and $\varepsilon \ll h$. In particular, if $h$ is taken to scale like $h=\sqrt{\veps}$
 (as is the case in Theorem \ref{thm:rate})
 then $\lambda_k(\M)$ is approximated by $\lambda_k(\Gamma)$ if $k \lesssim \sqrt{\frac{n}{\log(n)}}$ for $m \geq 3$ and $k \lesssim \sqrt{\frac{n}{\log(n)^{3/2}}}$ for $m=2$.
\end{remark}

\begin{remark}
	\label{rem:probestimates}
 We would like to remark that one of the main advantages of writing all our estimates in Theorem \ref{thm:conveigenvalues} in terms of the quantity $\veps$ (which is the only one where randomness is involved) is that we can transfer probabilistic estimates for $\veps$ into probabilistic estimates for the error of approximation of $\lambda_k(\Gamma)$. In particular, when combined with Theorem \ref{thm:transport}, Corollary 1 and Corollary 2 can be read as follows: Suppose that $\frac{\log(n) ^{p_m}}{n^{1/m}} \ll h \ll 1$.  Let $k:= k_n$ be such that $k_n \ll \frac{1}{h^m}$. Let $\beta>1$. Then, with probability at least $1-C_{K,\vol(\M),m,i_0} n^{-\beta}$, 
\[ \frac{|\lambda_j(\Gamma) - \lambda_j(\M)|}{\lambda_j(\M)} \leq \tilde C_\beta \left(  \frac{\log(n)^{p_m}}{hn^{1/m}} +  (1+\sqrt{\lambda_j (\M)}) h + \left( K+ \frac{1}{R^2} \right) h^2 \right) \]
for all $j=1, \dots, k_n$. 
\end{remark}

\begin{remark}
Moreover,  writing all our estimates in Theorem \ref{thm:conveigenvalues} in terms of the quantity $\veps$ is also convenient because the conclusion of the theorem holds even when the points $x_1, \dots, x_n$ are not i.i.d. samples from the measure $\mu$. That is one only needs to ensure that the assumption \eqref{Prop1Condition} is satisfied. In other words whenever one has an
 an estimate on the $\infty$-OT distance between the point cloud and the measure $\mu$ and a kernel density estimate to ensure that \eqref{Prop1Condition} holds, the theorem provides an error estimate on the eigenvalues. We note that the kernel density estimate in terms of the $\infty$-OT distance we provide in Lemma \ref{lem:kde} implies that one in fact only needs  an estimate on the $\infty$-OT distance between the point cloud and the measure $\mu$. 
\end{remark}



\subsubsection{Convergence of eigenfunctions}

We prove that eigenvectors of $\Delta_\Gamma$ converge towards eigenfunctions of $\Delta$ and provide quantitative error estimates. To make the statements precise, we need to make sense of how to compare functions defined on the {graph/sample} $X$ with functions 
defined on the manifold $\M$. 
In this paper we consider two different ways of doing this.

The first approach involves an interpolation step by composing with the 
optimal transportation map $T : \M \to X$ from \eqref{epsilon} followed by a smoothening step.
Both of these steps require the knowledge of $\M$.
 The map $T$ induces a partition $U_1, \dots, U_n$ of $\M$ where 
\begin{equation} \label{def:Ui}
U_i := T^{-1}(\{ x_i \}).
\end{equation}
\red We note that $\mu(U_i)=\frac{1}{n}$ for all $i=1, \dots,n$. \nc
We define the \emph{contractive discretization} map $P\colon L^2(\M, \rho \mu) \to L^2(X, \vec{m}\mu_n)$ by 
\begin{equation}
(Pf)(x_i) \coloneqq n \cdot \int_{U_i} f(x) d\mu(x), \quad f \in  L^2(\M, \rho \mu),
\label{def:P}
\end{equation}
and the \emph{extension} map $P^*\colon L^2(X, \vec{m}\mu_n) \to L^2(\M, \rho \mu)$ by
\begin{equation}
(P^*u)(x) \coloneqq \sum_{i=1}^n u(x_i) \mathds{1}_{U_i} (x), \quad u \in L^2(X, \vec{m}\mu_n).
\label{def:P*}
\end{equation}
We note that $P^*u$ can be written as $P^*u = u \circ T$. We then consider the interpolation operator $I\colon L^2(X, \vec{m}\mu_n) \to \Lip(\M)$
\begin{equation}
Iu \coloneqq \Lambda_{h - 2\veps} P^*u
\label{eqn:InterpolatingOp}
\end{equation}
where $\Lambda_{h - 2\veps}$ is defined in \eqref{smoothingLam} and is simply a convolution operator using some particularly chosen kernel; see Section \ref{sec:discuss} for a discussion on why we need to consider a specific kernel.

%
%

\begin{theorem}
\label{thm:Iuapproxf}
Let $\Delta_\Gamma$ be the graph Laplacian defined in Section \ref{sec:DirichletForms} using the weights $\vec{m}$, and let  $\Delta$ be the Laplacian defined in Section \ref{sec:DirichletForms} using the weight function $\rho$. Let $\veps$ be the $\infty$-OT distance between $\mu_n$ and $\mu$ and assume that $h>0$ satisfies Assumptions \ref{stdassumptions}.  Finally, assume that $h$ and $\lVert \vec{m}-\rho \rVert_\infty$ are small enough so that 
\[ (1+ \sqrt{\lambda_k(\M)} )h + \lVert  \vec{m}-\rho\rVert_\infty \leq c,  \]
for a constant $c$ that depends only on $m, \alpha, L_p , L_\rho, \eta$. 

Then, for every $u\in  L^2(X, \vec{m} \mu_n)$ normalized eigenfunction of $\Delta_\Gamma$ corresponding to the eigenvalue $\lambda_k(\Gamma)$, there exists a normalized eigenfunction $f\in L^2(\M,\rho\mu)$ of $\Delta$ corresponding to the $k$-th eigenvalue $\lambda_k(\M)$
such that
\[ \norm{I u - f}_{L^2(\M,\rho\mu)} \leq \frac{\tilde C}{ \red \gamma_{k,\rho\mu} \nc}\left(\frac{\veps}{h} + (1+\sqrt{\lambda_k(\M)}) h  \red + Kh^2 + \frac{h^2}{R^2}  + \lVert \vec{m} - \rho \rVert_\infty  \right),\]
 where $\tilde C$ is a constant that only depends on $m,\eta, \alpha, L_\rho, L_p$ and where $\gamma_{k, \rho\mu}$ is the difference between the smallest eigenvalue of $\Delta$ that is strictly larger than $\lambda_k(\M)$ and the largest eigenvalue of $\Delta$ that is strictly smaller than $\lambda_k(\M)$ (i.e a spectral gap). 
 
 In particular, if we take
 \[ h := \sqrt{\frac{\log(n)^{p_m}}{n^{1/m}}}, \]
 where $p_m =3/4$ for $m=2$ and $p_m=1/m$ for $m\geq 3$, then, 
 \[ \norm{I u - f}_{L^2(\M,\rho\mu)} = O\Bigg(  \sqrt{\frac{\log(n)^{p_m}}{n^{1/m}}}\Bigg), \quad \textrm{ almost surely}.\]
 
\end{theorem}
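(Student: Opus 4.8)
The plan is to establish convergence of the eigenfunctions by running a variational argument that parallels, but is in a sense dual to, the eigenvalue estimate of Theorem~\ref{thm:conveigenvalues}. The key structural players are the contractive discretization map $P$ from \eqref{def:P}, its approximate adjoint $P^*$ from \eqref{def:P*}, and the smoothed interpolation operator $I = \Lambda_{h-2\veps} P^*$ from \eqref{eqn:InterpolatingOp}. First I would record the two "almost isometry / almost intertwining" packages that must be available from the earlier lemmas: (i) a discretization estimate (Lemma~\ref{lem:discretization}) saying that for $f$ in the span of the first $k$ eigenfunctions of $\Delta$ one has $b(Pf) \leq (1 + \te{err})\,D(f)$ and $\norm{Pf}_{L^2(X,\vec m\mu_n)}^2 \geq (1-\te{err})\norm{f}_{L^2(\M,\rho\mu)}^2$, with $\te{err}$ controlled by $\veps/h$, $L_p h$, $\sqrt{\lambda_k(\M)}\,\veps$, curvature terms and $\norm{\vec m - \rho}_\infty$; and (ii) an interpolation estimate (Lemma~\ref{lem:interpolation}) saying the same for $I$ acting on eigenvectors of $\Delta_\Gamma$, namely $D(Iu)\leq (1+\te{err})\,b(u)$ and $\norm{Iu}_{L^2(\M,\rho\mu)}^2 \geq (1-\te{err})\norm{u}_{L^2(X,\vec m\mu_n)}^2$, with $\te{err}$ now scaled by $h$ rather than $\veps$. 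These are exactly the ingredients flagged in the remark after Theorem~\ref{thm:conveigenvalues}.

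The core of the eigenfunction argument is then a standard spectral-perturbation / Davis--Kahan-type step. Fix the eigenvalue $\lambda_k(\M)$ and let $\lambda_k(\Gamma)$ be the corresponding graph eigenvalue, which by Theorem~\ref{thm:conveigenvalues} lies within a relative error of order $\te{err}$ of $\lambda_k(\M)$. Given a normalized eigenfunction $u$ of $\Delta_\Gamma$ at $\lambda_k(\Gamma)$, form $Iu \in \Lip(\M) \subset V$. I would show that $Iu$ is an approximate eigenfunction of $\Delta$ in the weak/quadratic-form sense: using (ii), $D(Iu) \leq (1+\te{err})\,b(u) = (1+\te{err})\lambda_k(\Gamma)\norm{u}^2 \leq (\lambda_k(\M) + \te{err}')\norm{Iu}^2$ after renormalizing via the lower bound on $\norm{Iu}$. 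Decompose $Iu = \sum_j c_j f_j$ in an orthonormal eigenbasis $\{f_j\}$ of $\Delta$ in $L^2(\M,\rho\mu)$. The Rayleigh-quotient bound $D(Iu)/\norm{Iu}^2 \leq \lambda_k(\M) + \te{err}'$ combined with the fact that $D(Iu)/\norm{Iu}^2 = \sum_j \lambda_j(\M) c_j^2 / \sum_j c_j^2$ forces the mass of $Iu$ outside the eigenspaces with $\lambda_j(\M) > \lambda_k(\M)$ to be small; to also exclude the eigenspaces with $\lambda_j(\M) < \lambda_k(\M)$ one uses that $u$ is orthogonal to the lower eigenvectors of $\Delta_\Gamma$ together with the fact that $P$ maps the lower eigenspaces of $\Delta$ nearly onto the lower eigenspaces of $\Delta_\Gamma$ (again from Lemma~\ref{lem:discretization} and the matching of the first $k$ eigenvalues). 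Quantitatively, the component of $Iu$ lying in $\Span\{f_j : \lambda_j(\M) \neq \lambda_k(\M)\}$ has squared norm $O(\te{err}/\gamma_{k,\rho\mu})$, where the spectral gap $\gamma_{k,\rho\mu}$ appears exactly because separating $\lambda_k(\M)$ from its neighbours costs a factor $1/\gamma_{k,\rho\mu}$. Projecting $Iu$ onto the $\lambda_k(\M)$-eigenspace and normalizing yields the claimed $f$, with $\norm{Iu - f}_{L^2(\M,\rho\mu)} \leq \frac{\tilde C}{\gamma_{k,\rho\mu}}\,\te{err}$, and substituting the explicit $\te{err} = O\big(\veps/h + (1+\sqrt{\lambda_k(\M)})h + Kh^2 + h^2/R^2 + \norm{\vec m - \rho}_\infty\big)$ gives the stated bound.

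For the final "in particular" sentence I would simply specialize: with $h = \sqrt{\log(n)^{p_m}/n^{1/m}}$ one has $h^2 = \log(n)^{p_m}/n^{1/m}$, and by Theorem~\ref{thm:transport} the $\infty$-OT distance satisfies $\veps = d_\infty(\mu,\mu_n) \leq A\,\log(n)^{3/4}/n^{1/2}$ for $m=2$ and $\veps \leq A\,(\log n)^{1/m}/n^{1/m}$ for $m\geq 3$, with probability at least $1 - C\,n^{-\beta}$; since $\sum_n C n^{-\beta} < \infty$ for $\beta > 1$, Borel--Cantelli gives this bound almost surely for all large $n$. One checks that with this choice $\veps/h = O(\veps/\sqrt{\veps}) = O(\sqrt{\veps}) = O(h)$ — indeed $h$ was chosen precisely so that $\veps \lesssim h^2$, hence $\veps/h \lesssim h$ — and, for the random-walk normalization, $\norm{\vec m - \rho}_\infty = O(L_p h + \veps/h + h^2) = O(h)$ by \eqref{LinfWeights}. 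Therefore every term in the bracket is $O(h) = O\big(\sqrt{\log(n)^{p_m}/n^{1/m}}\big)$ (the $h^2$ and $\sqrt{\lambda_k(\M)}h$ terms are of the same or lower order for fixed $k$), and since $\gamma_{k,\rho\mu}$ is a fixed positive constant for fixed $k$, $\norm{Iu - f}_{L^2(\M,\rho\mu)} = O\big(\sqrt{\log(n)^{p_m}/n^{1/m}}\big)$ almost surely.

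I expect the main obstacle to be the careful bookkeeping in the spectral-separation step: one must simultaneously control the failure of $I$ to be an isometry, the failure of $P$ to match eigenspaces, and the gap between $\lambda_k(\Gamma)$ and $\lambda_k(\M)$, and then confirm that these combine to give a clean $1/\gamma_{k,\rho\mu}$ dependence without hidden dependence of the "$\te{err}$" on $k$ beyond the explicit $\sqrt{\lambda_k(\M)}$ factors — in particular verifying that the lower-eigenspace exclusion really does follow from orthogonality of $u$ to lower graph eigenvectors, which in turn rests on the near-commutation of $P$ with the two Laplacians on the relevant finite-dimensional subspaces. The probabilistic and scaling parts are, by contrast, essentially routine once Theorems~\ref{thm:transport} and \ref{thm:conveigenvalues} and the estimate \eqref{LinfWeights} are in hand.
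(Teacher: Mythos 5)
Your high-level strategy is broadly aligned with what the paper does (the paper simply cites \cite[Theorem~4]{BIK} and the surrounding lemmas, so your attempt to sketch the internals is natural), and the final ``in particular'' step --- specializing $h$, invoking Theorem~\ref{thm:transport}, Borel--Cantelli, and the estimate \eqref{LinfWeights} --- is carried out correctly. However, the core quantitative step contains a real gap.

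You derive, via a Rayleigh-quotient argument, that the component of $Iu$ orthogonal to the $\lambda_k(\M)$-eigenspace has \emph{squared} norm $O(\mathrm{err}/\gamma_{k,\rho\mu})$, and then conclude $\norm{Iu - f}_{L^2(\M,\rho\mu)} \leq \tfrac{\tilde C}{\gamma_{k,\rho\mu}}\,\mathrm{err}$. These two statements are inconsistent: a squared norm of order $\mathrm{err}/\gamma_{k,\rho\mu}$ gives only $\norm{Iu - f} = O\big(\sqrt{\mathrm{err}/\gamma_{k,\rho\mu}}\big)$, which is a genuinely weaker (square-root) rate. In fact the Rayleigh-quotient inequality $D(Iu) \leq (\lambda_k(\M)+\delta)\norm{Iu}^2$ can only ever produce a square-root estimate: writing $Iu = \sum_j c_j f_j$, it bounds $\sum_{\lambda_j > \lambda_k} c_j^2$ by $O(\delta/\gamma^+)$ but does not bound $\sum_{\lambda_j>\lambda_k}c_j^2$ by $O((\delta/\gamma^+)^2)$, so one cannot beat $\norm{g_\perp} = O(\sqrt{\delta/\gamma})$. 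With the choice $h = \sqrt{\log(n)^{p_m}/n^{1/m}}$, the square-root version would give $O\big( (\log(n)^{p_m}/n^{1/m})^{1/4}\big)$, not the claimed $O\big((\log(n)^{p_m}/n^{1/m})^{1/2}\big)$.

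To reach the linear-in-error rate one must go beyond one-sided Rayleigh-quotient bounds, which is precisely the role of the ingredients you did not invoke: the two-sided form comparison of Lemma~\ref{lem:formcomparison} (giving both $b(Pf)^{1/2}\geq(1-\mathrm{err})D(f)^{1/2}$ and $D(Iu)^{1/2}\geq(1-\mathrm{err})b(u)^{1/2}$ on low eigenspaces) together with the almost-inverse estimates of Lemma~\ref{lem:almostinverse} ($\norm{IPf-f}\lesssim hD(f)^{1/2}$, $\norm{PIu-u}\lesssim h\,b(u)^{1/2}$). These upgrade the quadratic-form closeness to a near-intertwining of the spectral projections (this is exactly what the required \cite[Lemmas~7.2--7.4]{BIK} encode), from which a Davis--Kahan-type bound $\norm{\P_\lambda Iu - I\P_\lambda u} \lesssim \mathrm{err}\cdot\norm{u}$ follows; sandwiching $\lambda_k(\M)$ between two cut levels and using that $u$ is exactly in a single graph eigenspace then gives $\norm{Iu - f}$ linear in $\mathrm{err}/\gamma_{k,\rho\mu}$. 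You flag this as ``the main obstacle'' but the argument as written never closes it; as stated it would at best yield the square-root rate.
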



\begin{remark}
As in Remark \ref{rem:probestimates}, we would like to emphasize that the probabilistic estimates for $\veps$ translate directly into probabilistic estimates for the convergence of eigenfunctions in Theorem \ref{thm:Iuapproxf}. Likewise, we would like to point out that Theorem \ref{thm:Iuapproxf} can be made concrete in the context of Sections \ref{sec:unGL} and \ref{sec:rwGL} using the corresponding estimates for $\lVert \vec{m}- \rho\rVert_\infty$ in terms of $\veps$ and $h$.
\end{remark}

The second approach to compare eigenvectors of $\Delta_\Gamma$ with eigenfunctions of $\Delta$ is to extrapolate the values of discrete eigenvectors to the Euclidean Voronoi cells induced by the points $\{x_1, \dots, x_n \}$. That is, for an arbitrary function $u: X \rightarrow \R$ we assign to each point $x\in \M$ the value $u(x_i)$ where $x_i$ is the nearest neighbor of $x$ in $X$ with respect to the Euclidean
distance. More formally, for $i\in \{1,\dots,n\}$ we consider the Voronoi cells
\begin{align}
\label{eqn:voronoidef}
 V_i \coloneqq \{ x\in \M : \abs{x-x_i}  = \min_{j=1,\dots,n} \abs{x - x_j} \},
\end{align}
and define the function $\bar u \in L^2(\M, \rho \mu)$ by
\begin{align}
\label{eqn:barudef}
\bar u(x)\coloneqq \sum_{i=1}^{n}u(x_i) \1_{V_i}(x) \quad \text{for } x \in \M.
\end{align}
We notice that the Voronoi cells $V_1, \dots, V_n$ form a partition of $\M$, up to a set of ambiguity of $\mu$-measure zero. Besides being a computationally simple interpolation, the Voronoi extension can be constructed exclusively from the data and no information on $\M$ is needed. We show that the interpolation $\overline{u}$ of a discrete eigenvector $u$ approximates the corresponding eigenfunction $f$ on $\M$ with
almost the same rate as in Theorem \ref{thm:Iuapproxf}.  In order to obtain convergence of the Voronoi extensions $\bar u$, we require $h=h_n$ to satisfy 
\begin{align}
\label{eqn:hncondition}
 \lim_{n\to\infty} \log^{mp_m}(n) \cdot \left(h+ \frac{\veps}{h}\right) = 0
 \end{align}
This condition holds, for instance, when $h$ is chosen as $\sqrt{\veps}$, which minimizes the error
in the following result.

\begin{theorem}
\label{thm:voronoiapprox}

Fix $\beta>1$. Let $\Delta_\Gamma$ be the graph Laplacian defined in Section \ref{sec:DirichletForms} using the weights $\vec{m}$, and let  $\Delta$ be the Laplacian defined in Section \ref{sec:DirichletForms} using the weight function $\rho$. Let $\veps$ be the $\infty$-OT distance between $\mu_n$ and $\mu$ and assume that $h>0$ satisfies Assumptions \ref{stdassumptions}.  Finally, assume that $h$ and $\lVert \vec{m}-\rho \rVert_\infty$ are small enough so that in particular
\[ (1+ \sqrt{\lambda_k(\M)} )h + \lVert  \vec{m}-\rho\rVert_\infty \leq c,  \]
for a constant $c$ that depends only on $m, \alpha, L_p , L_\rho, \eta$. 

Then, with probability at least $1-C_{m,K,\vol(\M),i_0}\cdot n^{-\beta}$, for every $u\in  L^2(X, \vec{m} \mu_n)$ normalized eigenfunction of $\Delta_\Gamma$ corresponding to the eigenvalue $\lambda_k(\M)$,  it holds
\begin{align}
\label{est:THM6}
\begin{split}
 \norm{\bar{u} - f}_{L^2(\M,\rho\mu)}  \leq & \frac{\tilde C_\beta \sqrt{\log(n)^{m p_m}}}{ \red \gamma_{k,\rho\mu} \nc}\Bigg(\frac{\veps}{h} +  (1+\sqrt{\lambda_k(\M)}) h  \\& + \red Kh^2 + \frac{h^2}{R^2} 
  + \lVert \vec{m} - \rho \rVert_\infty  \Bigg)      +  C_\M \lambda_k(\M)^{\frac{m+1}{4}} \veps,
\end{split}
\end{align}
where $f$ and $\gamma_{k, \rho \mu}$ are as in Theorem \ref{thm:Iuapproxf}, $\bar u$ is as in \eqref{eqn:barudef}, and $C_\M$ is a constant that depends on the manifold $\M$.
%
%
\end{theorem}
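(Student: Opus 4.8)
The plan is to interpose, between $\bar u$ and $f$, a function that is piecewise constant on the \emph{Euclidean Voronoi} partition $\{V_i\}$ but whose cell values are read off from the continuum eigenfunction. Let $f$ be the $L^2(\M,\rho\mu)$-normalized eigenfunction of $\Delta$ associated to $u$ by Theorem~\ref{thm:Iuapproxf}, let $Pf$ be its contractive discretization \eqref{def:P}, and set $\overline{Pf}:=\sum_{i=1}^n (Pf)(x_i)\,\1_{V_i}$. I would then write
\[ \norm{\bar u-f}_{L^2(\M,\rho\mu)}\le\norm{\bar u-\overline{Pf}}_{L^2(\M,\rho\mu)}+\norm{\overline{Pf}-f}_{L^2(\M,\rho\mu)} \]
and estimate the two pieces independently: the first will carry the factor $\sqrt{\log(n)^{mp_m}}$ together with the ``bracket'' from Theorem~\ref{thm:Iuapproxf}, while the second produces the separate term $C_\M\lambda_k(\M)^{(m+1)/4}\veps$.

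\emph{The term $\norm{\overline{Pf}-f}$.} For $x\in\M$ write $S(x)$ for its Euclidean nearest neighbour in $X$, so $x\in V_{S(x)}$ and $\overline{Pf}(x)=(Pf)(S(x))$ is exactly the $\mu$-average of $f$ over $U_{S(x)}$ (using $\mu(U_i)=1/n$). On the high-probability event of Theorem~\ref{thm:transport}, $X$ is a $\veps$-net, so $\abs{x-T(x)}\le d(x,T(x))\le\veps$ forces $\abs{x-S(x)}\le\veps$; the reach-based comparison of ambient and intrinsic metrics (Proposition~\ref{prop:metricestimates}) then gives $d(x,S(x))\le C\veps$, hence $d(x,y)\le C'\veps$ for every $y\in U_{S(x)}$. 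Consequently $\abs{f(x)-\overline{Pf}(x)}\le\sup_{y\in U_{S(x)}}\abs{f(x)-f(y)}\le C'\veps\,\norm{\nabla f}_{L^\infty(\M)}$, so $\norm{\overline{Pf}-f}_{L^2(\M,\rho\mu)}\le C''\veps\,\norm{\nabla f}_{L^\infty(\M)}$. Finally I would invoke the standard sup-norm estimates for eigenfunctions: since $f$ is an $L^2(\M,\rho\mu)$-normalized eigenfunction of the uniformly elliptic operator $-\frac1{\rho p}\divergence(p^2\nabla\,\cdot\,)$ with eigenvalue $\lambda_k(\M)$, H\"ormander--Sogge type bounds together with elliptic regularity for the Lipschitz coefficients $p,\rho$ yield $\norm{\nabla f}_{L^\infty(\M)}\le C_\M\lambda_k(\M)^{(m+1)/4}$, which is exactly the last summand of \eqref{est:THM6}.

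\emph{The term $\norm{\bar u-\overline{Pf}}$.} Since $\{V_i\}$ partitions $\M$ up to a null set, $\norm{\bar u-\overline{Pf}}_{L^2(\M,\rho\mu)}^2=\sum_{i=1}^n\big(u(x_i)-(Pf)(x_i)\big)^2\int_{V_i}\rho\,d\mu\le\alpha\big(\max_{j}n\mu(V_j)\big)\cdot\frac1n\sum_{i=1}^n\big(u(x_i)-(Pf)(x_i)\big)^2$, and since $m_i\gtrsim 1/\alpha$ the last average is $\lesssim\alpha\norm{u-Pf}_{L^2(X,\vec m\mu_n)}^2$. For $\max_j n\mu(V_j)$ I would use $V_j\subseteq B_\M(x_j,C\veps)$ (same metric comparison) and the curvature-controlled volume bound $\vol(B_\M(x_j,C\veps))\le\omega_m(C\veps)^m(1+cK\veps^2)$, so $\mu(V_j)\le C'''\veps^m$; on the event of Theorem~\ref{thm:transport} one has $n\veps^m\le A^m\log(n)^{mp_m}$ (recall $mp_m=1$ for $m\ge3$ and $mp_m=3/2$ for $m=2$), giving $\max_j n\mu(V_j)\le C''''\log(n)^{mp_m}$. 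For $\norm{u-Pf}_{L^2(X,\vec m\mu_n)}$ I would use that $P^*u$ is constant on each $U_i$ and $\mu(U_i)=1/n$, so $PP^*u=u$ exactly; hence $u-Pf=P(Iu-f)+P\big((\id-\Lambda_{h-2\veps})P^*u\big)$, where the first summand is $\le(1+o(1))\norm{Iu-f}_{L^2(\M,\rho\mu)}$ by (approximate) contractivity of $P$, hence controlled by Theorem~\ref{thm:Iuapproxf}, and the second is the smoothing estimate of Lemma~\ref{lem:interpolation}, of size $\lesssim L_ph+(1+\sqrt{\lambda_k(\M)})h+\dots$, which is already among the terms in the bracket of Theorem~\ref{thm:Iuapproxf}. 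Altogether $\norm{u-Pf}_{L^2(X,\vec m\mu_n)}\lesssim\frac{1}{\gamma_{k,\rho\mu}}\big(\frac\veps h+(1+\sqrt{\lambda_k(\M)})h+Kh^2+\frac{h^2}{R^2}+\norm{\vec m-\rho}_\infty\big)$, which combined with the bound on $\max_j n\mu(V_j)$ gives the first summand of \eqref{est:THM6}.

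Putting the two pieces together yields \eqref{est:THM6}, and the only randomness used is the $\veps$-net / $\infty$-OT estimate of Theorem~\ref{thm:transport}, which supplies the probability $1-C_{m,K,\vol(\M),i_0}\,n^{-\beta}$. I expect the main obstacle to be the passage from the ambient Voronoi cells $V_j$ to intrinsic balls $B_\M(x_j,C\veps)$: this requires the reach-based metric comparison (so that Euclidean proximity implies geodesic proximity) coupled with the probabilistic $\veps$-net bound, and it is precisely what forces the loss factor $\sqrt{\log(n)^{mp_m}}$; a secondary difficulty is the sharp gradient sup-norm bound $\norm{\nabla f}_{L^\infty(\M)}\lesssim\lambda_k(\M)^{(m+1)/4}$ for an eigenfunction of the \emph{weighted} Laplacian with merely Lipschitz coefficients, which is where the manifold-dependent constant $C_\M$ enters.
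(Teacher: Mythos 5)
Your proof is correct and yields the stated bound, but your decomposition differs from the paper's. You interpose the piecewise-constant function $\overline{Pf}=\sum_i (Pf)(x_i)\,\1_{V_i}$ and split by the triangle inequality: the term $\norm{\overline{Pf}-f}$ is estimated pointwise via $d(x,y)\lesssim\veps$ for $y\in U_{S(x)}$ and the gradient bound $\norm{\nabla f}_\infty\lesssim\lambda_k(\M)^{(m+1)/4}$ from \cite{ShiXu10}, while $\norm{\bar u-\overline{Pf}}$ is a weighted $\ell^2$-distance over the Voronoi cells, which you bound by $\alpha\,(\max_j n\mu(V_j))\cdot\norm{u-Pf}^2_{L^2(X,\vec m\mu_n)}$ — the $\log(n)^{mp_m}$ factor entering exactly through Lemma~\ref{lem:UivsVi} — and then control $\norm{u-Pf}$ via $PP^*u=u$, the interpolation error \eqref{eqn:Iu-P*u}, and Theorem~\ref{thm:Iuapproxf}. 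The paper does not introduce an auxiliary piecewise-constant function; instead it compares the squared quantity $V=\norm{\bar u-f}^2_{L^2(\M,\mu)}$ directly to a surrogate $U=\sum_i\frac{\mu(V_i)}{\mu(U_i)}\int_{U_i}\abs{u(x_i)-f(x)}^2\,d\mu$, in which the ratio $\mu(V_i)/\mu(U_i)\le C(n)$ supplies the logarithmic factor and bounds $\sqrt{U}\le\sqrt{C(n)}\,\norm{P^*u-f}$, while $\abs{V-U}$ is expanded algebraically into the cross term $2(u(x_i)-f(x))(f(x)-f(y))$ plus $(f(x)-f(y))^2$, both controlled via $\norm{\nabla f}_\infty$ and $d(x,y)\le 4\veps$; the conclusion $\sqrt V\le 4\veps\norm{\nabla f}_\infty+4\sqrt U$ follows by completing the square. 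The two routes rely on the same two ingredients — the Voronoi-cell volume bound and the gradient sup-norm estimate — and reach the same estimate; your triangle-inequality version is conceptually cleaner (each term has a standalone meaning), while the paper's $V$-versus-$U$ formulation avoids the auxiliary function at the cost of a slightly more delicate difference-of-squares manipulation. One small point worth making explicit in your argument is the near-contractivity of $P$ (from Jensen's inequality plus $\abs{m_i-\rho(x)}\le\norm{\vec m-\rho}_\infty+L_\rho\veps$ on $U_i$), which you invoke but do not derive; it is implicit in the surrounding lemmas but not stated as such.
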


\begin{remark}
We remark that the first term in \eqref{est:THM6} is worse than the estimate in Theorem \ref{thm:Iuapproxf} by a logarithmic factor of $n$. This is due to our uniform estimates on the size of Voronoi cells based on transportation (see Lemma \ref{lem:UivsVi}). On the other hand, the extra factor in \eqref{est:THM6} is an estimate for the difference of the averages of $f$ over transport cells and Voronoi cells; here we use the regularity of an eigenfunction $f$ and in particular we use a bound for $\lVert \nabla f \rVert_\infty$ found in \cite{ShiXu10}.  
\end{remark}

\nc
\subsection{Outline of the approach and discussion} 
\label{sec:discuss}
To prove our main results we exploit well-known variational characterizations for the spectra of $\Delta_\Gamma$ and $\Delta$. Our results are then deduced from a careful comparison between the objective functionals of the variational problems.  

From the definition of $\Delta_\Gamma$ in Section \ref{sec:DirichletForms} it clear that $\Delta_{\Gamma}$ is positive-semidefinite with respect to the inner product of $L^2(X,\vec{m}\mu_n)$.  We denote by
\[ 0 = \lambda_1(\Gamma)\leq \lambda_2(\Gamma) \leq \lambda_3(\Gamma) \leq \dots\]
the eigenvalues of $\Delta_\Gamma$, repeated according to their multiplicities. By the minmax principle we have
\begin{equation}
\lambda_k(\Gamma) = \min_{L_k} \max_{u\in {L_k} \setminus\{0\}} \frac{b(u)}{\norm{u}_{L^2(X,\vec{m}\mu_n)}^2}  
\label{eqn:Courant0}
\end{equation}
where the minimum is over all $k$-dimensional subspaces {$L_k$} of $L^2(X, \vec{m}\mu_n)$. At the continuum level, and given that $\rho$ and $p$ are bounded from below, one can show that $\Delta$ is a closed and densely defined symmetric operator with compact resolvent  \cite[Lemma 2.7]{AreEls12}. Therefore, its spectrum consists of positive eigenvalues only, which we denote by
\[ 0 = \lambda_1(\M) \leq \lambda_2(\M) \leq \lambda_3(\M) \leq \dots, \]
where eigenvalues are repeated according to their multiplicities. Moreover, by Courant's minmax principle we have
\begin{align}
\label{eqn:Courant}
\lambda_k(\M) = \min_{{L_k}} \max_{f\in {L_k}\setminus\{0\}} \frac{D(f)}{\norm{f}^2_{L^2(\M,\rho\mu)}}
\end{align}
where the {minimum} is over all $k$-dimensional subspaces $L_k$ of $L^2(\M,\rho\mu)$, see \cite[Lemma 2.9]{MugNit12}. 
%
%
\nc

The proof of our results may be split into two main parts. The first part contains all the probabilistic estimates needed in the rest of the paper 
and is devoted to the proof of Theorem \ref{thm:transport}. The study of the estimates for $d_\infty(\mu,\mu_n)$ goes back to \cite{ShorYukich,LeightonShor,TalagrandGenericChain} where the problem was considered in a simpler setting: $\mu$ is the Lebesgue measure on the unit cube $(0,1)^m$ and the points $x_1,\dots,x_n$ are i.i.d.\
uniformly distributed on $(0,1)^d$. In that context, with very high probability,
\[  d_\infty(\mu, \mu_n)  \approx \frac{(\log(n))^{p_m}}{n^{1/m}}, \]
where $p_m$ is defined in \eqref{defpm}. In \cite{GTS15a} the estimates are extended to measures defined on more general domains 
(not just $(0,1)^d$) and with more general densities (not just uniform). In this paper we extend the results in \cite{GTS15a} to the manifold case. In order to prove Theorem \ref{thm:transport}, we use a similar proof scheme to the one used in \cite{GTS15a}. Indeed, we first establish Lemma \ref{lemma:W8L8} below which is analogous to \cite[Theorem 1.2]{GTS15a} and is of interest on its own. \red The result includes 
explicit estimates on how the distance depends on the geometry of $\M$. \nc
\begin{lemma} \label{lemma:W8L8}
Let $\rho_1$, $\rho_2$ be two probability densities defined on $\M$ with 
\[  \frac{1}{\alpha} \leq \rho_i(x) \leq \alpha \quad \text{for all } x \in \M \text{ and } i\in\{1,2\} \]
for some $\alpha \geq 1$. Then {it holds }for the corresponding measures $\nu_1$, $\nu_2$, defined as $d\nu_1 = \rho_1 dx$ and $d\nu_2 = \rho_2 dx$,
\begin{equation}
 d_\infty(\nu_1, \nu_2) \leq A \lVert \rho_1 - \rho_2\rVert_{L^\infty(\M)}, 
 \label{eqn:W8L8}
 \end{equation}
where
$ A= \frac{C_{m,\alpha} \vol(\M)^3}{r^{3m-1}} \max \left\{  \tilde \nt  , \frac{\diam (\M)}{r } \right\}$, 
$\tilde \nt  = (1+ C m K r^2) \frac{2^m\vol(\M)}{\omega_m r^m}$ and $r = \frac15 \min\{1, i_0, \frac{1}{\sqrt K}  \} $.
\end{lemma}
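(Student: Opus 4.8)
The plan is to follow the scheme used for Euclidean domains in \cite[Theorem 1.2]{GTS15a}, the genuinely new ingredient being control of the local geometry of $\M$ at the scale $r$. The strategy has three steps: decompose $\M$ into cells of diameter comparable to $r$; encode the mass imbalance $\rho_1-\rho_2$ as a divergence--constrained flow on the adjacency graph of this decomposition; and then realize that flow by an explicit transport map whose $L^\infty$ displacement is controlled, cell by cell, by the flux and the uniform lower bound $\tfrac1\alpha$ on the densities. We may assume $\|\rho_1-\rho_2\|_{L^\infty(\M)}$ is smaller than a threshold depending only on $m,\alpha$ and $\M$, since otherwise \eqref{eqn:W8L8} follows trivially from $d_\infty(\nu_1,\nu_2)\le\diam(\M)$ and the fact that the proposed $A$ is $\gtrsim_{m,\alpha}\diam(\M)$.

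\emph{Step 1 (good decomposition).} Because $r=\tfrac15\min\{1,i_0,1/\sqrt K\}$, the exponential map is a diffeomorphism on balls of radius $\le 5r$ and volume comparison gives $\vol(B_\M(z,s))\in\big[(1-C_mKs^2)\omega_m s^m,\,(1+C_mKs^2)\omega_m s^m\big]$ for all $z\in\M$ and $s\le 5r$. Take a maximal $r$-separated subset $\{z_1,\dots,z_N\}$ of $\M$ (maximal with pairwise geodesic distances $>r$) and let $U_1,\dots,U_N$ be its geodesic Voronoi cells; then $B_\M(z_i,r/2)\subset U_i\subset B_\M(z_i,r)$, the $U_i$ cover $\M$ and overlap only on a null set, and comparing the disjoint balls $B_\M(z_i,r/2)$ with $\M$ yields $N\le\tilde\nt$. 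Form the graph $G$ on $\{1,\dots,N\}$ with $i\sim j$ iff $\overline{U_i}\cap\overline{U_j}\ne\emptyset$; it is connected since $\M$ is, it has degree bounded in terms of $m$, and a geodesic between two cell centers meets at most $O_m(\diam(\M)/r)$ cells, so the graph distance between any two cells is at most $\max\{\tilde\nt,C_m\diam(\M)/r\}$. Finally set $\delta_i:=\nu_1(U_i)-\nu_2(U_i)=\int_{U_i}(\rho_1-\rho_2)\,d\vol$, so $\sum_i\delta_i=0$ and $|\delta_i|\le\|\rho_1-\rho_2\|_{L^\infty(\M)}\vol(U_i)$; fixing a spanning tree of $G$ and routing the amount $\delta_i$ from each $z_i$ to a fixed root along the tree, the superposition is an antisymmetric flow $(j_e)_e$ with $\sum_{j\sim i}j_{ij}=\delta_i$ for all $i$ and $|j_e|\le\sum_i|\delta_i|\le\|\rho_1-\rho_2\|_{L^\infty(\M)}\vol(\M)$ for every edge.

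\emph{Step 2 (from flow to transport map).} Realize the transport as $T=T_2\circ T_1$, where $T_1$ transports $\nu_1$ to an intermediate measure $\tilde\nu$ with $\tilde\nu(U_i)=\nu_2(U_i)$ for every $i$, and $T_2$ transports $\tilde\nu|_{U_i}$ onto $\nu_2|_{U_i}$ within each cell. For $T_1$, process the tree from its leaves inward: to push $j_e>0$ units of mass from a cell into an adjacent cell, extrude that volume across their common boundary by monotonically shifting the density in that direction. Since the density is $\ge\tfrac1\alpha$ and each cell contains a ball of radius $r/2$, extruding a volume $V$ can be achieved, for a constant depending only on $m$ and $\alpha$, with displacement at most $V\,\vol(\M)/r^{m}$, even for irregularly shaped cells; a unit of mass is extruded through at most $\max\{\tilde\nt,C_m\diam(\M)/r\}$ cells, so summing over the chains bounds the total displacement of $T_1$ by a fixed power of $\vol(\M)/r$ times $\max\{\tilde\nt,\diam(\M)/r\}$ times $\|\rho_1-\rho_2\|_{L^\infty(\M)}$. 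For $T_2$, the intermediate density differs from $\rho_2$ on $U_i$ by at most $\|\rho_1-\rho_2\|_{L^\infty(\M)}$ plus a negligible correction coming from $T_1$, and both are $\ge\tfrac1{2\alpha}$ (here we use the assumed smallness of $\|\rho_1-\rho_2\|_{L^\infty(\M)}$), so a slicewise monotone rearrangement inside $U_i$ has displacement at most a constant (depending on $\alpha$) times $r\,\|\rho_1-\rho_2\|_{L^\infty(\M)}$. Adding the two contributions and applying the triangle inequality for $d_\infty$ gives \eqref{eqn:W8L8}; carrying along the cell volumes, graph degrees and chain lengths through these estimates produces the explicit constant $A$ stated in the lemma.

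\emph{Main obstacle.} The delicate part is Step 2: turning the combinatorial flow into a genuine transportation map with a quantitative $L^\infty$ displacement bound that is uniform over the (possibly very irregular) Voronoi cells $U_i$. One must make sure that the "sloshing" of the background density along a chain of cells does not accumulate uncontrollably, and that every extrusion is realized by a measurable map; this is exactly where the lower density bound $\tfrac1\alpha$ and the volume comparison from Step 1 are indispensable, and it is what forces the large negative powers of $r$ in $A$.
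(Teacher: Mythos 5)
Your Step 1 is essentially the paper's setup (a maximal $r$-separated net, $N\le\tilde\nt$, connectivity of the adjacency graph from connectedness of $\M$), and your overall skeleton — decompose at scale $r$, balance the cell masses combinatorially, then fix things up locally — is the right one. But the core of your argument, Step 2, contains a genuine gap, and it is precisely the gap the paper's different route is designed to avoid.

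The gap. You propose to build the transport map $T_1$ directly by "extruding" a volume $V$ across a Voronoi cell boundary "by monotonically shifting the density," asserting a displacement bound $\lesssim V\vol(\M)/r^m$ for irregular cells, and then you claim the accumulated perturbation to the intermediate density along a chain is a "negligible correction." Neither claim is proved, and both are where the real work is. Constructing such an extrusion map with a quantitative $L^\infty$ displacement bound on a non-convex region with a general bounded density \emph{is} the content of \cite[Theorem 1.2]{GTS15a}; sketching that such a map "can be achieved" is not a proof of it, and proving it from scratch on geodesic Voronoi cells would require its own multi-page argument (the Euclidean proof is already delicate). Worse, the accumulated density perturbation is not negligible: when you slosh mass along a chain of length $O(\tilde\nt)$ you must track how much the local density can deviate from $\rho_1,\rho_2$, and this deviation can grow multiplicatively along the chain. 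Your plan treats this as a small correction; the paper has to devote a recursion (the estimates on the $\beta_k$, leading to a factor $\nt^2\vol(\M)/r^m$) precisely to control it, and that factor is responsible for much of the size of $A$. Without these two pieces your argument does not close.

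How the paper avoids the problem. Rather than constructing a single transport map through extrusion, the paper forms a telescoping chain of intermediate densities $\gamma_{\nt}^+\!=\rho_1 \to \tilde\rho_{\nt}\to\gamma_{\nt-1}^{\pm}\to\cdots\to\tilde\rho_1=\gamma_1^-=\rho_2$ chosen so that each consecutive pair agrees \emph{outside} a single geodesic ball $B_\M(y_k,2r)$. Then $d_\infty(\rho_1,\rho_2)\le\sum_k d_\infty(\gamma_k^+,\tilde\rho_k)$, each summand is a transport problem supported on one ball, and each is handled by pulling back through $\exp_{y_k}$ (with the bi-Lipschitz control of Proposition \ref{prop:expbilip}) and invoking \cite[Theorem 1.2]{GTS15a} on a Euclidean ball. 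In other words, the paper never constructs a transport map by hand on the manifold; it reduces to many applications of the known Euclidean result, and the only new work is the density bookkeeping for the $\beta_k$. If you want to salvage your version, the missing lemma you would need to prove is essentially \cite[Theorem 1.2]{GTS15a} itself, restated on a bi-Lipschitz image of a Euclidean ball — at which point you have, in effect, rediscovered the paper's reduction.
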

With Lemma \ref{lemma:W8L8} at hand, the next step is to construct a careful partition of the manifold $\M$ into patches in which we can use directly the results from \cite{GTS15a}. 
The construction requires some geometric estimates which are obtained in Section \ref{sec:NicePartition}. Using properties of the constructed partition of $\M$ and Lemma \ref{lemma:W8L8}, 
we can establish Theorem \ref{thm:transport}. 
%

The second part of the proof of our main results consists of a set of precise deterministic computations used to relate the discrete and continuum Dirichlet energies appearing in the variational characterization of the spectra of the graph and continuum Laplacians; these computations are based on ideas from \cite{BIK}. Roughly speaking, the proof of our main results relies on the following upper and lower bounds. We first show the \textit{upper bound}
\[ b(P f) \leq (1+error) E_r ( f ) \leq (1+ error) D(f), \quad f \in L^2(\M, \rho\mu), \]
where $E_r$ is the non-local kernel approximation of the continuum Dirichlet energy defined in \eqref{eqn:ErfV} and $r$ is a length-scale which up to leading order is equal to $h$; the term $error$ can be explicitly written in terms of $h$, $\veps$ and geometric quantities associated to the manifold $\M$. It is possible to interpret the first inequality as a ``variance'' estimate as it relates an energy constructed exclusively from the graph with an ``average'' energy. The second inequality on the other hand can be thought as a ``bias'' estimate. We would like to point out that the second inequality is a manifestation of the intuitive fact that local energies bound non-local ones. Our \textit{lower bound} takes the form
\[ D(I(u))\leq (1+error)E_r(P^* u) \leq (1+error)b(u), \quad u \in L^2(X). \]
We remark that it is not too hard to obtain a relation of the form $D(I(u))\leq C E_r(P^* u)$ for some constant $C$. Nevertheless, since our goal is to find \textit{error estimates}, the constant $C$ must be sharp (up to some small error). We obtain this sharp constant using the specific form of the convolution operator $\Lambda$ in the definition of $I$ (see \eqref{eqn:SmootheningOp}). Our analysis of convergence of the spectra is completed by showing that the maps $P$, $P^*$ and $I$ are almost isometries when restricted to eigenspaces (discrete or continuum).


 We want to highlight the fact that in contrast with the construction in \cite{BIK}, our graphs and our ``out-of-sample extensions'' of  eigenvectors are defined exclusively from the ambient space Euclidean distance. Theorem \ref{thm:voronoiapprox} is obtained a posteriori from Theorem \ref{thm:transport}  and uses Theorem \ref{thm:transport} to bound the measure of Voronoi cells.  We also use uniform estimates for the gradient of eigenfunctions of the Laplace--Beltrami operator from \cite{ShiXu10}.

\subsection{Outline} The rest of the paper is organized as follows. In Section \ref{intro:geometry} we present some estimates from differential geometry that we need in the sequel. Section \ref{sec:transport} is devoted to the estimation of the $\infty$-transportation distance between $\mu_n$ and $\mu$ and in particular contains the proof of Theorem \ref{thm:transport}. Section \ref{sec:Kernelbased} contains results on the kernel-based approximation of the Laplacian operator; in more precise terms, we relate the (weighted) Dirichlet energy $D$ with the non-local Dirichlet energy \eqref{eqn:ErfV}. Section \ref{sec:Eigenvalues} addresses the convergence of eigenvalues and in particular contains the proof of Theorem \ref{thm:conveigenvalues}. Finally, in Section \ref{sec:Eigenfunctions} we establish the convergence of eigenvectors of graph Laplacians, first in the sense of the interpolation map $I$ from \ref{eqn:InterpolatingOp} (Theorem \ref{thm:Iuapproxf})  and then in the sense of Voronoi extensions (Theorem \ref{thm:voronoiapprox}). \red The Appendix \ref{sec:kdeA} contains the optimal-transportation-based  proof of kernel-density estimates on manifolds.
\nc

\subsection{Some estimates from differential geometry}
\label{intro:geometry}
We conclude the introduction by recalling some notation and stating a few results from differential geometry.

For a point $x \in \M$, we denote by $T_x\M$ the tangent space of $\M$ at $x$. Fix $0 < r \leq \min\{ i_0,1/\sqrt{K}\}$ and let us denote by $\exp_x\colon B(r) \subseteq T_x\M \rightarrow \M$ the Riemannian exponential map. 
Since $r < i_0$ the map is a diffeomorphism between the ball $B(r)$ and the geodesic ball $B_\M(x,r)$.  In particular $\exp_x^{-1}$ defines a local chart at $x$. Let $g$ be the pull back of the metric of $\M$ by the exponential map. That is for  an orthonormal basis $e_1,\dots, e_m$ of $T_x\M$ and for given $v \in B(r)$ let $g_{i,j}|_v \coloneqq   \langle (d\exp_x)_v(e_i) , (d\exp_x)_v(e_j)  \rangle$, where we have identified the tangent space of $T_x\M$ at $v$ with $T_x \M $ itself.
Then
\begin{equation} \label{metdist}
  \delta_{i,j} - C  K\abs{v}^2 \leq  g_{i,j} \leq \delta_{i,j} + C  K \abs{v}^2 ,   
\end{equation}
where $|v|$ is the Euclidean length of $v$, $\delta_{i,j}$ is $1$ if $i=j$ and $0$ otherwise and where $C$ is a universal constant. \nc Such estimates are bounds on the metric distorsion by the exponential map and follow from Rauch comparison theorem (\cite[Chapter 10]{doCa92} and \cite[Section 2.2]{BIK}).
Similarly, since $r < 1/\sqrt{K}$, one can show that for any $v \in B(r)$ and any $w \in T_x\M \cong T_v(T_x\M)$,
\begin{equation}  \label{expderest}
 \frac12 \lvert w\rvert_x \leq \lvert (d\exp_x)_v(w) \rvert_{\exp_x(v)} \leq 2 \lvert w \rvert_x. 
\end{equation}

\begin{proposition} \label{prop:expbilip}
Assume  $0 < r \leq \min\{ i_0,1/\sqrt{K}\}$.
Let $p\in\M$ and consider any smooth curve $\gamma\colon[0,1] \to B(r) \subset T_p \M$. Then 
\[ \frac12 \length(\gamma) \leq \length (\exp_p \circ \gamma) \leq 2 \length(\gamma). \]
Furthermore, on $B_\M\left(p,{\frac{r}{2}}\right)$ the exponential mapping $\exp_p \colon B\left(0,{\frac{r}{2}}\right) \subseteq T_p \M \to B_\M \left(p,{\frac{r}{2}}\right)$ is a bi-Lipschitz bijection with bi-Lipschitz constant 2.
\end{proposition}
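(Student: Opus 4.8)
The strategy is to convert the pointwise bound \eqref{expderest} on the differential of the exponential map into a bound on lengths of curves by integration, and then upgrade the length bound to a bi-Lipschitz statement for the map itself using the fact that, inside the injectivity radius, Riemannian distance equals the infimum of lengths of connecting curves.

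First I would prove the length inequality. Given a smooth curve $\gamma\colon [0,1] \to B(r) \subset T_p\M$, its image under $\exp_p$ has length $\length(\exp_p \circ \gamma) = \int_0^1 \lvert (d\exp_p)_{\gamma(t)}(\gamma'(t))\rvert_{\exp_p(\gamma(t))}\, dt$, after identifying $T_{\gamma(t)}(T_p\M)$ with $T_p\M$. Since $\gamma(t) \in B(r)$ for every $t$ and $r \le \min\{i_0, 1/\sqrt K\}$, the estimate \eqref{expderest} applies with $v = \gamma(t)$ and $w = \gamma'(t)$, giving $\tfrac12 \lvert \gamma'(t)\rvert_p \le \lvert (d\exp_p)_{\gamma(t)}(\gamma'(t))\rvert_{\exp_p(\gamma(t))} \le 2\lvert \gamma'(t)\rvert_p$ pointwise. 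Integrating over $t \in [0,1]$ and using $\length(\gamma) = \int_0^1 \lvert \gamma'(t)\rvert_p\, dt$ yields $\tfrac12 \length(\gamma) \le \length(\exp_p\circ\gamma) \le 2\length(\gamma)$, which is the first assertion.

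For the second assertion, fix $x, y \in B_\M(p, r/2)$ and set $v = \exp_p^{-1}(x)$, $w = \exp_p^{-1}(y)$, which lie in $B(0, r/2) \subset T_p\M$ since $\exp_p$ is a diffeomorphism from $B(r)$ onto $B_\M(p,r)$ (as $r < i_0$). The straight segment in $T_p\M$ from $v$ to $w$ has Euclidean length $\lvert v - w\rvert_p$ and stays inside $B(0,r/2) \subset B(r)$ by convexity of the ball; applying the length inequality to this segment gives $d(x,y) \le \length(\exp_p\circ\mathrm{seg}) \le 2\lvert v - w\rvert_p$. Conversely, any curve in $B_\M(p,r/2)$ joining $x$ and $y$ that realizes (up to $\epsilon$) the geodesic distance lifts via $\exp_p^{-1}$ — which is well-defined and smooth on $B_\M(p,r) \supset B_\M(p,r/2)$ — to a curve in $B(0,r/2)$ joining $v$ and $w$; the length inequality then gives $\lvert v - w\rvert_p \le \length(\text{lifted curve}) \le 2\,\length(\text{original curve})$, so $\lvert v - w\rvert_p \le 2\, d(x,y)$. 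Hence $\tfrac12 \lvert v-w\rvert_p \le d(x,y) \le 2\lvert v-w\rvert_p$, i.e. $\exp_p$ restricted to $B(0,r/2)$ is a bi-Lipschitz bijection onto $B_\M(p,r/2)$ with constant $2$.

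The only delicate point — and the one I would be most careful about — is the lower bound $d(x,y) \ge \tfrac12\lvert v - w\rvert_p$: one must ensure that a near-geodesic joining $x$ to $y$ can be taken to remain inside the geodesic ball $B_\M(p,r)$ so that its $\exp_p^{-1}$-lift makes sense. This follows because for $x,y \in B_\M(p,r/2)$ the minimizing geodesic between them (which exists and is unique since $r/2 < i_0$) has length $d(x,y) \le d(x,p) + d(p,y) < r$, so by the triangle inequality it never leaves $B_\M(p,r)$; alternatively one restricts attention to curves in $B_\M(p,r/2)$ from the outset, noting this only increases the infimum defining distance and the resulting bound is still what is needed downstream. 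Everything else is a routine integration of \eqref{expderest}.
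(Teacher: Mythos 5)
Your main argument is correct and follows essentially the same route as the paper: integrate \eqref{expderest} for the length bound, then for the bi-Lipschitz statement use that a distance-minimizing curve between points of $B_\M(p,r/2)$ stays inside $B_\M(p,r)$, so it can be lifted by $\exp_p^{-1}$. One small caution: the ``alternative'' you mention at the end --- restricting to curves in $B_\M(p,r/2)$ from the outset --- does not actually give what you need, since shrinking the admissible curve class can only increase the infimum, so it would bound $\lvert v-w\rvert$ by twice a quantity $\geq d(x,y)$ rather than by $2d(x,y)$; stick with the geodesic-containment argument you gave first.
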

\begin{proof}
The first claim follows immediately from \eqref{expderest}. To deduce the second part let $q_1, q_2 \in B_\M(p,\frac{r}{2})$. Consider a smooth curve $\tilde{\gamma}\colon[0,1] \rightarrow \M$ connecting $q_1$ and $q_2$, 
i.e., $\tilde{\gamma}(0)=q_1$ and $\tilde{\gamma}(i)=q_2$. We observe that if $\tilde{\gamma}$ is not contained in $B_\M(p,r)$, then 
\[  d(q_1,q_2) \leq  d(q_1,p) + d(q_2,p) < r  \leq \length(\tilde{\gamma}). \] 
In fact, to deduce that $r \leq  \length(\tilde{\gamma})$ let $s \in (0,1) $ be such that $\tilde{\gamma}(s) \not \in B_\M(p,r)$. It is straightforward to see that the length of the restriction of $\tilde{\gamma}$ to the 
interval $[0,s]$ is larger than the distance between $\tilde{\gamma}(s)$ and $\partial B_\M(p,\frac{r}{2})$, which in turn is larger than $\frac{r}{2}$. 
Similarly the length of the restriction of $\tilde{\gamma}$ to the interval $[s,1]$ is larger than $\frac{r}{2}$. Hence $r  \leq \length(\tilde{\gamma})$ as desired.

Now, let $\tilde{\gamma}$ be a smooth curve realizing the distance between $q_1 $ and $q_2$ (which after appropriate normalization has to be a geodesic). 
From the previous observation we see that $\tilde{\gamma}$ is contained in $B_\M(p, r)$.  Consider $\gamma\coloneqq  \exp_p^{-1} \circ \tilde{\gamma}$, where we note that $\exp^{-1}_p$ is well defined along $\tilde{\gamma}$ given that $r \leq i_0$. From the first part of the proposition, 
we deduce that 
\[  \frac{1}{2} d(\exp_p^{-1}(q_1), \exp_p^{-1}(q_2)) \leq \frac{1}{2} \length (\gamma) \leq \length(\tilde{\gamma}) = d(q_1,q_2). \]
Finally, for an arbitrary smooth curve $\gamma \colon [0,1] \rightarrow B(r) \subseteq T_p\M$ with $\gamma(0)=\exp^{-1}_p(q_1)$ and $\gamma(i)= \exp^{-1}_p(q_2)$ we have
\[   d(q_1,q_2) \leq   \length (\exp_p \circ \gamma) \leq  2 \length(\gamma). \]
Taking the infimum on the right hand side over all such curves $\gamma$ we deduce that $d(q_1,q_2) \leq 2 d(\exp^{-1}_p(q_1),\exp^{-1}_p(q_2))$. This completes the proof.
\end{proof}

The bounds on metric distortion \eqref{metdist} imply that the Jacobian of the exponential map (i.e. the volume element) $J_x(v)\coloneqq  \sqrt{\det(g)}$ satisfies
\begin{equation}
(1+C m K \abs{v}^2)^{-1} \leq J_x(v) \leq (1+C m K \abs{v}^2).
\label{eqn:EstimateJacobian}
\end{equation}
A direct consequence of \eqref{eqn:EstimateJacobian} is that
\begin{align}
\label{eqn:EstimateVolumeBall2}
\frac{\omega_m r^m}{1+CmKr^2} \leq \vol(B_\M(x,r)) & \leq (1+CmKr^2)  \omega_m r^m,
 \intertext{which implies that}
\label{eqn:EstimateVolumeBall}
 \abs{ \vol(B_\M(x,r)) - \omega_m r^m }& \leq C m K r^{m+2}
\end{align}
where $\omega_m$ is the volume of the unit ball in $\R^m$.

Now we want to state a relation between the intrinsic distance on the manifold and the Euclidean distance on the ambient space. For that purpose we recall that $\rc$, the reach of the manifold $\M$, is defined as 
\begin{align} \label{defreach}
\begin{split}
 \rc\coloneqq  \sup \big\{ t>0 \: : \: & \forall x \in \R^d \text{, } \dist(x, \M) \leq t \text{, } \\
 & \exists ! y \in \M\, \text{ s.t.\ } \dist(x,\M) = \abs{x-y}  \big\}. 
 \end{split}
\end{align}
We note that $\rc$ is an extrinsic quantity, meaning it depends on the specific embedding of $\M$ into $\R^d$. In addition, we note that the quantity $\frac{1}{\rc}$ is related to extrinsic curvature, as it uniformly controls the principal curvatures of $\M$ (see \cite{NiSmWe08}).  We now show that the distances $\M$ are locally a second order perturbation of the Euclidean distance in $\R^d$ and provide explicit error bounds in terms of the reach of $\M$.
%

\begin{proposition}
\label{prop:metricestimates}
Let $\rc$ be the reach of the manifold $\M \subseteq \R^d$. Let $x,y \in \M$ and suppose that $\abs{x-y} \leq \frac{\rc}{2}$. Then,
\[ \abs{x-y} \leq d(x,y) \leq \abs{x-y}+ \frac{8}{\rc^2}\abs{x-y}^3.  \] 
\nc
\end{proposition}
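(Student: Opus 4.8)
The lower bound $|x-y|\le d(x,y)$ is immediate: a geodesic in $\M$ connecting $x$ and $y$ is in particular a curve in $\R^d$, so its Euclidean length (which is at least $|x-y|$) is bounded by its intrinsic length $d(x,y)$. The content is the upper bound, and the plan is to construct a specific curve on $\M$ joining $x$ to $y$ whose length is controlled by $|x-y|$ plus a cubic correction. The natural candidate is the "projection of the chord'': since $|x-y|\le \rc/2$, the Euclidean segment $[x,y]$ lies well within the $\rc$-tube around $\M$, so the nearest-point projection $\pi\colon\{z:\dist(z,\M)<\rc\}\to\M$ is well-defined and smooth there. Set $\gamma(t):=\pi\big((1-t)x+ty\big)$ for $t\in[0,1]$; this is a $C^1$ curve on $\M$ from $x$ to $y$, and $d(x,y)\le\length(\gamma)=\int_0^1|\gamma'(t)|\,dt$.

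The key estimate is then a bound on $|\gamma'(t)|$. Differentiating, $\gamma'(t)=D\pi_{z(t)}(y-x)$ where $z(t)=(1-t)x+ty$. The derivative of the nearest-point projection at a point $z$ with footpoint $q=\pi(z)$ is the orthogonal projection onto $T_q\M$ composed with $(\id-\mathrm{II}_q(\nu,\cdot))^{-1}$-type factors, where $\nu=(z-q)/|z-q|$ and $\mathrm{II}$ is the second fundamental form; the precise statement is that $D\pi_z$ acts on $T_q\M$ with eigenvalues $(1-\kappa_i\, \dist(z,\M))^{-1}$ against the principal directions, and kills the normal component. Since the principal curvatures are bounded by $1/\rc$ (as noted in the excerpt, citing \cite{NiSmWe08}) and $\dist(z(t),\M)\le |z(t)-x|\le|x-y|$, we get the operator-norm bound $\|D\pi_{z(t)}\|\le (1-|x-y|/\rc)^{-1}$. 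With $|x-y|\le\rc/2$ this gives $\|D\pi_{z(t)}\|\le 2$, hence $d(x,y)\le 2|x-y|$ as a crude first bound — but we need the sharper cubic remainder. To get it, I would instead project $y-x$ onto $T_x\M$ and its orthogonal complement: write $y-x = v + w$ with $v\in T_x\M$, $w\perp T_x\M$. A reach bound gives $|w|\le \frac{1}{2\rc}|x-y|^2$ (the manifold curves away from its tangent plane at a controlled quadratic rate). Then estimate $\length(\gamma)$ more carefully, or — cleaner — choose the competitor curve to be the exponential-map image of the tangential part: let $\tilde\gamma(t)=\exp_x(t\,v')$ for a suitable $v'\in T_x\M$ with $\exp_x(v')=y$ if $y$ is inside the injectivity radius (which it is, after possibly using $|x-y|\le\rc/2$ together with $\rc\le i_0$-type comparisons), so that $d(x,y)=|v'|_x$, and then relate $|v'|$ to $|x-y|$ via the embedding. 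The relation $\big||x-y| - |v'|\big| \lesssim |v'|^3/\rc^2$ follows from a second-order Taylor expansion of the embedding along the geodesic: $x - y = -v' - \tfrac12\,\mathrm{II}_x(v',v') + O(|v'|^3/\rc^2)$, and since $\mathrm{II}_x(v',v')\perp v'$, Pythagoras gives $|x-y|^2 = |v'|^2 + O(|v'|^4/\rc^2)$, i.e. $|x-y| = |v'|\big(1+O(|v'|^2/\rc^2)\big)$, which rearranges to $d(x,y)=|v'|\le |x-y| + \tfrac{8}{\rc^2}|x-y|^3$ once the universal constants in the Taylor remainder are tracked and the hypothesis $|x-y|\le\rc/2$ is used to absorb lower-order terms into the stated constant $8$.

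The main obstacle is bookkeeping the constant: one must be careful that the $O(|v'|^3/\rc^2)$ error in the Taylor expansion of the embedding, together with the passage from $|v'|^2=|x-y|^2+O(|v'|^4/\rc^2)$ back to a bound on $|v'|=d(x,y)$, really collapses to the clean constant $8$ rather than something larger — this requires using $\dist(z,\M)\le|x-y|\le\rc/2$ at the right moments and controlling all curvature terms by $1/\rc$ uniformly via the reach (the bound relating principal curvatures and higher derivatives of the embedding to $1/\rc$ is the crucial geometric input, and it is essentially the statement that the unit normal bundle map is $1/\rc$-Lipschitz). A secondary, purely technical point is justifying that the competitor curve (projected chord or tangential geodesic) stays inside the region where $\pi$, resp.\ $\exp_x^{-1}$, is a well-behaved diffeomorphism; this is where the hypothesis $|x-y|\le\rc/2$ is used, possibly in combination with the standard fact that $i_0$ and $\rc$ are comparable up to curvature, though one can also avoid invoking $i_0$ altogether by working directly with the projection $\pi$ and never passing to normal coordinates.
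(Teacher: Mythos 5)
Your proposal is correct in spirit and lands on the same two geometric inputs as the paper: the reach bounds the extrinsic curvature of geodesics by $1/\rc$ (so $|\ddot\gamma|\le 1/\rc$ for an arc-length geodesic $\gamma$, via \cite[Prop.\ 6.1]{NiSmWe08}), and the orthogonality $\dot\gamma\perp\ddot\gamma$ is what makes the chord-arc error cubic rather than quadratic. The packaging, however, differs. The paper first quotes \cite[Prop.\ 6.3]{NiSmWe08}, $d(x,y)\le\rc-\rc\sqrt{1-2|x-y|/\rc}$, to get the crude $d(x,y)\le 2|x-y|$ directly; you instead derive the factor $2$ from the operator norm of $D\pi$, which is a legitimate alternative but slightly heavier machinery. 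For the refinement, the paper integrates $\dot\gamma(s)\cdot\dot\gamma(L)$, obtaining $|x-y|\ge\gamma(L)\cdot\dot\gamma(L)=L-\int_0^L\!\int_s^L\!\int_r^L\ddot\gamma(r)\cdot\ddot\gamma(z)\,dz\,dr\,ds\ge L-L^3/\rc^2$, which only ever touches $\ddot\gamma$; your sketch phrases the same estimate as a second-order Taylor expansion of the embedding along the geodesic followed by Pythagoras via $v'\perp\mathrm{II}_x(v',v')$. Be careful with the remainder here: if you invoke a Lagrange-form $O(|v'|^3)$ error you would be implicitly bounding $\dddot\gamma$, which involves $\nabla\mathrm{II}$ and is \emph{not} controlled by the reach; you need to keep the remainder in integral form so that only $\ddot\gamma$ appears (which is exactly what the paper's triple integral does). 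Finally, your worry about the injectivity radius is moot: by compactness a length-minimizing geodesic from $x$ to $y$ always exists and the computation runs directly on it, with no need to invoke $i_0$ or normal coordinates. With those two caveats addressed, your outline reproduces the paper's proof by a mildly different route.
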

\begin{proof}
The inequality $\abs{x-y} \leq d(x,y)$ is trivial. 
To show the other inequality we note that since $\abs{x-y} \leq \frac{\rc}{2}$, it follows from \cite[Prop 6.3]{NiSmWe08} that  
\[  d(x,y) \leq \rc - \rc \sqrt{1- \frac{2\abs{x-y}}{\rc}}.  \]
Using the fact that for every $t \in[0,1]$, $\sqrt{1-t}\geq 1- \frac12 t - \frac12 t^2 $ 
\begin{equation}  \label{temp:dest}
  d(x,y) \leq \rc - \rc \left( 1- \frac{\abs{x-y}}{\rc} - \frac{2}{\rc^2} \abs{x-y}^2 \right)  = \abs{x-y}+ \frac{2}{\rc}\abs{x-y}^2 \leq 2 |x-y|.
\end{equation}
To improve the error estimate let $L=d(x,y)$ and let $\gamma:[0,L] \to \M$ be an arc-length-parameterized length-minimizing geodesic between $x$ and $y$. Heuristically, $\gamma$ is a ``straight" line in $\M$ and thus its curvature in $\R^d$ is bounded by the maximal principal curvature of $\M$ in $\R^d$, which is bounded by $\frac{1}{\rc}$. More precisely we claim that 
\begin{equation} \label{curvgam}
 \abs{\ddot \gamma(t)} \leq \frac{1}{\rc^2} \qquad \textrm{for all } t \in [0, L].
\end{equation}
This statement follows from \cite[Prop 6.1]{NiSmWe08} 
(and is used in the proof of Proposition 6.3 of \cite{NiSmWe08}).
Using translation we can assume that $x=0$. Furthermore note that
that $\dot \gamma(t) \cdot \ddot \gamma(t)=0$ for all $t$. Thus
\begin{align} \label{eq:124b}
\begin{split}
\abs{x-y} = \abs{\gamma(L)} & \geq \gamma(L) \cdot \dot \gamma(L)  = \int_0^L \dot \gamma(s) \cdot \dot \gamma(L) ds \\
& = \int_0^L \left( \dot \gamma(L) - \int_s^L \ddot \gamma(r) dr \right) \cdot \dot \gamma(L)  \,ds \\
& = L - \int_0^L \int_s^L \int_r^L \ddot \gamma(r) \cdot \ddot \gamma(z) dz dr ds 
 \geq L - \frac{L^3}{\rc^2}
\end{split}
\end{align}
Combining with \eqref{temp:dest} implies $L \leq |x-y| + \frac{8}{\rc^2} |x-y|^3$.
\end{proof}

\section{The $\infty$-transportation distance}
\label{sec:transport}

The main goal of this section is to prove Theorem \ref{thm:transport}. For that purpose, we use a similar proof scheme to the one used in \cite{GTS15a}. 
We first establish Lemma \ref{lemma:W8L8} and then we construct a ``nice'' partition of the manifold $\M$ by using a Voronoi tessellation using some (fixed) appropriately chosen points; 
what makes the partition nice is that each of its cells is bi-Lipschitz homeomorphic (with universal bi-Lipschitz constant) to a fixed ball in $\R^m$ where we can apply the results from \cite{GTS15a}. 
In Section \ref{sec:NicePartition} we present the construction of such partition and prove Theorem \ref{thm:transport}.

Throughout this section, we make use of the following construction and estimates. 
Let $r = \frac15 \min\{1, i_0, \frac{1}{\sqrt K}  \}.$
Let $Y = \{y_i : i \in I\}$ be a maximal subset of $\M$ such that $d(y_i, y_j) \geq r$ for all $i \neq j$. 
Note that the balls $\left\{ B_\M(y_i,r/2) \right\}_{i \in I}$ do not overlap. From \eqref{eqn:EstimateVolumeBall2}, we conclude that $\nt \coloneqq \card Y$ satisfies 
\[ \nt (1+ C mK r^2 )^{-1}  \frac{\omega_m r^m}{2^m} \leq \sum_{i\in I} \vol\left(B_\M(y_i,r/2)\right) \leq \vol(\M)  \]
and hence
\begin{equation}
\nt  \leq (1+ C m K r^2) \frac{2^m\vol(\M)}{\omega_m r^m}.
 \label{ineq:numberballs}
\end{equation}
From now on we list the elements of $Y$ as $y_1, \dots, y_\nt $.
It follows from the maximality of $Y$ that the collection of balls $\left\{ B_\M(y_i,r) \right\}_{i=1,\dots,\nt }$ covers $\M$. We also claim that if $\dist(y_i, y_j) \leq 2r$, 
then the balls $B_\M(y_i,2r)$ and $B_\M(y_j,2r)$ have a ``big'' overlap in the sense that
\begin{equation}
 (1+ C mK r^2)^{-1} \omega_m r^m  \leq \vol( B_\M(y_i,2r) \cap B_\M(y_j,2r)).   
\label{ineq:sizeoverlap}
\end{equation}
In fact, let $y_{ij}$ be the point that is halfway from $y_i$ to $y_j$ on the geodesic connecting $y_i$ and $y_j$. Let $y \in B_\M(y_{ij},r)$. 
Then $\dist(y,y_i ) \leq \dist(y,y_{ij})+ \dist(y_{ij}, y_i) < r + r \leq 2r$. This shows that $B_\M(y_{ij}, r) \subseteq B_\M(y_i,2r)$. Similarly, we have $B_\M(y_{ij}, r) \subseteq B_\M(y_j,2r)$. Inequality \eqref{ineq:sizeoverlap} now follows from the fact that
\[   (1+ C mK r^2)^{-1}  \omega_m r^m      \leq \vol(B_\M(y_{ij}, r))\]

We now claim that for arbitrary $y_i,y_j$, there is a way to start from $y_i$ and move from ball to ball until reaching $y_j$ in such a way that any two consecutive balls visited have big overlap, i.e.\ that \eqref{ineq:sizeoverlap} holds. 
To make this idea precise, let us consider a graph $(Y, \leftrightarrow)$ where 
\begin{equation}
y_j \leftrightarrow y_i \,\text{ iff }\, y_j \not = y_i \text{ and } \dist(y_j, y_i ) \leq 2r.
\label{def:GraphNicePartition}
\end{equation}

 We claim that $(Y, \leftrightarrow)$ is a connected graph; this is a consequence of the connectedness of $\M$. In fact, suppose for the sake of contradiction that  $(Y, \leftrightarrow)$ is not connected. 
 Then, we can find a partition of $Y$ into two nonempty sets $S_1,S_2$ such that for all $y_i \in S_1$ and all $y_j \in S_2$, $y_i \not \leftrightarrow y_j$ (i.e. $d(y_i, y_j) > 2r$). Because of this, we can find $\veps>0$ such that 
\[  \bigcup_{y \in S_1} B_\M\left(y, r+ \veps\right) \cap  \bigcup_{y\in S_2} B_\M\left(y,r +\veps\right) = \emptyset, \]
but since
\[   \M = \bigcup_{i=1,\dots,\nt } B_\M\left(y_i,r\right)\subseteq  \bigcup_{y \in S_1} B_\M\left(y, r+ \veps\right) \cup  \bigcup_{y\in S_2} B_\M\left(y,r +\veps\right),  \]
this implies that $\M$ is disconnected, which is not true. Hence, we conclude that the graph $(Y, \leftrightarrow)$ is connected. We are now ready to prove Lemma \ref{lemma:W8L8}.

\begin{proof}[Lemma \ref{lemma:W8L8}]
In order to estimate $d_\infty(\rho_1,\rho_2)$, the idea is to construct intermediate densities and estimate the distances between them using \cite[Theorem 1.2]{GTS15a}. But to use \cite[Theorem 1.2]{GTS15a} we need to map the intermediate densities
to the the Euclidean space. Motivated by this, we consider the balls $B_\M(y_1,2r),\dots, B_\M(y_\nt ,2r)$ constructed before. By relabelling if necessary, the connectedness of the graph $(Y,\leftrightarrow)$ implies that we can assume that for every $k=1,\dots,\nt $, the graph 
 $(\left\{y_1,\dots,y_k \right\},\sim)$ is connected. For $k=1,\dots,\nt $, we define the sets 
\[  I_k:= B_\M(y_k,2r) \setminus \bigcup_{j=1}^{k-1}B_\M(y_j,2r), \quad  O_k:= B_\M(y_k,2r) \cap \bigcup_{j=1}^{k-1}B_\M(y_j,2r) .\]
Note that $I_1 = B(y_1,2r)$ and $O_1= \emptyset$. We define the functions $\gamma_k^+, \gamma_k^-, \tilde{\rho}_k   $ iteratively as follows. Let us start with $k=\nt $. If $\int_{I_\nt } \rho_1 dx \geq \int_{I_\nt } \rho_2 dx$
we set $\gamma_\nt ^+=\rho_1$ and $\gamma_\nt ^-= \rho_2$; if not, we reverse the roles of $\rho_1$ and $\rho_2$. We let $\tilde{\rho}_\nt $ be 

\[  \tilde\rho_\nt (x) = \begin{cases} 
      \gamma_{\nt }^-(x) & \text{if }x\in I_\nt , \\
      \gamma_{\nt }^+(x) + \beta_\nt  & \text{if }x \in  O_\nt , \\
      \gamma_\nt ^+(x) & \text{otherwise}, 
   \end{cases}  \]
where
\[  \beta_\nt := \frac{ \int_{I_\nt }(\gamma_\nt ^+ - \gamma_\nt ^-  ) dx }{\vol(O_\nt )} . \]
Having defined the functions $\gamma^+, \gamma^-, \tilde{\rho}$ for 
the iterations $\nt ,\nt -1,\dots,k+1$, we define the functions $\gamma_k^{+},\gamma_k^-, \tilde{\rho}_k$ as follows. 
If $\int_{I_k} \gamma_{k+1}^- dx \geq \int_{I_k} \tilde{\rho}_{k+1} dx$
we set $\gamma_k^+=\gamma_{k+1}^-$ and $\gamma_k^-= \tilde{\rho}_{k+1}$; if not, we reverse the roles of $\gamma_{k+1}^-$ and $\tilde{\rho}_{k+1}$. The function $\tilde{\rho}_k$ is defined as 

\[  \tilde\rho_k(x) = \begin{cases} 
      \gamma_{k}^-(x) & \text{if }x\in I_k, \\
      \gamma_{k}^+(x) + \beta_k & \text{if }x \in  O_k, \\
      \gamma_k^+(x) & \text{otherwise}, 
   \end{cases}  \]
where
\[  \beta_k:= \frac{ \int_{I_k}(\gamma_k^+ - \gamma_k^-  ) dx }{\vol(O_k)}.  \]
We note that $\tilde{\rho}_1= \gamma_1^-$ and set $\beta_1:=0$. Also, observe that for every $k$, $\beta_k \geq 0$ and
\[ \int_{\M} \gamma_k^- dx= \int_{\M} \gamma_k^+ dx= \int_{\M} \tilde{\rho}_k dx, \]
where the second equality follows from the definition of $\beta_k$ and where the first equality follows iteratively from the definitions above.

Using  the triangle inequality and the above definitions we obtain
\begin{align*}
d_\infty(\rho_1  , \rho_2 ) &= d_\infty(\gamma_\nt ^+ , \gamma_\nt ^- )
\\ & \leq d_\infty( \gamma_\nt ^+ , \tilde{\rho}_\nt  ) +  d_\infty( \tilde{\rho}_\nt  ,  \gamma_\nt ^-  ) 
\\&  = d_\infty( \gamma_\nt ^+ , \tilde{\rho}_\nt  ) + d_\infty( \gamma_{\nt -1}^+, \gamma_{\nt -1}^-  )
\\& \leq d_\infty( \gamma_\nt ^+ , \tilde{\rho}_\nt  ) + d_\infty( \gamma_{\nt -1}^+, \tilde{\rho}_{\nt -1}  ) + d_\infty(\tilde{\rho}_{\nt -1} , \gamma_{\nt -1}^- ).
\end{align*}
Continuing the chain of inequalities provides, by induction,
\[ d_\infty(\rho_1, \rho_2) \leq \sum_{k=1}^{\nt } d_\infty( \gamma_k^+ , \tilde{\rho}_k ).\] 
Our goal is to estimate each of the terms $d_\infty(\gamma_k^+ , \tilde{\rho}_k)$. From the definitions above, it is straightforward to see that $\gamma_k^+$ and $\tilde{\rho}_k$ coincide in $\M \setminus B_\M(y_k,2r)$ and thus
\begin{equation}
d_\infty(\rho_1,\rho_2) \leq \sum_{k=1}^{\nt } d_\infty( \gamma_k^+ , \tilde{\rho}_k ) \leq \sum_{k=1}^{\nt } d_\infty( \gamma_k^+ \vert_{B_\M(y_k,2r)}, \tilde{\rho}_k \vert_{B_\M(y_k,2r)}   ).  
\label{estimated8Partition}
\end{equation}
The last inequality is a consequence of the following observation: if two measures $\nu_1,\nu_2$ give the same total mass and we can write $\nu_1= \nu + \tilde{\nu}_1$ and $\nu_2= \nu + \tilde \nu_2$, 
then one possible way to transport mass from $\nu_1$ into $\nu_2$ is to leave the mass distributed as $\nu$ where it is and simply focus on transporting the mass distributed as $\tilde{\nu}_1$ to have it distributed as $\tilde{\nu}_2$. This observation leads to the desired inequality.  

In order to obtain an estimate on $ d_\infty( \gamma_k^+ \vert_{B_\M(y_k,2r)}  , \tilde{\rho}_k \vert_{B_\M(y_k,2r)} ) $, we first estimate $\lVert \gamma_k^+ - \tilde{\rho}_k  \rVert_{L^\infty(B_\M(y_k,2r))}$. From the definitions above we have
\begin{equation}
 \lVert \gamma_k^+ - \tilde{\rho}_k  \rVert_{L^\infty(B(y_k,2r)} \leq \max \left\{  \lVert \gamma_k^+ - \gamma_k^-  \rVert_{L^\infty(I_k)}    ,\beta_k \right\}. 
\label{Linftyestimate2}
\end{equation}
Hence, we focus on obtaining estimates for $\lVert \gamma_k^+ - \gamma_k^-  \rVert_{L^\infty(I_k)} $ and $\beta_k$.

First, we claim that for every $k$, the function $(\gamma_k^+ - \gamma_k^-) \mathds{1}_{I_k}$ has the form 
\begin{equation}
 (\gamma_k^+ - \gamma_k^-) \mathds{1}_{I_k} = \pm (\rho_1 - \rho_2) \mathds{1}_{I_k} + \sum_{j=k+1}^{\nt } \pm \beta_j \mathds{1}_{I_k \cap O_j}. 
\label{gamma+gamma-}
\end{equation}
To see this, note that in case $k=\nt $ the result is trivial. In general, from the definitions above it follows that
\begin{align*}
\begin{split}
(\gamma_k^+ - \gamma_k^-) \mathds{1}_{I_k} &= \pm ( \gamma_{k+1}^- - \tilde{\rho}_{k+1} ) \mathds{1}_{I_k} 
\\ &= \pm ( (\gamma_{k+1}^-  -\gamma_{k+1}^+ ) \mathds{1}_{I_k }  -  \beta_{k+1} \mathds{1}_{I_k \cap O_{k+1}} )    
\\&= \pm (\gamma_{k+1}^-  -\gamma_{k+1}^+ ) \mathds{1}_{I_k } + \pm  \beta_{k+1} \mathds{1}_{I_k \cap O_{k+1}}
\\&= \pm (\gamma_{k+2}^-  -\tilde{\rho}_{k+2} ) \mathds{1}_{I_k  }  + \pm \beta_{k+1} \mathds{1}_{I_k \cap O_{k+1}}
\\&= \pm ( \gamma_{k+2}^-  -  \gamma_{k+2}^+ )\mathds{1}_{I_k } + \pm \beta_{k+2}  \mathds{1}_{I_k \cap O_{k+2}}  + \beta_{k+1}\mathds{1}_{I_k \cap O_{k+1}}.
\end{split}
\end{align*}
Continuing the chain of inequalities proves the claim in $\nt -k$ iterations. An immediate consequence of the previous fact is that for $k=2 , \dots, \nt $
\[  \beta_k = \pm \frac{\int_{I_k} (\rho_1-\rho_2)dx }{\vol(O_k)} + \sum_{j=k+1}^{\nt } \pm \beta_j \frac{\vol(I_k \cap O_j)}{\vol(O_k)},  \]
and in particular
\begin{equation}
 \beta_k \leq  \lVert \rho_1 - \rho_2 \rVert_{L^\infty(\M)} \frac{\vol(I_k)}{\vol(O_k)} + \sum_{j:k < j \leq \nt } \beta_j \frac{\vol(I_k \cap O_j )}{\vol(O_k)}, \quad \forall k =2,\dots,\nt . 
\label{Boundsbetak} 
\end{equation}
For every $k=2,\dots,\nt $ we claim that
\begin{equation}
\beta_k \leq \lVert \rho_1-\rho_2 \rVert_{\infty} \left(   \sum a_{j_1\dots j_s}  \right), 
\label{relationajs}
\end{equation}
where the sum is taken over all $s \leq N_c-k$ and all $s$-tuples $\nt \geq j_1 > j_2 > \dots> j_{s-1}> j_s=k $, and where 
\[  a_{j_1\dots j_s} := \frac{\vol(I_{j_1})}{\vol(O_{j_1})} \cdot \frac{\vol(I_{j_2} \cap O_{j_1})}{\vol(O_{j_2})} \dots  \frac{\vol(I_{j_{s-1} \cap O_{j_{s-2}}})}{\vol(O_{j_{s-1}})}\cdot \frac{\vol(I_{j_s} \cap O_{j_{s-1}})}{\vol(O_{j_s})}.  \]
In fact, relation \eqref{relationajs} is obtained inductively by using recursion \eqref{Boundsbetak} and the fact that $\beta_\nt  \leq \lVert \rho_1 - \rho_2 \rVert_{L^\infty(\M)} \frac{\vol(I_\nt )}{\vol(O_\nt )} $. Let us now fix $s$ with $0 \leq s \leq \nt -k $ and $k'$ with $k +s \leq k' \leq \nt $; set $j_1=k' $ and $j_s=k$. Let us write $a_{j_1\dots j_s}$ in the more convenient way:
\[  a_{j_1\dots j_s} = \frac{\vol(I_{j_1})}{\vol(O_{j_s})}  \cdot \frac{\vol(I_{j_2} \cap O_{j_1})}{\vol(O_{j_1})} \dots  \frac{\vol(I_{j_{s-1} \cap O_{j_{s-2}}})}{\vol(O_{j_{s-2}})}\cdot \frac{\vol(I_{j_s} \cap O_{j_{s-1}})}{\vol(O_{j_{s-1}})}.  \]
Note that
\[  a_{j_1\dots j_s} \leq \frac{\vol(I_{j_1})}{\vol(O_{j_s})}  \cdot \frac{\vol(I_{j_2} \cap O_{j_1})}{\vol(O_{j_1})} \dots  \frac{\vol(I_{j_{s-1} \cap O_{j_{s-2}}})}{\vol(O_{j_{s-2}})},  \]
and therefore summing over $j_{s-1}$ we obtain
\begin{align*}
 \sum_{ j_{s-1}} a_{j_1\dots j_s} \leq  & \frac{\vol(I_{j_1})}{\vol(O_{j_s})}  \frac{\vol(I_{j_2} \cap O_{j_1})}{\vol(O_{j_1})} \dots \frac{\vol(I_{j_{s-2}} \cap O_{j_{s-3}})}{\vol(O_{j_{s-3}})} \\
 &  \sum_{j_{s-1}} \frac{\vol(I_{j_{s-1}} \cap O_{j_{s-2}})}{\vol(O_{j_{s-2}})}. 
 \end{align*}
Observe that the sum on the right hand side of the above expression is less than one because the sets $I_{j_{s-1}}$ are disjoint. Proceeding in this fashion adding over $j_{s-2}, \dots, j_{2}$ we conclude that
\[ \sum_{ j_2  \dots j_{s-1} }a_{j_1\dots j_s}  \leq \frac{\vol(I_{k'})}{\vol(O_k)}. \]

Finally, first summing over all such $s$ and then over all such $k'$, it follows from \eqref{relationajs} that 
\begin{align}
\label{estimatebetak}
\begin{split}
 \beta_k \leq  & \lVert \rho_1-\rho_2 \rVert_{\infty} \sum a_{j_1,\dots,j_s} \leq \lVert \rho_1-\rho_2 \rVert_{\infty} \sum_{k<k' \leq \nt  } \frac{\nt \vol(I_{k'})}{\vol(O_k)} \\ 
 \leq & \lVert \rho_1-\rho_2 \rVert_{\infty}  \frac{\nt \vol(\M)}{\vol(O_k)}
\end{split}
\end{align}
where in the last inequality we have used the fact that the sets $I_{k'}$ are disjoint.

Going back to \eqref{Linftyestimate2}, we note that from \eqref{gamma+gamma-} and \eqref{estimatebetak} it follows that for every $k=1,\dots,\nt $
\begin{align}
\begin{split}
 \lVert \gamma_k^+ - \tilde{\rho}_k & \rVert_{L^\infty(B_\M(y_k,2r))}  \leq \lVert \rho_1 - \rho_2  \rVert_{L^\infty(\M)} + \sum_{j=k}^{\nt }\beta_j 
\\& \leq \lVert \rho_1 - \rho_2\rVert_{L^\infty(\M)} \left( 1 +  \nt ^2 \vol(M) \max_{j=2,\dots, \nt } \frac{1}{\vol(O_j)} \right) 
\\& \leq  \lVert \rho_1 - \rho_2\rVert_{L^\infty(\M)} \left( 1 +   \frac{C_m \nt ^2 \vol(M)}{r^m} \right), 
\end{split}
\label{estimateL8gamma+}
\end{align}
where the last inequality follows from the lower bound on the size of the overlaps \eqref{ineq:sizeoverlap}. 

Now we notice that from the standing assumption  $\rho_1(x) , \rho_2(x) \geq \frac{1}{\alpha}$ for every $x \in \M$, it follows that for every $k=1,\dots,\nt $ and every $x \in \M$
\[  \gamma_k^+(x) , \gamma_k^-(x), \tilde{\rho}_k(x) \geq \frac{1}{\alpha}  \quad \text{for all } x \in \M . \]
Likewise, from the standing assumption $\rho_1(x) , \rho_2(x) \leq \alpha$ for all $x \in \M$, it follows that for every $k=1,\dots,\nt $ and every $x \in \M$
\[  \gamma_k^+(x) , \gamma_k^{-}(x) , \tilde{\rho}_k(x) \leq \alpha + \sum_{j=1}^{\nt }\beta_j  \leq \alpha  + \lVert \rho_1 - \rho_2 \rVert_{L^\infty(\M)} \frac{C_m \nt ^2 \vol(\M)}{r^m}      \]

Assume for a moment that $\lVert \rho_1 -\rho_2\rVert_{L^\infty(\M)}$  is small enough so that in particular $\lVert \rho_1 - \rho_2 \rVert_{L^\infty(\M)} \frac{C_m \nt ^2 \vol(\M)}{r^m}  \leq \alpha$.
In that case, for every $k=1,\dots, \nt $ we would have  
\begin{equation}
\frac{1}{\alpha} \leq \gamma_k^+, \tilde{\rho}_k \leq 2 \alpha.
\label{boundsalpha}
\end{equation}

Consider the exponential map $\exp_{y_k} \colon B( 2r) \subseteq T_{y_k}\M \rightarrow B_\M(y_k, 2r) \subseteq \M$ and the functions $g_1, g_2\colon B(2r) \rightarrow (0,\infty)$ 
defined as
\[ g_1(v) \coloneqq \gamma_k^+(\exp_{y_k}(v))J_{y_k}(v) \]
and
\[ g_2(v)\coloneqq  \tilde{\rho}_k(\exp_{y_k}(v))  J_{y_k}(v),\]
where $J_{y_k}$ denotes the Jacobian of the exponential map.  From \eqref{boundsalpha}, \eqref{estimateL8gamma+} and \eqref{eqn:EstimateJacobian} we conclude that
\begin{equation}
\frac{1}{\alpha Cm(1+ Kr^2)}  \leq g_i(v) \leq \alpha Cm(1+Kr^2) \quad \te{for } i=1,2 \te{ and all } v \in B(2r)
\label{ineq:auxbounds}
\end{equation}
and that for all $v \in B(2r) $
\begin{align}
\begin{split}
\abs{g_1(v) - g_2(v)} & \leq   (1+Cm K r^2) \abs{\gamma_k^+(\exp_{y_\nt }(v)) - {\tilde{\rho}_k}(\exp_{y_\nt }(v))}
\\  & \leq \frac{C_m \nt ^2 \vol(\M)}{r^m} \lVert \rho_1-\rho_2 \rVert_{L^\infty(\M)}
\end{split}
\label{ineq:auxdistance}
\end{align}
We recall that our choice of $r$ in particular gurantees that $r^2K \leq 1$. Applying \cite[Theorem 1.2]{GTS15a} to the densities $g_1$ and $g_2$ with the bounds given by \eqref{ineq:auxbounds} we conclude that
\[ d_\infty(g_1 , g_2 ) \leq C_{m,\alpha} r \lVert g_1- g_2 \rVert_{L^\infty(B(2r))} \leq   \frac{C_{m,\alpha} \nt ^2 \vol(\M)}{r^{m-1}}  \lVert \rho_1-\rho_2 \rVert_{L^\infty(\M)}, \footnote{Note that as stated, our theorems give $C_{m,\alpha,r }$ , but in this case $C_{m,\alpha,r }= C_{m,\alpha } r$ because we can always rescale to the unit ball.  }  \]
where the last inequality follows from \eqref{ineq:auxdistance}. From the second part of Proposition \ref{prop:expbilip}, it follows that
\[ d_\infty(  \gamma_k^+  , \tilde{\rho}_k ) \leq 2  d_\infty(g_1, g_2   ) \leq   \frac{C_{m,\alpha} \nt ^2 \vol(\M)}{r^{m-1}}  \lVert\rho_1-\rho_2\rVert_{L^\infty(\M)}. \]
Therefore, using \eqref{estimated8Partition} it follows that if
\[ \lVert \rho_1 - \rho_2 \rVert_{L^\infty(\M)} \frac{C_m \nt ^2 \vol(\M)}{r^m}  \leq \alpha, \] 
then
\[  d_\infty(\rho_1, \rho_2) \leq    \frac{C_{m,\alpha} \nt ^3 \vol(\M)}{r^{m-1}}  \lVert\rho_1-\rho_2\rVert_{L^\infty(\M)}. \]
In case $\lVert \rho_1 - \rho_2 \rVert_{L^\infty(\M)} \frac{C_m \nt ^2 \vol(\M)}{r^m} >  \alpha \geq 1$, we have
\begin{align*}
 d_\infty(\rho_1,\rho_2) &\leq \diam(\M) 
\\& \leq   \frac{C_m \nt ^2\vol(\M) \diam(\M)}{r^m} \lVert \rho_1 - \rho_2 \rVert_{L^\infty(\M) } ,
\end{align*}
where we note that the first inequality in the above expression is always true, as the maximum distance any point can travel in $\M$ is $\diam(\M)$. Therefore, in any case we have 
\begin{equation*}
d_\infty(\rho_1, \rho_2) \leq \tilde{C}\lVert  \rho_1 - \rho_2 \rVert_{L^\infty(\M)},
\end{equation*}  
where $\tilde{C}$ can be written as
\begin{equation}
 \tilde{C}= \frac{C_{m,\alpha} \nt ^2 \vol(\M)}{r^{m-1}} \max \left\{  \nt  , \frac{\diam (\M)}{r } \right\}.
 \label{tildeC}
\end{equation}
\end{proof}
\nc

\subsection{Proof of Theorem \ref{thm:transport}}
\label{sec:NicePartition}

In the following, we consider the Voronoi tessellation induced by the set $Y=\{y_1,\dots,y_\nt \}$ constructed in the beginning of Section \ref{sec:transport}, i.e.\ for each $i\in\{1,\dots,\nt \}$ we define
\[ V_\M(y_i) \coloneqq \{ x\in \M : d(x,y_i) \leq d(x,y_j) \text{ for all } j\in\{1,\dots,\nt \}\} .\]
These measurable sets form a partition of $\M$ up to a negligible set of ambiguity of measure zero.
We make use of the following.
\begin{proposition}
For each $i\in\{1,\dots,\nt \}$ 
there exists a bi-Lipschitz bijection $\Psi_i \colon  V_\M(y_i) \to \overline{B\left(0, \frac{r}{2} \right)} \subseteq \R^m$ with bi-Lipschitz constant at most $18$. 
\label{cor:VoronoiBiLipschitz}
\end{proposition}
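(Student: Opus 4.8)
The plan is to realise $\Psi_i$ as a composition of the inverse Riemannian exponential chart centred at $y_i$ with an explicit radial rescaling; the whole difficulty then reduces to controlling the shape of the Voronoi cell $V:=V_\M(y_i)$ inside that chart.

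First I would record two elementary geometric facts about $V$ that come purely from $Y$ being a maximal $r$-separated set. Since the balls $B_\M(y_j,r)$ cover $\M$, every $x\in V$ lies in some $B_\M(y_j,r)$, so $d(x,y_i)\le d(x,y_j)<r$; and if $d(x,y_i)\le r/2$ then for $j\ne i$ we get $d(x,y_j)\ge d(y_i,y_j)-d(x,y_i)\ge r-r/2\ge d(x,y_i)$. Hence $\overline{B_\M(y_i,r/2)}\subseteq V\subseteq B_\M(y_i,r)$. Moreover $V$ is star-shaped with respect to $y_i$: if $x\in V$ and $z$ lies on the (unique, as $r<i_0$) minimizing geodesic from $y_i$ to $x$, then $d(z,y_j)\ge d(x,y_j)-d(x,z)\ge d(x,y_i)-d(x,z)=d(z,y_i)$ for every $j\ne i$, so $z\in V$. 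Now $\exp_{y_i}$ is a diffeomorphism on $B(0,3r)$ (as $3r<i_0$) which carries segments through the origin to minimizing geodesics through $y_i$, and arguing as in Proposition \ref{prop:expbilip}, now on $B(0,3r)$ via \eqref{expderest}, it is bi-Lipschitz with constant $2$ from $W_i:=\exp_{y_i}^{-1}(V)$ onto $V$: the upper bound comes from pushing forward the Euclidean segment between two points of $W_i$ (contained in the convex set $B(0,r)$), and the lower bound from pulling back a minimizing geodesic of $\M$ between two points of $V$, which stays in $B_\M(y_i,3r)$. Thus $W_i\subseteq\R^m$ is a compact set, star-shaped with respect to $0$, with $\overline{B(0,r/2)}\subseteq W_i\subseteq B(0,r)$.

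The second step is radial rescaling onto the round ball. Let $\phi\colon S^{m-1}\to[r/2,r]$ be the radial function of $W_i$, $\phi(\theta):=\max\{t\ge0:t\theta\in W_i\}$, and define $\Phi(x):=\frac{r/2}{\phi(x/|x|)}\,x$ for $x\ne0$ and $\Phi(0):=0$; this is a bijection of $W_i$ onto $\overline{B(0,r/2)}$. Using $\phi\ge r/2$ and the elementary bound $\bigl|\frac{x_1}{|x_1|}-\frac{x_2}{|x_2|}\bigr|\le\frac{2|x_1-x_2|}{\max\{|x_1|,|x_2|\}}$, one checks directly that the bi-Lipschitz constant of $\Phi$ is bounded solely in terms of $r^{-1}\mathrm{Lip}(\phi)$. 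I would then set $\Psi_i:=\Phi\circ\exp_{y_i}^{-1}$, whose bi-Lipschitz constant is at most twice that of $\Phi$; so it remains only to bound $\mathrm{Lip}(\phi)$ by a universal multiple of $r$, after which a little bookkeeping gives the stated constant $18$ (a factor $2$ from the chart and a factor at most $9$ from $\Phi$).

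The main obstacle is exactly this: showing that $\phi$ is Lipschitz with a constant that is universal, independent of $\M$ and of $i$ — equivalently, that $\partial V$ is a uniformly Lipschitz boundary when viewed radially from $y_i$. The point is that $\partial V$ is contained in a finite union of bisectors $\{x:d(x,y_i)=d(x,y_j)\}$, $j\ne i$. At a point $p$ of such a bisector, $d(p,y_i)=d(p,y_j)=\ell\in[r/2,r)$ while $d(y_i,y_j)\ge r$ by separation; comparing the geodesic triangle $(y_i,p,y_j)$ with its Euclidean model and using that $Kr^2\le\frac{1}{25}$ controls the comparison error, the angle at $p$ between the minimizing geodesics to $y_i$ and to $y_j$ is bounded below by a universal positive constant (at least roughly $60^\circ$). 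Since those geodesic directions are $\nabla d(\cdot,y_i)$ and $\nabla d(\cdot,y_j)$ at $p$, this forces $\nabla d(\cdot,y_i)(p)-\nabla d(\cdot,y_j)(p)$ to have norm bounded below, so each bisector piece is a smooth hypersurface which, in the coordinates $\exp_{y_i}^{-1}$ (whose metric distortion is $1+O(Kr^2)$ by \eqref{metdist}), is a graph of uniformly bounded slope; moreover the radial direction at the corresponding point of $\partial W_i$ — which is $O(Kr^2)$-close to $\nabla d(\cdot,y_i)(p)$ — makes an angle bounded away from $90^\circ$ with the normal of that graph, i.e.\ it meets $\partial W_i$ uniformly transversally. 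Transversality together with the uniform slope bound shows that the first exit point $\phi(\theta)\theta$ of the ray $t\mapsto t\theta$ from $W_i$ depends on $\theta$ in a Lipschitz way with a universal constant, being a minimum of finitely many such exit functions. This uniform lower bound on the bisector angle — which rests only on the packing and covering properties of $Y$ together with the curvature bound — is the heart of the matter; everything else is the estimate for $\Phi$ above. (The degenerate possibility $V=\M$ does not occur, since $r<i_0$ and a compact manifold without boundary is not a single geodesic ball of radius $r$.)
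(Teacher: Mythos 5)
Your plan follows the same architecture as the paper's proof: inner/outer ball inclusions for $V_\M(y_i)$, star-shapedness, pass through the exponential chart (costing a factor $2$ via Proposition~\ref{prop:expbilip}), radially rescale onto a round ball, and reduce everything to a uniform transversality estimate at $\partial V_\M(y_i)$. Where you genuinely diverge is in how that transversality bound is established, and it is worth recording the difference. The paper's Lemma~\ref{lemma:nrangle} is a short vectorial computation: at a boundary point $z$ on the bisector between $y_i$ and $y_j$, the outward normal is proportional to $\tilde u_i - \tilde u_j$ where $\tilde u_i, \tilde u_j$ are the unit radial directions from $z$ to $y_i,y_j$; the identity $\langle \tilde u_i+\tilde u_j,\tilde u_i-\tilde u_j\rangle=0$ immediately gives $\langle \tilde u_i, n\rangle = \tfrac12\lvert \tilde u_i-\tilde u_j\rvert_z$, and the lower bound $\lvert u_i-u_j\rvert_z \ge \tfrac12\, d(y_i,y_j)$ from Proposition~\ref{prop:expbilip} then yields $\langle \tilde u_i, n\rangle_z \ge 1/4$ without any comparison-geometry machinery. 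You instead invoke a Toponogov/Rauch comparison for the geodesic triangle $(y_i,z,y_j)$ to bound the angle at $z$ below by roughly $60^\circ$. This is correct in spirit and, modulo curvature error $O(Kr^2)\le C/25$, would even give a slightly better transversality constant than $1/4$ — but it is a heavier tool, and you then still have to push the bound through the chart and translate it into a Lipschitz bound for the radial function $\phi$. The paper isolates exactly that last step as Lemma~\ref{lemma:starbilip}, where the gradient of the extended radial function $\tilde r$ is controlled directly via $(DG)^T n = 0$, giving a bi-Lipschitz constant $1/\beta + 1$ from the normal-angle bound $\beta$. Your version — "bi-Lipschitz constant of $\Phi$ controlled by $r^{-1}\mathrm{Lip}(\phi)$, $\phi$ is a minimum of exit functions of the individual bisectors" — points at the same estimate but is vaguer on the part that actually needs care (the bisectors need not be global graphs, and you need the bound on $\lvert\nabla\tilde r\rvert$ in the form the paper derives it, not just local graph slopes). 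Also note: the worst case is $\ell\to r^-$, where the Euclidean angle tends to exactly $60^\circ$, so you cannot afford to absorb the curvature error naively; the paper's route sidesteps this edge case entirely. Your observation that $\nt\ge 2$ (so $V_\M(y_i)\ne\M$) is a correct and worthwhile sanity check that the paper leaves implicit.
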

To prove Proposition \ref{cor:VoronoiBiLipschitz} we use the sequence of lemmas that follow.

\begin{lemma} \label{lemma:nrangle}
For all $i\in \{1,\dots, \nt\}$
\[ B_\M (y_i,r/2)  \subset  V_\M(y_i) \subset B_\M(y_i, r). \]
 Let $V(y_i) = \exp_{y_i}^{-1}(V_\M(y_i) ).$
 Then $B\left(0, \frac{r}{2} \right) \subset  V(y_i) \subset B( r)$ and for almost every $z_0 \in \partial V(y_i)$
 \[ \frac{z_0}{\abs{z_0}} \cdot n_0 \geq \frac{1}{8}, \]
 where $n_0$ is the outward unit normal vector to  $\partial V(y_i)$ at $z_0$. 
\end{lemma}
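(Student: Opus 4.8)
The two inclusions on $\M$ are elementary. If $d(x,y_i)<r/2$ then, since $Y$ is $r$-separated, $d(x,y_j)\ge d(y_i,y_j)-d(x,y_i)>r-\tfrac r2>d(x,y_i)$ for every $j\ne i$, so $x\in V_\M(y_i)$; this gives $B_\M(y_i,r/2)\subset V_\M(y_i)$. Conversely, by maximality of $Y$ the balls $B_\M(y_j,r)$ cover $\M$, so every $x\in V_\M(y_i)$ has some $j$ with $d(x,y_j)<r$, whence $d(x,y_i)\le d(x,y_j)<r$; this gives $V_\M(y_i)\subset B_\M(y_i,r)$. Since $r\le i_0$, for each $s\le r$ the map $\exp_{y_i}$ carries $B(s)\subset T_{y_i}\M$ diffeomorphically onto $B_\M(y_i,s)$ (radial geodesics of length $<i_0$ are minimizing), so applying $\exp_{y_i}^{-1}$ to the chain of inclusions yields $B(r/2)\subset V(y_i)\subset B(r)$.

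\textbf{Reducing the cone condition to two scalar estimates.} The boundary $\partial V_\M(y_i)$ lies in the union of the finitely many bisectors $\{x:d(x,y_i)=d(x,y_j)\}$, $j\ne i$; on each such set the two distance functions are smooth (their common value is $\le r<i_0$) and have distinct gradients (equal gradients would force the two minimizing geodesics from $x$ to coincide, hence $y_i=y_j$), so each bisector is a smooth hypersurface, and off a negligible set (the locus where two or more bisectors meet, of codimension $\ge 2$) $\partial V_\M(y_i)$ is locally one of them. Pulling back by the diffeomorphism $\exp_{y_i}^{-1}$, for a.e.\ $z_0\in\partial V(y_i)$, with $x_0:=\exp_{y_i}(z_0)$, there is a unique $j\ne i$ realizing $d(x_0,y_i)=d(x_0,y_j)=\min_\ell d(x_0,y_\ell)=:L$, and $L=|z_0|\in[r/2,r]$ (the equality from normal coordinates, the bounds from the previous paragraph). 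Near $z_0$ we have $V(y_i)=\{\psi\ge 0\}$, $\partial V(y_i)=\{\psi=0\}$ with $\psi(v):=\phi_j(v)-|v|$, $\phi_j(v):=d(\exp_{y_i}(v),y_j)$, smooth near $z_0$ since $z_0\ne 0$ and $x_0$ is not in the cut locus of $y_j$. Writing $c:=\frac{z_0}{|z_0|}\cdot\nabla\phi_j(z_0)$ and $N:=|\nabla\phi_j(z_0)|$ for the gradient in $T_{y_i}\M$, one computes $|\nabla\psi(z_0)|^2=1-2c+N^2$ and, with $n_0=-\nabla\psi(z_0)/|\nabla\psi(z_0)|$,
\[ \frac{z_0}{|z_0|}\cdot n_0=\frac{1-c}{\sqrt{1-2c+N^2}} . \]
Thus it suffices to prove $c\le\tfrac78$ and $\tfrac{15}{16}\le N^2\le\tfrac{17}{16}$: then $1-2c+N^2\ge\tfrac{3}{16}>0$ (so $n_0$ is defined), $t\mapsto\frac{1-t}{\sqrt{1-2t+N^2}}$ is decreasing on $(-\infty,\tfrac78]$, and therefore $\frac{z_0}{|z_0|}\cdot n_0\ge\frac{1/8}{\sqrt{N^2-3/4}}\ge\frac{1/8}{\sqrt{17/16-3/4}}=\frac{1}{2\sqrt5}>\frac18$.

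\textbf{The estimates on $N$ and $c$.} Since $\phi_j=d(\cdot,y_j)\circ\exp_{y_i}$, $d(\cdot,y_j)$ has unit gradient, and $\exp_{y_i}$ is an isometry for the pulled-back metric $g$, the $g$-gradient $\nabla^g\phi_j$ has $g(z_0)$-length $1$; using $\nabla\phi_j=g(z_0)\nabla^g\phi_j$ we get $N^2=\langle\nabla^g\phi_j,g(z_0)^2\nabla^g\phi_j\rangle/\langle\nabla^g\phi_j,g(z_0)\nabla^g\phi_j\rangle$, a convex combination of the eigenvalues of $g(z_0)$, which by Rauch comparison (cf.\ \eqref{metdist}, \eqref{expderest}) lie between $(\tfrac{\sin a}{a})^2$ and $(\tfrac{\sinh a}{a})^2$ with $a:=\sqrt K|z_0|\le\sqrt Kr\le\tfrac15$, hence in $[\tfrac{15}{16},\tfrac{17}{16}]$. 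For $c$: differentiating $\phi_j$ along the radial geodesic $\gamma(t):=\exp_{y_i}\!\big(t\tfrac{z_0}{|z_0|}\big)$ (the minimizing geodesic from $y_i$ to $x_0$) and invoking the Gauss lemma for $\grad d(\cdot,y_j)$ gives $c=\langle\dot\gamma(L),\grad d(\cdot,y_j)(x_0)\rangle_{x_0}=\langle\nu_i,\nu_j\rangle_{x_0}$, where $\nu_i,\nu_j\in T_{x_0}\M$ are the unit initial velocities of the minimizing geodesics from $x_0$ to $y_i$ and to $y_j$. Hence $\exp_{x_0}^{-1}(y_i)=L\nu_i$ and $\exp_{x_0}^{-1}(y_j)=L\nu_j$ lie in the convex ball $\overline{B(L)}\subset T_{x_0}\M$ with $L\le r$, so by Proposition \ref{prop:expbilip} (applied at $x_0$ with a radius $\le\min\{i_0,1/\sqrt K\}$ exceeding $L$) applied to the image of the segment joining them,
\[ r\le d(y_i,y_j)\le 2L\,|\nu_i-\nu_j|_{x_0}=2L\sqrt{2(1-c)}\le 2r\sqrt{2(1-c)}, \]
using $r$-separation of $Y$ and $L\le r$; therefore $2(1-c)\ge\tfrac14$, i.e.\ $c\le\tfrac78$, which together with the bound on $N$ completes the proof.

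\textbf{Expected main difficulty.} The inclusions and the chart passage are routine; essentially all the work is in the cone estimate. The delicate points are the sign/orientation bookkeeping in identifying $c=\langle\nu_i,\nu_j\rangle_{x_0}$ via the first-variation computation, and obtaining a two-sided bound on $N$ with constants close enough to $1$ — the crude bi-Lipschitz estimate \eqref{expderest} alone is not sufficient, and one genuinely uses $\sqrt Kr\le 1/5$ to keep the metric distortion small enough that $N^2-3/4>0$. The measure-zero exceptional set on $\partial V(y_i)$ must be dealt with but is standard.
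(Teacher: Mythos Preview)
Your argument is correct, but it is organized differently from the paper's. Both proofs handle the inclusions identically and both ultimately use Proposition~\ref{prop:expbilip} applied at $x_0$ to get the key separation $|\nu_i-\nu_j|\ge\tfrac12$ (equivalently $c=\langle\nu_i,\nu_j\rangle\le\tfrac78$). The difference is in how the cone estimate is then extracted. The paper works on $\M$: it identifies the Riemannian normal $n=\frac{\tilde u_i-\tilde u_j}{|\tilde u_i-\tilde u_j|}$ to the bisector at $z=\exp_{y_i}(z_0)$, decomposes $z_0/|z_0|=w_0+c\,n_0$ in $T_{y_i}\M$, pushes forward by $(d\exp_{y_i})_{z_0}$, uses that tangents to $\partial V(y_i)$ go to tangents to the bisector, and bounds $\langle\tilde u_i,n\rangle\le 2c$ using only the crude factor-$2$ distortion \eqref{expderest}; combined with $\langle\tilde u_i,n\rangle=\tfrac12|\tilde u_i-\tilde u_j|\ge\tfrac14$ this gives $c\ge\tfrac18$. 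You instead stay in normal coordinates, write $\partial V(y_i)$ as $\{\psi=0\}$ with $\psi=\phi_j-|\cdot|$, compute $n_0=-\nabla\psi/|\nabla\psi|$ directly, and reduce everything to the two scalar bounds $c\le\tfrac78$ and $N^2=|\nabla\phi_j|^2\in[\tfrac{15}{16},\tfrac{17}{16}]$.

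Each route has its cost. The paper's argument needs only \eqref{expderest} (the rough bi-Lipschitz bound); yours genuinely requires the sharper Rauch eigenvalue bounds at scale $\sqrt{K}r\le\tfrac15$, since the crude $N^2\in[\tfrac14,4]$ would make $N^2-\tfrac34$ possibly negative and the inequality would collapse---you note this yourself. In exchange, your computation is more self-contained (no tangency-preservation argument) and delivers a slightly better constant $\frac{1}{2\sqrt5}>\frac18$. One small cosmetic point: in the chain $c=\langle\dot\gamma(L),\grad d(\cdot,y_j)(x_0)\rangle=\langle\nu_i,\nu_j\rangle$, both factors pick up a sign ($\dot\gamma(L)=-\nu_i$, $\grad d(\cdot,y_j)=-\nu_j$), so the middle expression equals $\langle-\nu_i,-\nu_j\rangle$; the final identity is of course correct.
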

\begin{proof}
Let $y_i \in Y$. Since for every $x \in  B_\M \left(y_i, \frac{r}{2} \right)$ and every $y_j \in Y$ with $j \neq i$ it holds that $d(x,y_j) \geq d(y_j, y_i) - d(y_i,x) > \frac{r}{2}$ we conclude that $ B_\M \left(y_i, \frac{r}{2} \right)  \subset V_\M(y_i)$. 
On the other hand, since $Y$ is a maximal set with the property that $d(y_j, y_k) \geq r$ for all $j \neq k$, we conclude that for all $x \in \M$ there exists $y_j \in X$ such that $d(x,y_j) < r$. 
Therefore $V_\M(y_i) \subset B_\M(y_i, r)$. Since $\exp_{y_i}$ maps $B(s)$ bijectively to  $B_\M(y_i,s)$ for $s=\frac{r}{2}$ and for $s=r$, it follows that $ B \left(0, \frac{r}{2} \right) \subset  V(y_i) \subset B( r)$. This establishes the first part of the statement.

Now let us consider the second part of the statement. For almost every $z_0 \in \partial V(y_i)$ there exists a unique $y_j \neq y_i$ such that $z_0 \in \partial \exp^{-1}_{y_i}(V_\M(y_j))$; let us fix one such $z_0$. Note that $ 2r \geq d(y_i,y_j) \geq r$ and that $d(y_i,z)=\abs{z_0}_{y_i} < r$. 
We let $z:= \exp_{y_i}(z_0)$. We consider the level set $\Gamma \coloneqq \{ x\in \M : d(x,y_i) = d(x,y_j) \}$, which is a $C^1$-hypersurface around $z$ by the implicit function theorem; moreover a unit normal vector
to $\Gamma$ at the point $z$ is given by 
\[ n:= \frac{\tilde{u}_i - \tilde{u}_j  }{\abs{\tilde{u}_i- \tilde{u}_j}_{z}}  =\frac{u_i - u_j}{\abs{u_i-u_j}_z } ,  \]
where $\tilde u_i :=- \frac{\exp^{-1}_{z}({y_i})}{ d(y_i,z) }  $ , $u_i :=- \exp^{-1}_{z}({y_i})$ and $u_j$, $\tilde{u}_j$ are defined analogously.

Let us consider the set $\Gamma_0 := \exp_{y_i}^{-1} \left( \Gamma \cap B_\M (y_i,2r) \right) $; note that around the point $z_0$, $\Gamma_0$ coincides with $\partial V(y_i)$, and in particular
given that $\Gamma$ is a $C^1$-hypersurface around $z$, $\partial V(y_i)$ is a $C^1$-hypersurface around $z_0$. Let us denote by $n_0$ the outward unit normal to $\partial V(y_i)$ at $z_0$. We write $\frac{z_0}{\abs{z_0}_{y_i}}$ as  
\[  \frac{z_0}{\abs{z_0}_{y_i}} = w_0 + c n_0 ,\]
where $\langle w_0 ,  \frac{z_0}{\abs{z_0}_{y_i}}   \rangle_{y_i} =0 $ and $\langle n_0 ,  \frac{z_0}{\abs{z_0}_{y_i}}   \rangle_{y_i}=c $. Clearly $c\geq 0$. Now, by definition of the exponential map,
$   \tilde{u}_i=(d\exp_{y_i})_{z_0} \left( \frac{z_0}{\abs{z_0}_{y_i}} \right) , $ and so
\[ \tilde{u}_i = w + c \tilde{n},   \]
where $w:= (d\exp_{y_i})_{z_0}(w_0)$ and $\tilde{n}:= (d\exp_{y_i})_{z_0}(n_0)$. Then,
\[ \applied{\tilde u_i}{n}_z = \applied{w+c\tilde{n}}{n}_z = c\applied{\tilde{n}}{n}_z \leq c\abs{\tilde{n}}_z \leq 2c\abs{n_0}_{y_i} =2c,\]
where the second inequality follows from the fact that $w$ is tangent to $\Gamma$ (which in turn follows from the fact that $w_0$ is tangent to $\Gamma_0$) and where the last inequality follows from \eqref{expderest}. 
It thus remains to show that $\applied{\tilde u_i}{n}_z \geq 1/4$. To see this, simply note that the fact that $\applied{\tilde u_i + \tilde u_j}{\tilde u_i - \tilde u_j}_z = 0$ implies
	\[ \applied{\tilde u_i}{n}_z = \biggl\langle \frac{\tilde u_i - \tilde u_j}{2}, \frac{\tilde u_i - \tilde u_j}{\abs{\tilde u_i - \tilde u_j}_z}\biggr\rangle_z = \frac{\abs{\tilde u_i-\tilde u_j}_z}{2} =  \frac{\abs{ u_i- u_j}_z }{2 d(z,y_i)}  \geq \frac{d(y_i,y_j)}{4 d(z,y_i)} \geq \frac{1}{4},\]
where the second to last inequality follows from Proposition \ref{prop:expbilip}.  
\end{proof}
\nc

So far we have been able to construct a partition of $\M$ into cells (the Voronoi cells $V_\M(y_i)$) with the property that when each of the cells $V_\M(y_i)$ is mapped by the inverse of the exponential map, 
the resulting set $V_i$ (which is contained in $\R^m$) is a star shaped domain with center the origin. In the next lemma we show that when the unit normal to the boundary of a star shaped domain does not deviate too much from the radial direction emanating from its center, 
the domain is bi-Lipschitz homeomorphic to a ball and the bi-Lipschitz constant can be controlled. 
This establishes Proposition \ref{cor:VoronoiBiLipschitz}.

\begin{lemma} \label{lemma:starbilip}
Let $V$ be a star-shaped subset of $\R^m$ with center at $0$ and such that  $B(R) \subset V \subset B(2R)$. Assume $V$ has Lipschitz boundary and let $n$ be the unit 
outside normal vector to $\partial V$. Assume there exists $\beta \in (0,1)$ such that for a.e.\ $x \in \partial V$
\[ n \cdot \frac{x}{\abs{x}} \geq \beta. \]
Let $r \colon S^{m-1} \to [R,2R]$ be the function describing $\partial V$ in radial coordinates. That is let 
$r(z) = \sup\{ s \in \R \::\: sz \in V\}$. 
Consider the function $\Phi \colon V \to \overline{B(R)}$ given by  
\[ \Phi(x) = \frac{R}{r\big( \frac{x}{\abs{x}}\big)} \, x  \quad  \text{for }x \neq 0  \]
and $\Phi(0) = 0$. Then $\Phi$ is a bi-Lipschitz bijection with bi-Lipschitz constant at most $\frac{1}{\beta} + 1$. 
\end{lemma}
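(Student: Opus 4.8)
The plan is to show separately that $\Phi$ and $\Phi^{-1}$ are Lipschitz with the stated constants, after first recording the basic regularity of the radial function $r$. First I would observe that $\Phi$ is a bijection from $V$ onto $\overline{B(R)}$: in radial coordinates it sends the ray segment $\{sz : 0 \le s \le r(z)\}$ linearly onto $\{tz : 0 \le t \le R\}$, so it is a bijection ray by ray, hence globally; its inverse is $\Phi^{-1}(y) = \frac{r(y/\abs{y})}{R}\, y$ for $y\neq 0$ and $\Phi^{-1}(0)=0$. The key analytic input is a Lipschitz bound on $r$ on $S^{m-1}$. The hypothesis $n\cdot \frac{x}{\abs x}\ge\beta$ on $\partial V$ is exactly a cone condition: it says the boundary, written as a graph in radial coordinates, has a controlled slope. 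Concretely, parametrizing $\partial V$ by $z\mapsto r(z)z$, the tangent space to $\partial V$ is spanned by vectors of the form $(\nabla_{S^{m-1}} r)(z)\cdot v\; z + r(z) v$ with $v\in T_zS^{m-1}$, and requiring the unit normal to make angle at most $\arccos\beta$ with the radial direction translates, after a short computation, into the pointwise gradient bound
\[
\abs{\nabla_{S^{m-1}} r(z)} \;\le\; \frac{\sqrt{1-\beta^2}}{\beta}\, r(z) \;\le\; \frac{2R\sqrt{1-\beta^2}}{\beta}.
\]
I would carry this out at a.e.\ point where $\partial V$ is differentiable and then note $r$ is Lipschitz on $S^{m-1}$ (it is the graph height of a Lipschitz domain), so the bound holds in the a.e.\ sense that suffices.

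Next I would estimate the Lipschitz constant of $\Phi$ directly. Writing $\Phi(x) = R\,\frac{x}{r(x/\abs{x})}$, one differentiates (a.e., using that $r$ is Lipschitz): the derivative of the map $x\mapsto x/\abs x$ is bounded by $1/\abs x$ on its range, the radial part $x\mapsto \abs x$ contributes a term of size $R/r \le 1$, and the angular variation of $1/r$ contributes a term controlled by $\abs{\nabla_{S^{m-1}} r}/(r\cdot \abs x)\cdot\abs x = \abs{\nabla_{S^{m-1}}r}/r \le \sqrt{1-\beta^2}/\beta$. Summing the radial and angular contributions gives $\abs{D\Phi} \le 1 + \sqrt{1-\beta^2}/\beta \le 1 + 1/\beta$ (using $\sqrt{1-\beta^2}\le 1$), hence $\Phi$ is $(1+1/\beta)$-Lipschitz. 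The same computation applied to $\Phi^{-1}(y) = \frac{1}{R} r(y/\abs y)\, y$ gives a radial contribution $r/R \le 2$ wait—here one must be a little more careful, so instead I would bound $\Phi^{-1}$ by the same type of estimate: its differential has a radial part of size $r/R$ and an angular part of size $\abs{\nabla_{S^{m-1}}r}/R \le 2\sqrt{1-\beta^2}/\beta$; this does not immediately give $1+1/\beta$, so the cleaner route is to estimate $\abs{\Phi(x_1)-\Phi(x_2)}$ and $\abs{\Phi^{-1}(y_1)-\Phi^{-1}(y_2)}$ by integrating $D\Phi$ along segments and comparing, exploiting that along any segment in $V$ both $\abs{x}$ and $r(x/\abs x)$ stay comparable, which turns the two-sided estimate into the single constant $\frac1\beta+1$.

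I expect the main obstacle to be precisely this last point: getting the \emph{bi}-Lipschitz constant to be $\frac1\beta+1$ on both sides rather than something like $\frac{2}{\beta}$ or $\frac{C}{\beta}$. The naive differential bounds are not symmetric because $r$ ranges in $[R,2R]$, so the ``radial stretching factor'' $R/r$ and its reciprocal $r/R$ are bounded by $1$ and $2$ respectively. The resolution is to not bound $\Phi$ and $\Phi^{-1}$ in isolation but to compare $\abs{\Phi(x_1)-\Phi(x_2)}$ with $\abs{x_1-x_2}$ using the explicit formula: decompose the difference into the part coming from the common radial rescaling and the part coming from the variation of $r$ along the arc joining $x_1/\abs{x_1}$ to $x_2/\abs{x_2}$, and observe that the worst case is governed by the angular gradient bound, whose prefactor is $\sqrt{1-\beta^2}/\beta \le 1/\beta$, while the radial rescaling contributes a clean factor $1$. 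Symmetrically for $\Phi^{-1}$. Combined with the definition of the bi-Lipschitz constant of $\Phi$ as $\max\{\Lip(\Phi),\Lip(\Phi^{-1})\}$, this yields the bound $\frac1\beta+1$. Finally, applying this with $R \rightsquigarrow r/2$ and $\beta = 1/8$ as furnished by Lemma~\ref{lemma:nrangle} gives bi-Lipschitz constant $8+1 = 9$ for $\Phi$; composing with the exponential map $\exp_{y_i}$, which is bi-Lipschitz with constant $2$ on $B(r/2)$ by Proposition~\ref{prop:expbilip}, yields $\Psi_i = \Phi\circ\exp_{y_i}^{-1}$ with bi-Lipschitz constant at most $18$, proving Proposition~\ref{cor:VoronoiBiLipschitz}.
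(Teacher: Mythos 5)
Your proposal follows the same route as the paper: derive a radial gradient bound from the cone condition (yours, $|\nabla_{S^{m-1}} r| \le \frac{\sqrt{1-\beta^2}}{\beta}\, r$, is the sharp form of the paper's $\beta|\nabla\tilde r| \le \tilde r$; the paper obtains it via $(DG(z))^T n = 0$ for $G(z)=\tilde r(z)z/|z|$), and then bound the differential of $\Phi$. Writing $\tilde r(x)\coloneqq r(x/|x|)$, the formula $D\Phi(x) = -\frac{R}{\tilde r^2(x)}\, x(\nabla\tilde r(x))^T + \frac{R}{\tilde r(x)} I$ together with the gradient bound and $R/\tilde r\le 1$ gives $|D\Phi(x)|\le 1+\frac1\beta$; this matches the paper.

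The place where you stumble, however, is a genuine difficulty and your proposed resolution does not close it. Applying the same computation to $\Phi^{-1}(y)=\frac{\tilde r(y)}{R} y$ (on the convex domain $\overline{B(R)}$, where Lipschitz constant does equal $\sup|D\Phi^{-1}|$) yields $|D\Phi^{-1}(y)|\le \frac{\tilde r(y)}{R}\bigl(1+\frac1\beta\bigr) \le 2\bigl(1+\frac1\beta\bigr)$, because $\tilde r/R$ runs up to $2$. Your hope that ``the radial rescaling contributes a clean factor $1$'' when one compares $|\Phi(x_1)-\Phi(x_2)|$ directly is precisely what fails: $R/\tilde r$ lies in $[1/2,1]$, its reciprocal in $[1,2]$, and that factor survives. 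Concretely, since $\nabla\tilde r(x)\perp x$, the singular values of $D\Phi(x)$ multiply to $(R/\tilde r(x))^2$; if $\tilde r(x)$ is close to $2R$ at a point where $|\nabla_{S^{m-1}}r|$ is near its allowed maximum $r\sqrt{1-\beta^2}/\beta$ (which is easy to arrange with a small-amplitude, high-frequency oscillation of $r$ near the level $2R$), the smallest singular value of $D\Phi$ drops below $\frac{1}{1+1/\beta}$ whenever $\beta<1$. The paper's own proof shares this gap — it asserts that ``analogous computations'' give $1+\frac1\beta$ for $\Phi^{-1}$, but the analogous computation visibly carries the extra factor $\tilde r/R\le 2$. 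A defensible constant in the lemma is $\frac2\beta+2$, and correspondingly the $18$ in Proposition~\ref{cor:VoronoiBiLipschitz} should read $36$. None of the main theorems are affected since they absorb universal constants. So your instinct to worry was right; the correct resolution is to weaken the stated constant, not to try to recover $1+\frac1\beta$.
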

\begin{proof}
Extend $r$ to $\R^m \backslash \{0\}$ by $\tilde r(x)\coloneqq  r\big( \frac{x}{\abs{x}} \big)$. 
For $x \neq 0$
\begin{equation} \label{dpsi}
 D \Phi(x) = - \frac{R}{\tilde r^2(x)} \, x  (\nabla \tilde r(x))^T + \frac{R}{\tilde r(x)} I. 
\end{equation}
Consider the  function $G\colon \R^m \backslash \{0\} \to \partial V$ given by $x \mapsto  \tilde r(x)  \frac{x}{\abs{x}}$. 
Note that at $z \in S^{m-1}$
\[ DG(z) = z (\nabla \tilde r(z))^T + r(z) \left( I - z z^T \right).  \]
Since $n$ is orthogonal to the image of $G$, we conclude that  $(DG(z))^T n   = 0$, which implies
\[ (n \cdot z) \nabla \tilde r(z) + r(z)( n - (n\cdot z)  z)  = 0. \]
Since $n \cdot z \geq \beta$ we obtain 
\[  \beta \abs{\nabla \tilde r(z)} \leq \tilde{r}(z) \quad \text{for all } z \in S^{m-1}. \]
Combining this with \eqref{dpsi}, we deduce that $\Phi$ is $(\frac{1}{\beta} +1)$-Lipschitz. 
Analogous computations show that $\Phi^{-1}$, which is given by 
$\Phi^{-1}(y) = r \big(\frac{y}{\abs{y}} \big) y$, is also $(\frac{1}{\beta}+1)$-Lipschitz.
\end{proof}

\begin{proof}[Proposition \ref{cor:VoronoiBiLipschitz}]  
By Proposition \ref{prop:expbilip}  the exponential map $\exp_{y_i} \colon B(r) \to B_\M(y_i,r)$ is a bi-Lipschitz bijection with bi-Lipschitz constant at most $2$. 
By Lemmas \ref{lemma:nrangle} and \ref{lemma:starbilip}, with $R = \frac{r}{2}$ and $\beta = \frac{1}{8}$, there exists a mapping 
\[ \Psi_i\colon \exp_{y_1}^{-1}(V_\M(y_i))  \to \overline{B \left(0, \frac{r}{2} \right)}\]
which is a bi-Lipschitz bijection with bi-Lipschitz constant at most 9.  
The composition $ \Psi_i \circ \exp_{y_i}^{-1}  $ provides the desired mapping.
\end{proof}

\begin{proof}[Theorem \ref{thm:transport}] 
We consider the maps $\Psi_i \colon V_\M (y_i) \rightarrow \overline{B(r/2)} \subseteq \R^m$ from Proposition \ref{cor:VoronoiBiLipschitz}. 
Given the sample $x_1,\dots,x_n$ from the density $p$, we define a density $p_n\colon \M \rightarrow \R$ by setting
\begin{equation}
  p_n(x)\coloneqq  p(x) + \frac{\mu_n(V_\M(y_i)) - \mu(V_\M(y_i)) }{\vol(V_\M(y_i))} \quad \text{for } x \in V_\M(y_i).
\end{equation}

Let us recall that Hoeffding's inequality states that for every $t>0$,
\[  \mathbb{P} \left( \lvert \mu_n(V_\M(y_i)) - \mu(V_\M(y_i)) \rvert  > t \right) \leq {2 e^{-2 nt^2}}.\]
Using the previous concentration inequality we conclude that for every $i=1,\dots, \nt$ 
\[ \lVert p- p_n   \rVert_{L^\infty(V_\M(y_i))} \leq \frac{1}{2\alpha} \]
with probability at least {$1- 2\exp\left(- n \frac{\vol(V_\M(y_i))^2}{2 \alpha^2}   \right)$}
In particular, using a union bound, we conclude that with probability at least $1 - 2\nt\exp\left(- n \frac{C_m r^{2m}}{ \alpha^2}   \right) $
\begin{equation}
\frac{1}{2\alpha} \leq p_n(x) \leq 2 \alpha , \quad x \in \M.
\label{boundspn}
\end{equation}
Similarly, with probability at least $1-  2\nt\exp\left(- n \frac{C_m r^{2m}}{ \alpha^2}   \right)$
\begin{equation}
  \frac{1}{2} \mu(V_\M(y_i)) \leq \mu_n(V_\M(y_i)) \leq \frac{3}{2}\mu(V_\M(y_i) )
\label{boundsni}
\end{equation}
Hoeffding's inequality together with an union bound also shows that with probability at least $1- 2\nt n^{-\beta}$,
\begin{equation}
\lVert p - p_n \rVert_{L^\infty(\M)} \leq \frac{C_m}{r^m} \sqrt{\frac{\beta \log(n)}{n}}.
\label{boundppn}
\end{equation}
We let $A_n$ be the event where \eqref{boundspn}, \eqref{boundsni} and \eqref{boundppn} hold. From the above we know that $A_n$ occurs with probability at least $1- Cn^{-\beta}$. 
Where the constant $C$ depends on $r,\alpha, \beta,m, \vol(\M)$. We denote by $\tilde{\mu}_n$ the measure $d \tilde{\mu}_n = p_n dx$. Conditioned on the event $A_n$, we see from Lemma \ref{lemma:W8L8} and from \eqref{boundppn} that
\[ d_\infty(\tilde{\mu}_n , \mu) \leq \tilde{C} \lVert p - p_n \rVert_{L^\infty(\M)}  \leq \tilde{C} \frac{C_m}{r^m} \sqrt{\frac{\beta \log(n)}{n}}, \]
where $\tilde{C}$ is the constant in \eqref{tildeC}.

Now we estimate $d_\infty(\tilde{\mu}_n, \mu_n)$ in the event $A_n$. Observe that 
\[  \tilde{\mu}_n(V_\M(y_i)) = \mu_n(V_\M (y_i)) \quad \text{for all } i=1,\dots,\nt  \]
and hence 
\[   d_\infty(\mu_n, \tilde{\mu}_n) \leq \max_{i=1,\dots,\nt} d_\infty( \mu_n \llcorner_{V_\M(y_i)} ,  \tilde{\mu}_n \llcorner_{V_\M(y_i)} ),  \]
where we denote by $\llcorner_{V_\M(y_i)}$ the restriction of a measure to $V_\M(y_i)$. The goal is now to estimate $d_\infty( \mu_n \llcorner_{V_\M(y_i)}, \tilde{\mu}_n \llcorner_{V_\M(y_i)} )$ for every $i$.

Let $x_{j_1}, \dots, x_{j_{n_i}}$ be the points in $X$ that fall in $V_\M(y_i)$. 
We consider the transformed points $\Psi_i(x_{j_1}), \dots, \Psi_i(x_{j_{n_i}})$ and the measure  ${\Psi_i} _{\sharp}( \tilde{\mu}_n \llcorner_{V_\M(y_i)} )$, which is supported on $\overline{B(r/2)}$.
The fact that $\Psi_i$ is bi-Lipschitz with constant $18$ implies that the measure ${\Psi_i} _{\sharp}( \tilde{\mu}_n \llcorner_{V_\M(y_i)} )$ has a density with respect to the Lebesgue measure and this density 
is lower and upper bounded by constant multiples of the lower and upper bounds of the density $p$.
Hence, the transformed points are almost surely samples from ${\Psi_i} _{\sharp}( \tilde{\mu}_n \llcorner_{V_\M(y_i)} )$ restricted to the open ball $B(r/2)$. 
Therefore, it follows from \cite[Theorem 1.1]{GTS15a} that conditioned on the event $A_n$,
\[  d_\infty \left(  {\Psi_i} _{\sharp}( \tilde{\mu}_n \llcorner_{V_\M(y_i)} ),  {\Psi_i} _{\sharp}( \mu_n \llcorner_{V_\M(y_i)} )  \right) \leq C_{m, \alpha, \beta} \, r \, \frac{\log(n_i)^{p_m}}{n_i^{1/m}}\]
holds\mh{ \footnote{Note that as stated, Theorem 1.1 in \cite{GTS15a} gives $C_{m,\alpha,\beta,r }$, but in this case $C_{m,\alpha,\beta,r }= C_{m,\alpha,\beta }\, r$ as one can simply rescale to the unit ball.}} for all $i\in \{1,\dots,\nt\}$ with probability at least $1-C\nt n^{-\beta}$, where $C$ is a constant that depends on $\beta, r ,\alpha,m$. Note that we have used the fact that in the event $A_n$, the second inequality in \eqref{boundsni} 
is satisfied and so we can give the probability bounds in terms of $n$ and not in terms of $n_i$. Moreover, from the first inequality in \eqref{boundsni} it follows that 
\[  \frac{\log(n_i)^{p_m}}{n_i^{1/m}} \leq C_m \frac{\alpha^{1/m}}{r}\frac{(\log(n))^{p_m}}{n^{1/m}}. \]

Finally, from the fact that $\Psi_i^{-1}$ is Lipschitz with Lipschitz constant no larger than $18$, it follows that
\[  d_\infty(\tilde{\mu}_n \llcorner_{V_\M(y_i)},\mu_n \llcorner_{V_\M(y_i)})  \leq 18 d_\infty\left( {\Psi_i} _{\sharp}(\tilde{ \mu}_n \llcorner_{V_\M(y_i)} ) ,  {\Psi_i} _{\sharp}( \mu_n \llcorner_{V_\M(y_i)} ) \right). \]
From the previous discussion, we deduce that with probability at least $1-C\nt n^{-\beta}=1-C_{m,\beta,\alpha,r,\vol(\M)}\cdot n^{-\beta}$,
\begin{align*}
 d_\infty(\mu, \mu_n) & \leq d_\infty(\mu, \tilde{\mu}_n) + d_\infty(\tilde{\mu}_n, \mu_n) \\
 & \leq  C' \biggl(  \sqrt{\frac{\log(n)}{n}} +  \frac{(\log(n))^{p_m}}{n^{1/m}}  \biggr) \leq  C' \frac{(\log(n))^{p_m}}{n^{1/m}} 
\end{align*}
for a constant $ C'$ that can be written as $C' = \frac{C_{\alpha,\beta,m}}{r^m} \tilde{C} $, where $\tilde{C}$ is as in \eqref{tildeC}.
\nc
\end{proof}
\nc

\section{Kernel-based approximation of the Laplacian}
\label{sec:Kernelbased}
Here we focus on a kernel-based approximation of the continuous Dirichlet form defined in \eqref{eqn:contform}. 
This part does not depend on the graph obtained from the sample set $X$ and can be seen as the bias part of the desired error estimates.

The results in this section correspond to those of Section 3 and 5 in \cite{BIK} but cannot be directly infered from them.
Instead, we need to adjust most of the proofs to our setting. 

\medskip

For $f\in L^2(\M)$, $0<r<2h$ and a Borel set $V\subseteq \M$ let
\begin{align}
\label{eqn:ErfV}
	 E_r(f,V) \coloneqq \int_V \int_{\M} \eta\left(\frac{d(x,y)}{r} \right) \abs{f(y)-f(x)}^2 d \mu(y) d \mu(x).
\end{align}
 We write $E_r(f)$ shorthand for $E_r(f,\M)$. The main results of this section, Lemma \ref{lem:Ervsdf} and \ref{lem:normdLambdaf},
 demonstrate how this functional approximates the form $\Delta$.

\begin{remark}
\label{rem:Erwithkernels}
	Let $\tilde E_r(f,V)$ denote the functional in \eqref{eqn:ErfV} when $\eta$ is taken to be the kernel $\mathds{1}_{[0,1]}$.
	Then $\tilde E_r(f,V)$ is nothing but $E_r(f,V)$ as defined in \cite[Def.\ 3.1]{BIK}.
       Note that, for general $\eta$ satisfying the assumptions from Section \ref{subsec:Setting} 
\begin{equation}
\tilde E_r(f,V) \leq \frac{1}{\eta(1/2)} E_{2r} (f,V).
\label{eqn:tildeEvsE}  
\end{equation}
for every $f\in L^2(\M)$ and any Borel set $V\subseteq \M$.
\end{remark}

\begin{lemma}
\label{lemma:fourthradius}
 Suppose $h$ satisfies Assumptions \ref{stdassumptions}\nc. Then there exists a universal constant $C>0$ such that for every $0<r<2h$  and every $f \in L^2(\M, \mu)$
\[   E_{r}(f) \leq C 2^m (1+  \alpha L_p) E_{r/2}(f).\]
\end{lemma}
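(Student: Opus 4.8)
The goal is to show that halving the radius in $E_r$ costs only a multiplicative constant. The natural approach is to cover the domain of integration for $E_r$ (where $d(x,y)<r$) by finitely many pieces on which the kernel at scale $r/2$ is bounded below, and then to use a triangle-inequality / chaining argument to bound $|f(y)-f(x)|^2$ by a sum of $|f(\,\cdot\,)-f(\,\cdot\,)|^2$ terms over pairs at distance $<r/2$. Concretely, I would proceed as follows.

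\emph{Step 1 (reduce to the indicator kernel).} By Remark \ref{rem:Erwithkernels}, since $\eta$ is decreasing and $\eta(1/2)>0$, we have $\eta(t)\le\eta(0)$ on $[0,1]$, so $E_r(f)\le\eta(0)\,\tilde E_r(f)$ where $\tilde E_r$ uses the kernel $\mathds 1_{[0,1]}$. Conversely $\tilde E_{r/2}(f)\le\frac1{\eta(1/2)}E_r(f)$ — careful: I want the \emph{opposite} direction at the end, namely to bound $\tilde E_{r/2}$ from below by a constant times $E_{r/2}$. In fact $\eta\ge\eta(1/2)\mathds 1_{[0,1/2]}$ (using that $\eta$ is decreasing), so $E_{r/2}(f)\ge \eta(1/2)\int_{d(x,y)<r/4}|f(y)-f(x)|^2 d\mu d\mu$, which is comparable to $\tilde E_{r/4}(f)$. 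The cleanest route is therefore: it suffices to prove $\tilde E_r(f)\le C2^m(1+\alpha L_p)\tilde E_{r/4}(f)$ for the indicator kernel, and then absorb the kernel comparison constants into $C$. This is essentially the statement of \cite[Lemma 3.?]{BIK} adapted to a weighted measure $\mu=p\,d\mathrm{vol}$ with $p$ Lipschitz and bounded by $\alpha$, which is why the factor $(1+\alpha L_p)$ appears.

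\emph{Step 2 (midpoint chaining).} Fix $x,y$ with $d(x,y)<r$ and let $z$ be a point near the geodesic midpoint. Then $|f(y)-f(x)|^2\le 2|f(y)-f(z)|^2+2|f(z)-f(x)|^2$ with $d(x,z),d(z,y)<r/2$ roughly. Integrating this over $z$ in a ball $B_\M(\text{midpoint},\,cr)$ and dividing by its volume — which by \eqref{eqn:EstimateVolumeBall2} is comparable to $\omega_m(cr)^m$ — converts the single term $|f(y)-f(x)|^2$ into an average of pair-differences at scale $\lesssim r/2$. Carrying this through with Fubini, and using the density bounds $\frac1\alpha\le p\le\alpha$ together with the Lipschitz bound $|p(x)-p(z)|\le L_p d(x,z)\le L_p r$ to compare $d\mu(z)$ with the volume form (this is the only place $(1+\alpha L_p)$ — or more precisely $(1+\alpha L_p r)$, which is $\le(1+\alpha L_p)$ since $r<1$ by Assumption \ref{stdassumptions} — enters), yields $\tilde E_r(f)\le C 2^m(1+\alpha L_p)\tilde E_{r/4}(f)$. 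The factor $2^m$ is the price of the volume-ratio $(\text{ball of radius }r)/(\text{ball of radius }cr)$.

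\emph{Step 3 (reassemble).} Combining Step 1 and Step 2: $E_r(f)\le\eta(0)\tilde E_r(f)\le \eta(0)C2^m(1+\alpha L_p)\tilde E_{r/4}(f)\le \frac{\eta(0)}{\eta(1/2)^2}C2^m(1+\alpha L_p)E_{r/2}(f)$, and since $\eta(0)/\eta(1/2)^2$ is a fixed constant depending only on $\eta$ (and in the normalized setting can be folded into the universal constant, or one notes the statement's "$C$" is allowed to be universal because the $\eta$-dependence is mild), we obtain the claim. The constraint $0<r<2h$ together with Assumption \ref{stdassumptions} guarantees $r$ is small enough (less than $i_0$, less than $1/\sqrt{mK}$, etc.) that the exponential-map volume estimates \eqref{eqn:EstimateVolumeBall2} and the comparison of geodesic balls are valid with the universal constants quoted there.

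\emph{Main obstacle.} The delicate point is Step 2: making the midpoint-averaging argument rigorous on the manifold. One must ensure that as $(x,y)$ ranges over pairs with $d(x,y)<r$ and $z$ over a ball around their midpoint, all the pairwise geodesic distances stay below $r/2$ (or $r/4$) up to the needed constant, and that the measure-theoretic change of variables (integrating over the midpoint ball, swapping order of integration) introduces only the stated volume and density factors — not hidden curvature terms. This is where the smallness of $r$ from Assumption \ref{stdassumptions} and the metric/volume distortion bounds \eqref{metdist}--\eqref{eqn:EstimateVolumeBall2} do the real work; without them the "averaging over the midpoint ball" step would not be clean. Everything else is bookkeeping with the triangle inequality and Fubini.
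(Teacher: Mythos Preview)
Your plan is workable in spirit but takes a longer route than the paper, and in doing so picks up $\eta$-dependent factors that the paper avoids. The paper's proof works directly with the general kernel $\eta$ (no reduction to the indicator), uses the \emph{exact} geodesic midpoint $z_{xy}=\exp_x\bigl(\tfrac12\exp_x^{-1}(y)\bigr)$ rather than averaging over a ball around it, and then --- after writing the inner integral in exponential coordinates at $x$, with $y=\exp_x(v)$ --- performs the single substitution $w=v/2$. Since $\eta(|v|/r)=\eta(|w|/(r/2))$, the kernel transforms exactly into the scale-$r/2$ kernel; the Jacobian of $v\mapsto 2w$ contributes the $2^m$; and comparing $J_x(v)$ with $J_x(v/2)$ via \eqref{eqn:EstimateJacobian} and $p(\exp_x(v))$ with $p(\exp_x(v/2))$ via Lipschitz continuity of $p$ supplies the remaining factors $C(1+\alpha L_p)$. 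This sidesteps entirely the ``main obstacle'' you flag: there is no Fubini swap over a midpoint ball, no multiplicity count, and no need to iterate from $\tilde E_{3r/4}$ down to $\tilde E_{r/2}$. It also delivers a constant genuinely independent of $\eta$, matching the word ``universal'' in the statement, whereas your Step~1/Step~3 sandwich would leave a residual factor $\eta(0)/\eta(1/2)^2$.
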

\begin{proof} 
Let $0<r<2h$. Then $r\leq \min\{i_0, 1/\sqrt{K}\}$ by Assumptions \ref{stdassumptions}.
Note that it suffices to consider $f$ to be smooth because smooth functions are dense in $L^2(\M,\mu)$ and both sides of the inequality are continuous with respect to $L^2$-convergence; notice that for smooth functions we can talk about pointwise values.  For $x,y \in \M$ with $d(x,y) \leq r$ let $z_{xy}$ be the point in $\M$ which lies halfway along the geodesic connecting $x$ and $y$, i.e.\ $z_{xy} = \exp_x(\frac{1}{2}\exp_x^{-1}(y))$.
In particular $d(x,z_{xy}) = d(y,z_{x,y}) = \frac{1}{2}d(x,y)$. Since $\abs{ f(x)- f(y)}^2 \leq 2\abs{f(x) - f(z_{x,y})}^2 + 2\abs{f(y) - f(z_{x,y})}^2 $, by symmetry we obtain
\begin{align*}
 E_{r}(f) & \leq 4 \int_{\M}\int_{\M} \eta\left(\frac{d(x,y)}{r}\right)\abs{f(x)- f(z_{x,y}) }^2 d \mu(y) d\mu(x)\\
& = 4 \int_{\M}\int_{B(r)} \eta\left(\frac{\abs{v}}{r}\right)\abs[\Big]{f(x)- f\left(\exp_x\left(\frac{v}{2}\right)\right) }^2 J_x(v)p(\exp_x(v)) dv d\mu(x)\\
& \leq C2^m(1 + \alpha L_p ) \int_{\M}\int_{B(\frac{r}{2})} \eta\left(\frac{2\abs{w}}{r}\right)\abs{f(x)- f(\exp_x\left(w)\right) }^2 J_x(w) \\
& \phantom{ \leq C2^m(1 + \alpha L_p ) \int_{\M}\int_{B(\frac{r}{2})} }\;\,
p(\exp_x(w)) dw d\mu(x)\\
&= C2^m(1 + \alpha L_p ) E_{r/2}(f),
\end{align*}
where $C$ is a universal constant. In the above, we used the change of variables $w= \frac{v}{2}$ (which explains the term $2^m$) and we also used the inequalities:
\[  J_x(v) \leq (1+CmKr^2)^2 J_x \left( \frac{v}{2} \right) \leq C J_x \left( \frac{v}{2} \right),  \] 
(combined with Assumptions \ref{stdassumptions}) and 
\[ p(\exp_x(v)) \leq (1+ \alpha L_p) p(\exp_x(v/2)). \] 
\end{proof}

\begin{lemma}[{cf.\ \cite[Lemma 3.3]{BIK}}]
	\label{lem:Ervsdf} 
	 Suppose $h$ satisfies Assumptions \ref{stdassumptions} \nc.	Then, there exists a universal constant $C>0$ such that
	
	\[ E_r(f) \coloneqq E_r(f,\M) \leq (1+L_p\alpha r )\cdot(1+ Cm Kr^2) \sigma_\eta r^{m+2} D(f), \]
	\nc
for every $f\in H^1(\M)$ and $0 < r < 2h$.
\end{lemma}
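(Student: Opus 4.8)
The plan is to follow the scheme of \cite[Lemma 3.3]{BIK}, adapting it to the weighted measure $\mu = p\, d\vol$ and to a general kernel $\eta$. First I would reduce to the case where $f$ is smooth: smooth functions are dense in $H^1(\M)$, the functional $E_r$ is continuous with respect to $L^2(\M,\mu)$-convergence, and $D$ is continuous with respect to $H^1$-convergence (here one uses that $p$ is bounded), so the inequality for all smooth $f$ yields it for all $f\in H^1(\M)$; for smooth $f$ pointwise values are available and we may differentiate along geodesics. Throughout, $r<2h$ together with Assumptions \ref{stdassumptions} makes $\exp_x$ a diffeomorphism from $B(r)\subset T_x\M$ onto $B_\M(x,r)$ and allows us to use the volume-element bounds \eqref{eqn:EstimateJacobian} and the metric estimates \eqref{metdist}, \eqref{expderest}.

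Next, for each fixed $x$ I would write the inner integral in \eqref{eqn:ErfV} in geodesic normal coordinates, $y=\exp_x(v)$ with $v\in B(r)\subset T_x\M$, $d(x,y)=\abs{v}$, $d\mu(y)=p(\exp_x(v))\,J_x(v)\,dv$. Since $s\mapsto\exp_x(sv)$ is a geodesic, the fundamental theorem of calculus and Cauchy--Schwarz give
\[ \abs{f(\exp_x(v))-f(x)}^2 \le \abs{v}^2\int_0^1 \applied{\nabla f(\exp_x(sv))}{\xi_{x,v}(s)}^2\,ds, \]
where $\xi_{x,v}(s):=(d\exp_x)_{sv}(v/\abs{v})$ is the velocity of that geodesic at time $s$, which is a unit vector by the Gauss lemma. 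The decisive point is to retain the directional derivative $\applied{\nabla f}{\xi}^2$ rather than bounding it by $\abs{\nabla f}^2$; this is precisely what produces the sharp constant $\sigma_\eta$ in the statement instead of $m\,\sigma_\eta$. Integrating over $x$ and moving the $s$-integral outward by Fubini,
\[ E_r(f)\le \int_0^1 \mathcal I_s\,ds, \qquad \mathcal I_s:=\int_\M\!\int_{B(r)}\eta\!\Big(\tfrac{\abs{v}}{r}\Big)\abs{v}^2\applied{\nabla f(\exp_x(sv))}{\xi_{x,v}(s)}^2 p(\exp_x(v))\,J_x(v)\,dv\,d\mu(x). \]

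The heart of the argument, and the step I expect to be the main obstacle, is a ``geodesic-flow'' change of variables that turns $\mathcal I_s$ into an integral over the point where $\nabla f$ is evaluated. Fix $s\in(0,1]$. For fixed $x$ substitute $z=\exp_x(sv)\in B_\M(x,sr)$, so $v=\tfrac1s\exp_x^{-1}(z)$, $\abs{v}=d(x,z)/s$, and, by \eqref{eqn:EstimateJacobian}, $dv=\tfrac1{s^m}J_x(\exp_x^{-1}(z))^{-1}\,d\vol(z)$; then interchange the order of integration and write the $x$-integral, for fixed $z$, in normal coordinates at $z$, $x=\exp_z(u)$ with $u\in B(sr)\subset T_z\M$. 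One checks that $\xi_{x,v}(s)=-u/\abs{u}$ (the geodesic through $z$ leaves $z$ in the direction opposite to $u$), so $\applied{\nabla f(z)}{\xi_{x,v}(s)}^2=\applied{\nabla f(z)}{u/\abs{u}}^2$. Bounding the three volume factors $J_x(v)$, $J_x(\exp_x^{-1}(z))^{-1}$ and $J_z(u)$ each by $1+CmKr^2$ via \eqref{eqn:EstimateJacobian}, and using that $p$ is $L_p$-Lipschitz with $d(x,z)<r$, $d(\exp_x(v),z)<r$ while $p\ge 1/\alpha$ (so the two density factors combine to at most $(1+\alpha L_p r)\,p(z)^2$), one arrives at
\[ \mathcal I_s\le (1+\alpha L_p r)(1+CmKr^2)\int_\M p(z)^2\Big(\tfrac1{s^m}\!\int_{B(sr)}\eta\!\Big(\tfrac{\abs{u}}{sr}\Big)\tfrac{\abs{u}^2}{s^2}\applied{\nabla f(z)}{\tfrac{u}{\abs{u}}}^2 du\Big)d\vol(z). \]

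Finally, the inner $u$-integral is evaluated by the scaling $u=sr\,\zeta$ and rotational symmetry: from the definition \eqref{def:sigma} of $\sigma_\eta$ one has $\int_{B(1)}\eta(\abs{\zeta})\,\zeta_i\zeta_j\,d\zeta=\sigma_\eta\,\delta_{ij}$, hence $\int_{B(1)}\eta(\abs{\zeta})\abs{\zeta}^2\applied{\nabla f(z)}{\zeta/\abs{\zeta}}^2 d\zeta=\sigma_\eta\abs{\nabla f(z)}^2$, and the scaling turns $\int_{B(sr)}(\cdots)\,du$ into $s^m r^{m+2}\sigma_\eta\abs{\nabla f(z)}^2$, whose factor $s^m$ cancels the $s^{-m}$ coming from $dv$. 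Thus $\mathcal I_s\le (1+\alpha L_p r)(1+CmKr^2)\,\sigma_\eta r^{m+2}\int_\M\abs{\nabla f(z)}^2 p(z)^2\,d\vol(z)=(1+\alpha L_p r)(1+CmKr^2)\,\sigma_\eta r^{m+2}D(f)$, uniformly in $s$, and integrating over $s\in[0,1]$ (a set of measure one) gives the claim. Apart from the bookkeeping in the change of variables — arranging that all $s$-dependence cancels and that exactly the constant $\sigma_\eta$ is extracted — the only other delicate point is the opening reduction, i.e. verifying that $E_r$ is continuous along $L^2(\M,\mu)$-approximations and $D$ along $H^1$-approximations, which is what licenses restricting to smooth $f$.
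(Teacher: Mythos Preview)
Your argument is correct and delivers the same estimate, but the route differs from the paper's in one structural choice. The paper first establishes the inequality for the indicator kernel $\eta=\mathds{1}_{[0,1]}$ (where $\sigma_\eta=\omega_m/(m+2)$), using that the geodesic flow $\Phi_t$ preserves the Liouville measure on $T\M$ and the bundle $\mathcal B_r=\{(x,v):\abs{v}\le r\}$: this turns the average over $t\in[0,1]$ of $\int_{\mathcal B_r}\abs{df(\Phi_t(\xi))}^2 p^2(\Phi_t(\xi)_1)\,d\vol_{T\M}$ directly into $\int_{\mathcal B_r}\abs{df(\xi)}^2 p^2(\xi_1)\,d\vol_{T\M}=\tfrac{\omega_m}{m+2}r^{m+2}D(f)$, without any explicit coordinate change. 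The passage to general decreasing $\eta$ is then obtained by observing that both $E_r(f)$ and $\sigma_\eta$ are \emph{linear} in $\eta$, so the inequality holds for step functions (sums of indicators $\mathds{1}_{[0,t]}$) and, by monotone convergence, for all admissible $\eta$.

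You instead keep $\eta$ general throughout and carry out the flow invariance by hand via the two successive charts ($z=\exp_x(sv)$, then $x=\exp_z(u)$), extracting $\sigma_\eta$ at the end from the second moment identity $\int_{B(1)}\eta(\abs\zeta)\zeta_i\zeta_j\,d\zeta=\sigma_\eta\delta_{ij}$. What your approach buys is that it works for any radially symmetric $\eta$ in one shot and makes the cancellation of the $s$-powers completely explicit; the cost is the bookkeeping of three Jacobian factors and the geometric identification $\xi_{x,v}(s)=-u/\abs{u}$. What the paper's approach buys is a cleaner core computation (no Jacobians beyond \eqref{eqn:EstimateJacobian} to pass between $E_r$ and $A$) and a soft extension step, at the price of the extra reduction lemma. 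A minor point common to both: the two density comparisons in fact produce $(1+\alpha L_p r)^2$, which is then absorbed into the stated $(1+\alpha L_p r)$ up to the universal constant.
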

\begin{proof}
Let us first consider the case in which $\eta$ takes the form $\eta = \mathds{1}_{[0,1]}$; as we will see the general case follows easily from this special case. As in \cite[Lemma 3.3]{BIK}, we may assume that $f$ is smooth and we write
\[  \int_{B_\M(x,r) } \abs{f(y)- f(x)}^2 d\mu(y) = \int_{B(r)} \abs{f(\exp_x(v))- f(x)}^2 p(\exp_x(v))J_x(v) dv  \]
where $J_x$ denotes the determinant of the Jacobian of the exponential map.
We recall from \eqref{eqn:EstimateJacobian} that there exists a constant $C>0$ such that
$J_x(v)$ is bounded from above by  $1+ CmKr^2 $ for all $v \in B(r)$.
From the fundamental theorem of calculus it follows that
\[\abs{f(\exp_x(v)) - f(x) }^2  \leq \int_{0}^{1}  \abs*{\frac{d}{dt} f(\exp_x(tv))}^2 dt = \int_{0}^{1}  \abs{ d f(\Phi_t(x,v))}^2 dt,\]
 In the above $\Phi_t$ denotes the time $t$ geodesic flow,  $\Phi_t(x,v) =( \gamma_{x,v}(t), \gamma'_{x,v}(t) )$, where $\gamma_{x,v}(t):= \exp_x(tv)$. The expression $d f(\Phi_t(x,v)$ has to be interpreted as: the form $df$ at $\gamma_{x,v}(t)$ acting on the tangent vector $\gamma'_{x,v}(t)$\nc. Therefore,
\begin{align*}
	A &\coloneqq \int_\M \int_{B(r)} \abs{f(\exp_x(v))-f(x)}^2 p(\exp_x(v)) d v\, p(x) d \vol(x) \\
	&\leq \int_0^1 \int_\M \int_{B(r)} \abs{df(\Phi_t(x,v))}^2 p(\Phi_1(x,v)_1) p(\Phi_0(x,v)_1) dv d\vol(x) dt 
\end{align*}
where $\xi \mapsto \xi_1$ denotes the projection of $\xi \in T \mathcal{M}$ on $\mathcal{M}$.
From the Lipschitz continuity of $p$, it follows that
$p(x) \leq (1+L_p\alpha r)p(y)$ for all $x,y \in\M$ where $d(x,y)\leq r$.
Using the fact that $\Phi_t$ preserves the canonical volume $\vol_{T\M}$ on $T\M$ and that
\[ \mathcal{B}_r  \coloneqq \{ \xi = (x,v)\in T\M : \abs{v} \leq r\}\]
is invariant under $\Phi_t$, see \cite[1.125]{Bess78}, we obtain after a change of variables
\begin{align*}
A&\leq (1+L_p\alpha r)^2 \int_0^1 \int_{\mathcal{B}_r } \abs{df(\Phi_t(\xi))} p^2(\Phi_t(\xi)_1) d \vol_{TM}(\xi) d t \\
	&= (1+L_p\alpha r)^2 \int_{\mathcal{B}_r} \abs{df(\xi)}^2p^2(\xi_1) d \vol_{TM}(\xi)\\
	&= (1+L_p\alpha r)^2 \int_\M \frac{\omega_m}{m+2} r^{m+2} \abs{\nabla f}^2 p^2(x) d\vol(x).
\end{align*} 
Using the previous computations, \eqref{eqn:EstimateJacobian} and Remark \ref{rem:sigmaetanum}, we deduce that
\begin{align}
\begin{split}
\label{eqn:assertionforindicator}
E_r(f) \leq (1+ CmKr^2) \cdot A  & \leq (1+CmKr^2)\cdot (1+ L_p \alpha r  )\frac{\omega_m}{m+2} r^{m+2} D(f) 
\\ & =  (1+CmKr^2)\cdot (1+ L_p \alpha r  ) \sigma_\eta r^{m+2} D(f)
\end{split}
\end{align}
for a universal constant $C$, which proves the claim for $\eta = \mathds{1}_{[0,1]}$. Now, notice that one easily obtains from the previous computations that \eqref{eqn:assertionforindicator} is still valid for $\eta$ of the form $\eta = \mathds{1}_{[0,t]}$ for some $0<t<1$. Finally, since $E_r(f)$ and $\sigma_\eta$ are linear in $\eta$, the statement holds if $\eta\colon [0,1]\to [0,\infty)$ is a decreasing step function (and hence can be written as linear combination of functions of the form $\mathds{1}_{[0,t]}$). By monotone convergence applied on both sides of the inequality, the assertion follows for any decreasing (and thus measurable) function $\eta$.\nc
\end{proof}

\begin{remark}
Note that in comparison to the case of constant $p$ treated in \cite[Lemma 3.3]{BIK}, the above estimates 
have the additional  term $(1+\alpha L_p r)$. 
\end{remark}
\nc

\begin{lemma}[{cf.\ \cite[Lemma 3.4]{BIK}}]
\label{lem:meanvariation} Suppose $h$ satisfies Assumptions \ref{stdassumptions}. Let $\veps<r<2h$, $f\in L^2(\M)$ and $V\subseteq \M$ a Borel set such that $\mu(V)>0$ and $\diam (V) \leq 2\veps$. Then 
 \[ \int_V\abs*{f(x) - \frac{1}{\mu(V)} \int_V fd\mu }^2 d \mu(x) \leq \frac{2(1+ Cm Kr^2)}{\eta(1/2)\omega_m(r-\veps)^m} E_{2r}(f,V).\]
\end{lemma}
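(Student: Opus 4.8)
The plan is to reduce the statement to a Poincar\'e-type inequality for balls in $\R^m$ by transplanting the problem, via the exponential map, to a Euclidean ball, and then bounding the nonlocal oscillation energy from below by the $L^2$-variance. First I would observe that since $\diam(V)\le 2\veps$, the set $V$ is contained in a geodesic ball $B_\M(x_0,2\veps)$ for any $x_0\in V$. Moreover, for any two points $x,y\in V$ we have $d(x,y)\le 2\veps$; enlarging to the ball of radius $r-\veps$ around a common point, every $y\in V$ lies in $B_\M(x,2\veps)$, and in particular $d(x,y)\le 2\veps< 2(r-\veps)$ once $\veps$ is small (which is guaranteed by Assumption \ref{stdassumptions}), so $\eta(d(x,y)/(2r))\ge \eta(1/2)$ by monotonicity of $\eta$ whenever $d(x,y)\le r$. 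The point is that on $V\times V$ the kernel $\eta(d(x,y)/(2r))$ in $E_{2r}(f,V)$ is bounded below by the positive constant $\eta(1/2)$.

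With that, the main estimate is the elementary identity for the variance: for any probability space and any $g\in L^2$,
\[
\int_V \abs*{f(x) - \tfrac{1}{\mu(V)}\int_V f\,d\mu}^2 d\mu(x) = \frac{1}{2\mu(V)} \int_V\int_V \abs{f(x)-f(y)}^2 d\mu(x)\,d\mu(y).
\]
Hence it suffices to show
\[
\int_V\int_V \abs{f(x)-f(y)}^2 d\mu(x)\,d\mu(y) \le \frac{4(1+CmKr^2)}{\eta(1/2)\,\omega_m (r-\veps)^m}\,\mu(V)\, E_{2r}(f,V),
\]
up to adjusting the universal constant $C$. Using the kernel lower bound $\eta(d(x,y)/(2r))\ge \eta(1/2)$ on $V\times V$, the double integral $\int_V\int_V\abs{f(x)-f(y)}^2 d\mu\,d\mu$ is at most $\frac{1}{\eta(1/2)}E_{2r}(f,V)$, so what remains is the bound $\mu(V)\ge \tfrac12\omega_m (r-\veps)^m/(1+CmKr^2)$ — wait, that is false in general since $V$ can be tiny. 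So the correct route is the reverse: I would insert a full ball to integrate against. Fix $x\in V$. Since $\diam(V)\le 2\veps$, for every $y\in V$, $B_\M(y,r-\veps)\supseteq$ a fixed ball; more precisely $B_\M(x,r-\veps)\subseteq B_\M(y,r)$ for all $y\in V$ because $d(x,y)\le 2\veps$ and I should take $r-\veps$ vs $r+\veps$ — I would use $d(x,z)\le r$ whenever $z\in B_\M(y,r-\veps)$ and $y,x\in V$ with $d(x,y)\le 2\veps$, requiring $(r-\veps)+2\veps = r+\veps$; instead take the ball $B_\M(y,r-\veps)$ and note it lies in $B_\M(x, r+\veps)\subseteq B_\M(x,2r)$, and critically $d(x,z)\le 2r$ there. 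The cleaner statement: for $z\in B_\M(y,r-\veps)$ and $y\in V$, $d(x,z)\le d(x,y)+d(y,z)\le 2\veps + (r-\veps) = r+\veps\le 2r$, so $\eta(d(x,z)/(2r))\ge\eta(1/2)$ by monotonicity since $(r+\veps)/(2r)\le 1/2$ fails — so one actually needs $\eta$ evaluated at arguments up to $(r+\veps)/(2r)$, which exceeds $1/2$.

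Given this subtlety, the honest plan is: for each $x\in V$ estimate, using $\abs{f(x)-\bar f_V}^2\le \frac{1}{\mu(V)}\int_V\abs{f(x)-f(y)}^2 d\mu(y)$ by Jensen, and then for each such pair bound $\abs{f(x)-f(y)}^2$ by averaging over an intermediate ball: write $\abs{f(x)-f(y)}^2\le 2\abs{f(x)-f(z)}^2 + 2\abs{f(z)-f(y)}^2$ and average in $z$ over $B_\M(y,r-\veps)$ (whose volume is at least $\omega_m(r-\veps)^m/(1+CmKr^2)$ by \eqref{eqn:EstimateVolumeBall2}). Since $B_\M(y,r-\veps)\subseteq B_\M(x,2r)$ and, crucially, all relevant geodesic distances $d(x,z)$ and $d(y,z)$ are at most $r$ — this is where I would choose the radius as $r-\veps$ so that $d(y,z)\le r-\veps<r$ and $d(x,z)\le r+\veps$; to make the kernel bound work with constant $\eta(1/2)$ one argues $d(x,z)\le (r-\veps)+2\veps$ and compares to $2r$, and since $r+\veps\le 2r$ (as $\veps<r$) we get $\eta(d(x,z)/(2r))\ge\eta((r+\veps)/(2r))$, and $(r+\veps)/(2r)\le 1/2 + \veps/(2r)$; to get exactly $\eta(1/2)$ one should instead use the ball of radius $\tfrac{r-\veps}{2}$ or note the statement tolerates the constant — I would track the constant carefully and absorb it, since $\eta$ is decreasing and $\veps<r$ keeps the argument bounded by $1$. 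After these manipulations, integrating the $z$-average back and using $E_{2r}(f,V)=\int_V\int_\M \eta(d(x,y)/(2r))\abs{f(y)-f(x)}^2 d\mu(y)d\mu(x)$ (note the inner integral is over all of $\M$, which contains the intermediate balls) yields the claimed bound with the factor $\tfrac{1}{\mu(V)}$ cancelling against the one from the variance identity and the ball-volume lower bound producing the $\omega_m(r-\veps)^m$ in the denominator and the $(1+CmKr^2)$ from \eqref{eqn:EstimateJacobian}.

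The main obstacle is the bookkeeping of which geodesic distances stay below the support threshold of $\eta$ so that the kernel can be replaced by the constant $\eta(1/2)$: one must choose the intermediate averaging ball's radius (here $r-\veps$) so that both legs $d(x,z)$ and $d(z,y)$ are controlled by $2r$ — in fact by a value where $\eta$ is bounded below by $\eta(1/2)$ — while keeping the ball large enough that its volume contributes the $(r-\veps)^m$ factor; the constraint $\veps<r<2h$ together with Assumption \ref{stdassumptions} is exactly what makes this possible. The Jacobian/volume comparisons are routine given \eqref{eqn:EstimateJacobian} and \eqref{eqn:EstimateVolumeBall2}, and the final constant $C$ is universal after absorbing the $m$-dependent volume-of-ball constants into the stated form.
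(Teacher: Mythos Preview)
Your overall plan --- reduce to $\int_V\int_V\abs{f(x)-f(y)}^2\,d\mu\,d\mu$ via the variance identity (or Jensen), insert an intermediate point $z$ averaged over a geodesic ball, bound that ball's measure from below via \eqref{eqn:EstimateVolumeBall2}, and finally pass from the indicator kernel to $\eta$ through Remark~\ref{rem:Erwithkernels} --- is exactly the argument the paper imports from \cite{BIK}. The gap is the step you flag and then hand-wave away. Centering the averaging ball at $y\in V$ with radius $r-\veps$ only yields $d(x,z)\le d(x,y)+d(y,z)\le 2\veps+(r-\veps)=r+\veps$, so the argument of $\eta(\cdot/(2r))$ can exceed $1/2$ and the lower bound $\eta(\cdot)\ge\eta(1/2)$ fails. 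Neither proposed repair works under the bare hypothesis $\veps<r$: shrinking to radius $(r-\veps)/2$ still gives $d(x,z)\le 2\veps+(r-\veps)/2$, which need not be $\le r$, and ``absorbing into the constant'' would replace the clean $\eta(1/2)$ in the denominator by an $\veps/r$-dependent quantity.

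The clean fix is to center the averaging ball at the geodesic midpoint $m_{xy}$ of $x$ and $y$. Since $d(x,m_{xy})=d(y,m_{xy})=\tfrac12 d(x,y)\le\veps$, every $z\in B_\M(m_{xy},r-\veps)$ satisfies both $d(x,z)\le r$ and $d(y,z)\le r$, so both legs land in $\tilde E_r(f,V)$ and hence, via Remark~\ref{rem:Erwithkernels}, in $\eta(1/2)^{-1}E_{2r}(f,V)$. The factor $2$ then arises exactly as you describe (the $\tfrac12$ from the variance identity cancels one of the two factors of $2$ coming from the triangle inequality and the $x\leftrightarrow y$ symmetry), and the $(r-\veps)^m$ and $(1+CmKr^2)$ come from the lower bound on $\mu\bigl(B_\M(m_{xy},r-\veps)\bigr)$. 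Alternatively, centering at any fixed point of $V$ with radius $r-2\veps$ also works and gives $(r-2\veps)^m$, which is equally good for the application in Lemma~\ref{lem:P*Palmostinverse}.
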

\begin{proof}
The proof is almost identical to  the proof of \cite[Lemma 3.4]{BIK}, replacing the volume with the measure $\mu$ and taking Remark \ref{rem:Erwithkernels} into account.
\end{proof}

Next we define a smoothening operator $\Lambda\colon  L^2(\M, \rho\mu) \rightarrow \Lip(\M)$ similar to the one introduced in \cite[Section 5]{BIK} but adapted to the kernel $\eta$. To this end, we first define a mapping $\psi \colon [0,\infty) \to [0,\infty)$ by
\[  \psi(t) \coloneqq \frac{1}{\sigma_\eta} \int_{t}^{\infty} \eta(s)s  ds.\]
Note that, as $\eta$ is supported on $[0,1]$, $\psi(t)=0$ for all $t\geq 1$.
\begin{remark}
We remark that for $\eta(t) =\mathds{1}_{[0,1]}(t)$ the above $\psi$ coincides with the kernel function used in  \cite[Section 5]{BIK}. 
\end{remark}
 For every $r>0$, we define the operator $\Lambda_r^0 \colon L^2(\M,\vol) \to \Lip(\M)$ by
\begin{equation} \label{smoothingLam}
 (\Lambda_r^0 f )(x) \coloneqq \int_\M f(y) k_r(x,y) d\vol(y)
\end{equation}
where
\[ k_r(x,y) \coloneqq \frac{1}{r^{m}} \psi \left(\frac{d(x,y)}{r} \right).\]
As in \cite[Definition 5.2]{BIK}, we define the smoothing operator $ \Lambda_r \colon L^2(\M,\rho \mu) \to \Lip(\M)$ by
\begin{equation}
\Lambda_rf(x) \coloneqq (\theta(x))^{-1} \Lambda_r^0f(x),
\label{eqn:SmootheningOp}
\end{equation}
where $\theta\coloneqq \Lambda_r^0 \mathds{1}$.
Note that the term $\theta$ is introduced so that $\Lambda_r$  preserves constant functions. 

Let us deduce some useful properties of the functions just introduced.
Since $\psi'(s)= - \frac{1}{\sigma_\eta} \eta(s) s$ for all $s\geq 0$, we obtain from the mean value theorem that
for any $0 \leq t \leq r$ there exists $\frac{t}{r} \leq s\leq 1$ such that
\[ \frac{1}{r^m} \psi\left(\frac{t}{r}\right) = \frac{1}{\sigma_\eta r^m} \eta(s) s \biggl(1-\frac{t}{r}\biggr). \]
Hence, by the monotonicity of $\eta$, we have
\begin{align}
\label{eqn:krvseta}
k_r(x,y) \leq \frac{1}{\sigma_\eta r^m} \eta\left(\frac{d(x,y)}{r}\right) 
\end{align}
for every $x,y \in \M$. If $d(x,y)\leq r$, then the gradient of the kernel $k_r$ can be written as
\begin{align}
\begin{split}
 \label{eqn:d_xK}
 \nabla k_r(\cdot,y)(x) &= \frac{1}{r^{m+1}}\psi' \left(\frac{d(x,y)}{r} \right) \frac{-\exp_x^{-1}(y) }{d(x,y)} \\
 &= \frac{1}{\sigma_\eta r^{m+2}} \eta \left( \frac{d(x,y)}{r} \right) \exp_x^{-1}(y)
 \end{split}
 \end{align}
where we refer to \cite[(2.6)]{BIK} for the gradient of the distance function.
Moreover, we have 
\begin{align}
\label{eqn:normingpsi}
 \int_{\R^m}\psi(\abs{x}) dx =1.
\end{align}
To see this, first note that using polar coordinates we obtain
\[ m \sigma_\eta = \sum_{i=1}^{m} \int_{\R^m} \eta(\abs{x}) x_i^2 dx = \int_{\R^m} \eta(\abs{x}) \abs{x}^2 dx =  m \omega_m  \int_{0}^\infty \eta(r) r^{m+1}dr,\]
where $\omega_m$ is the volume of the Euclidean unit ball in $\R^m$. Thus, using integration by parts and polar coordinates, it follows that
\begin{align*}
\int_{\R^m} \psi(\abs{x}) dx &= m \omega_m \int_{0}^{\infty} \psi(r) r^{m-1}dr \\
&= - \omega_m   \int_{0}^{\infty} \psi'(r)r^m dr \\
&= \frac{\omega_m}{\sigma_\eta} \int_{0}^{\infty} \eta(r) r^{m+1}dr =1. 
\end{align*}

For $\theta(x) \coloneqq \Lambda_r^0(\mathds{1})$ we now obtain the following bounds.

\begin{lemma}[cf.\ {\cite[Lemma 5.1]{BIK}}]
\label{lem:thetabound}
	There exists an absolute constant $C>0$ such that
	\[ (1+CmKr^2)^{-1} \leq \theta(x) \leq 1+CmKr^2 \]
	and $\abs{\nabla \theta(x)} \leq CmKr/\sigma_\eta$ for all $x\in \M$.
\end{lemma}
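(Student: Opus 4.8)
The plan is to follow the template of \cite[Lemma 5.1]{BIK}, pulling everything back to the tangent space via the exponential map and exploiting the two normalizations: $\int_{\R^m}\psi(\abs{w})\,dw=1$ from \eqref{eqn:normingpsi}, and $\int_{\R^m}\eta(\abs{w})\abs{w}^2\,dw = m\sigma_\eta$, which follows from the definition \eqref{def:sigma} of $\sigma_\eta$ together with the symmetry of $\eta(\abs{\cdot})$.

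First I would fix $x\in\M$. Since $\psi$ vanishes outside $[0,1]$, the kernel $k_r(x,\cdot)$ is supported in $B_\M(x,r)$, and -- $r$ being below the injectivity radius by the standing assumptions -- $\exp_x$ is a diffeomorphism from $B(r)\subseteq T_x\M$ onto $B_\M(x,r)$. The change of variables $y=\exp_x(v)$, with $d\vol(y)=J_x(v)\,dv$ and $d(x,y)=\abs{v}$, gives
\[ \theta(x)=\int_{B(r)}\frac{1}{r^m}\,\psi\!\left(\frac{\abs{v}}{r}\right)J_x(v)\,dv. \]
For the two-sided estimate I would insert the Jacobian bounds \eqref{eqn:EstimateJacobian}, $(1+CmK\abs{v}^2)^{-1}\le J_x(v)\le 1+CmK\abs{v}^2$, and use $\abs{v}\le r$ on $B(r)$ to replace them by $(1+CmKr^2)^{\pm 1}$. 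Pulling this constant out of the integral and rescaling $w=v/r$ turns the remaining integral into $\int_{B(1)}\psi(\abs{w})\,dw=\int_{\R^m}\psi(\abs{w})\,dw=1$ by \eqref{eqn:normingpsi} (extending to $\R^m$ is free since $\psi$ is supported on the unit ball). This yields $(1+CmKr^2)^{-1}\le\theta(x)\le 1+CmKr^2$.

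For the gradient I would differentiate under the integral sign, which is legitimate because $k_r(\cdot,y)$ is Lipschitz in $x$ uniformly in $y$ (as $\psi$ is Lipschitz) with gradient given a.e.\ by \eqref{eqn:d_xK}, the sphere $\{d(x,y)=r\}$ on which the formula fails being $\vol$-null. Using \eqref{eqn:d_xK} and the same change of variables,
\[ \nabla\theta(x)=\frac{1}{\sigma_\eta r^{m+2}}\int_{B(r)}\eta\!\left(\frac{\abs{v}}{r}\right)v\,J_x(v)\,dv. \]
The key point is that $v\mapsto\eta(\abs{v}/r)\,v$ is odd and $B(r)$ is a ball centered at the origin, so $\int_{B(r)}\eta(\abs{v}/r)\,v\,dv=0$; hence $J_x(v)$ may be replaced by $J_x(v)-1$, and \eqref{eqn:EstimateJacobian} gives $\abs{J_x(v)-1}\le CmK\abs{v}^2$. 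Therefore
\[ \abs{\nabla\theta(x)}\le\frac{CmK}{\sigma_\eta r^{m+2}}\int_{B(r)}\eta\!\left(\frac{\abs{v}}{r}\right)\abs{v}^3\,dv\le\frac{CmK}{\sigma_\eta r^{m+1}}\int_{B(r)}\eta\!\left(\frac{\abs{v}}{r}\right)\abs{v}^2\,dv, \]
using $\abs{v}\le r$; rescaling $w=v/r$ identifies the last integral with $r^{m+2}\int_{\R^m}\eta(\abs{w})\abs{w}^2\,dw=r^{m+2}m\sigma_\eta$, so $\abs{\nabla\theta(x)}\le Cm^2Kr$. Finally, since $m\sigma_\eta=\int_{\R^m}\eta(\abs{w})\abs{w}^2\,dw\le\int_{\R^m}\eta(\abs{w})\,dw=1$ (again $\abs{w}\le 1$ on the support, recalling \eqref{eq:normalizedkernel}), we get $Cm^2Kr\le CmKr/\sigma_\eta$, which is the asserted bound.

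The computation is routine once the exponential pullback is in place; the only step needing care is the justification of differentiating under the integral across the measure-zero set where $k_r$ fails to be $C^1$, and -- more conceptually -- the observation that the leading-order ($J_x\equiv 1$) contribution to $\nabla\theta$ vanishes by symmetry. That cancellation is exactly what makes $\nabla\theta$ of order $Kr$ rather than of order $1/r$, and is the heart of the statement.
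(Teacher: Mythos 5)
Your proof is correct and follows the same route as the paper: pull back to the tangent space, use the Jacobian bound \eqref{eqn:EstimateJacobian} and the normalization \eqref{eqn:normingpsi} for the two-sided bound on $\theta$, and for $\nabla\theta$ exploit the odd-symmetry cancellation $\int_{B(r)}\eta(\abs{v}/r)\,v\,dv=0$ so only $J_x(v)-1$ contributes. The only cosmetic difference is at the very end: the paper bounds $\abs{v}\,\abs{J_x(v)-1}\leq CmKr^3$ directly and integrates $\eta(\abs{v}/r)$ over $B(r)$ to land on $CmKr/\sigma_\eta$, whereas you first obtain the (slightly sharper) bound $Cm^2Kr$ via $\int\eta(\abs{w})\abs{w}^2\,dw=m\sigma_\eta$ and then convert it to the stated form using $m\sigma_\eta\leq 1$.
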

\begin{proof}
	We have
	\[ \theta(x) = \frac{1}{r^m} \int_{B_\M(x,r)} \psi\left(\frac{d(x,y)}{r}\right) d \vol(y) = \frac{1}{r^m} \int_{B(r)} \psi\left(\frac{\abs{v}}{r}\right) J_x(v) dv.\] 
	Thus, the first assertion now follows from \eqref{eqn:EstimateJacobian} and \eqref{eqn:normingpsi}. 
    Since \eqref{eqn:EstimateJacobian} implies $\abs{J_x(v)-1}\leq CmK\abs{v}^2$ 
	and since
	\[ \int_{B(r)} \psi\left(\frac{\abs{v}}{r}\right) v dv = 0\]
	for symmetry reasons, the bound on the gradient of $\theta$ can be obtained from \eqref{eqn:d_xK} as
	\begin{align*}
	\abs{\nabla \theta(x)} &= \frac{1}{\sigma_\eta r^{m+2}} \abs*{\int_{B_\M(x,r)} \psi\left( \frac{d(x,y)}{r} \right) \exp_x^{-1}(y) d\vol(y) } \\
	&= \frac{1}{\sigma_\eta r^{m+2}} \abs*{ \int_{B(r)} \psi\left(\frac{\abs{v}}{r} \right) v  J_x(v)  dv - \int_{B(r)} \psi\left(\frac{\abs{v}}{r}\right)v dv} \\
	&= \frac{1}{\sigma_\eta r^{m+2}} \abs*{ \int_{B(r)} \psi\left(\frac{\abs{v}}{r} \right) v  (J_x(v)-1)  dv} \\
	& \leq \frac{CmKr^3}{\sigma_\eta r^{m+2}} \int_{B(r)} \psi\left( \frac{\abs{v}}{r} \right) dv = \frac{CmKr}{\sigma_\eta}. 
	\end{align*}
\end{proof}

In order to establish the following properties of $\Lambda_r$
we make use of the fact that the densities $p$ and $\rho$ are Lipschitz continuous and are bounded from
below. Thus 
\[p(x) \leq (1+L_p\alpha r)p(y) \quad \text{and}\quad \rho(x)\leq (1+L_\rho\alpha r)\rho(y)\]
 whenever $d(x,y) \leq r$.

%

\begin{lemma}[cf.\ {\cite[Lemma 5.4]{BIK}}]
	\label{lem:Lambdaf-f}
 Suppose that $h$ satisfies Assumptions \ref{stdassumptions} \nc. Then, there exists a universal constant $C>0$ such that 
  \[ \norm{\Lambda_r f}_{L^2(\M, \rho\mu)}^2 \leq  (1+ \alpha L_p r)(1+\alpha L_\rho r)(1+ Cm Kr^2) \lVert f \rVert^2_{L^2(\M, \rho \mu)}  \]
  \nc
and
	\[ \norm{\Lambda_r f-f}_{L^2(\M,\rho\mu)}^2 \leq \frac{C\alpha^2}{\sigma_\eta r^m} E_r(f)\]
	for all $f\in L^2(\M)$ and all $r<2h$.
\end{lemma}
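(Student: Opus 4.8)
The plan is to prove the two estimates separately, both by reducing to pointwise bounds on $\Lambda_r f$ and $\Lambda_r f - f$ and then integrating against $\rho\mu$. Recall $\Lambda_r f(x) = \theta(x)^{-1}\Lambda_r^0 f(x) = \theta(x)^{-1}\int_\M f(y)k_r(x,y)\,d\vol(y)$, and from Lemma \ref{lem:thetabound} we have $\theta(x)^{-1}\leq 1+CmKr^2$. The key structural fact is that the kernel $k_r(x,y)$ is, up to the normalization $\theta$, a probability kernel: by \eqref{eqn:normingpsi} one has $\int_\M k_r(x,y)\,d\vol(y) = \theta(x)$ and hence $\theta(x)^{-1}\int_\M k_r(x,y)\,d\vol(y) = 1$. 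So $\Lambda_r f(x)$ is a weighted average of the values $f(y)$ over $y\in B_\M(x,r)$, and both estimates are then instances of Jensen's inequality plus the usual change-of-variables via $\exp_x$ and the Jacobian bound \eqref{eqn:EstimateJacobian}.

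For the first (boundedness) estimate, I would write, for the measure $d\nu_x(y) := \theta(x)^{-1}k_r(x,y)\,d\vol(y)$ which is a probability measure on $B_\M(x,r)$,
\[
|\Lambda_r f(x)|^2 = \Bigl|\int_\M f(y)\,d\nu_x(y)\Bigr|^2 \leq \int_\M |f(y)|^2\,d\nu_x(y) = \theta(x)^{-1}\int_\M |f(y)|^2 k_r(x,y)\,d\vol(y).
\]
Then I multiply by $\rho(x)p(x)$ and integrate in $x$; applying Fubini, using $\theta(x)^{-1}\leq 1+CmKr^2$, the pointwise bound \eqref{eqn:krvseta} for $k_r$ in terms of $\eta$, and then the Lipschitz comparisons $p(x)\leq(1+\alpha L_p r)p(y)$ and $\rho(x)\leq(1+\alpha L_\rho r)\rho(y)$ valid when $d(x,y)\leq r$, the $x$-integral of $\tfrac{1}{\sigma_\eta r^m}\eta(d(x,y)/r)$ becomes (after the change of variables $x = \exp_y(v)$ and the Jacobian estimate) $1$ up to a factor $(1+CmKr^2)$, using the normalization $\int_{\R^m}\eta(|v|)\,dv=1$. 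Collecting the error factors yields the claimed $(1+\alpha L_p r)(1+\alpha L_\rho r)(1+CmKr^2)$.

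For the second estimate, I use that $\Lambda_r^0$ reproduces $\theta\cdot$(constant) and hence $\Lambda_r f(x)-f(x) = \theta(x)^{-1}\int_\M (f(y)-f(x))k_r(x,y)\,d\vol(y)$; Jensen's inequality against the probability measure $d\nu_x$ gives
\[
|\Lambda_r f(x)-f(x)|^2 \leq \theta(x)^{-1}\int_\M |f(y)-f(x)|^2 k_r(x,y)\,d\vol(y).
\]
Multiplying by $\rho(x)p(x)$, using $\theta^{-1}\leq C$, $\rho(x)\leq\alpha$, $p(x)\leq\alpha$, and \eqref{eqn:krvseta} to replace $k_r$ by $\tfrac{1}{\sigma_\eta r^m}\eta(d(x,y)/r)$, we obtain
\[
\|\Lambda_r f - f\|_{L^2(\M,\rho\mu)}^2 \leq \frac{C\alpha^2}{\sigma_\eta r^m}\int_\M\int_\M \eta\Bigl(\frac{d(x,y)}{r}\Bigr)|f(y)-f(x)|^2\,d\mu(y)\,d\mu(x) = \frac{C\alpha^2}{\sigma_\eta r^m}E_r(f),
\]
where in the last step I absorbed the factor $1/p^2$ coming from passing from $d\vol$ to $d\mu$ into the constant (using $p\geq 1/\alpha$) — this is where the $\alpha^2$ comes from. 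The main obstacle is purely bookkeeping: one must be careful that every comparison ($\theta^{-1}$, Lipschitz factors, Jacobian) is only used on the support $d(x,y)\leq r$ and that the constants depending on $m,K,\alpha,L_p,L_\rho$ are correctly tracked; there is no essential difficulty beyond what appears already in \cite[Lemma 5.4]{BIK}, the only new ingredient being the extra density $\rho$ and the passage between $\vol$ and $\mu=p\,\vol$.
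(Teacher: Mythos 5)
Your argument for the second estimate is correct and matches the paper's: Jensen against the probability measure $\theta(x)^{-1}k_r(x,y)\,d\vol(y)$, then $\theta^{-1}\leq C$, the pointwise bound \eqref{eqn:krvseta}, and $\rho\leq\alpha$, $1/p\leq\alpha$ to pass from $d\vol$ to $d\mu$, giving the $\alpha^2$.

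For the first estimate there is a genuine gap in one step. After Jensen, Fubini, $\theta(x)^{-1}\leq 1+CmKr^2$, and the Lipschitz comparisons on $p,\rho$, what remains is to bound the $x$-integral. You replace $k_r(x,y)$ by $\frac{1}{\sigma_\eta r^m}\eta(d(x,y)/r)$ via \eqref{eqn:krvseta} and then claim this $x$-integral is $\approx 1$. That is not so: after the change of variables and the Jacobian bound you get
\[
\int_\M \frac{1}{\sigma_\eta r^m}\eta\Bigl(\frac{d(x,y)}{r}\Bigr)\,d\vol(x) \;\leq\; (1+CmKr^2)\,\frac{1}{\sigma_\eta}\int_{\R^m}\eta(\abs{w})\,dw \;=\; \frac{1+CmKr^2}{\sigma_\eta},
\]
and the factor $1/\sigma_\eta$ does not cancel (for $\eta=\frac{1}{\omega_m}\mathds{1}_{[0,1]}$ one has $1/\sigma_\eta = m+2$). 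The inequality \eqref{eqn:krvseta} is a pointwise majorant whose integral is $1/\sigma_\eta$, not $1$; it is $k_r$ itself, not this majorant, that is normalized, via \eqref{eqn:normingpsi}. So your argument proves a weaker statement with an extra $\eta$-dependent factor $1/\sigma_\eta$, which is incompatible with the claimed universal constant $C$ and with the $(1+\text{small})$ structure the lemma is designed to give. The fix is to leave $k_r$ alone: by the symmetry $k_r(x,y)=k_r(y,x)$ one has $\int_\M k_r(x,y)\,d\vol(x) = \theta(y) \leq 1+CmKr^2$ directly from Lemma \ref{lem:thetabound}, and combining this with $\theta(x)^{-1}\leq 1+CmKr^2$ and the Lipschitz factors yields exactly the stated bound (with the two $(1+CmKr^2)$ factors absorbed into one by enlarging $C$). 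This is what the paper's proof does, albeit tersely, by citing Lemma \ref{lem:thetabound} and the Lipschitz continuity of $p,\rho$.
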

\begin{proof}
 The first assertion follows from Jensen's inequality,
\begin{align*}
\int_{\M}(\Lambda_r f(x))^2 \rho(x) d \mu(x) & \leq \int_\M \int_\M \frac{K_r(x,y)}{\theta(x)}\rho(x) (f(y))^2 d \vol(y) d\mu(x)  
\\& \leq (1+ \alpha L_p r)(1+\alpha L_\rho r)(1+ CmKr^2) \lVert f \rVert_{L^2(\M, \rho \mu)}^2,
\end{align*}
where the last inequality follows from the Lipschitz continuity of $p$ and $\rho$ together with the estimates from Lemma \ref{lem:thetabound}.
\nc

For the second assertion notice that as in the proof of \cite[Lemma 5.4]{BIK} we can conclude that for a.e. $x$
\[ \abs{\Lambda_rf(x) - f(x)}^2 \leq \frac{1}{\theta(x)}\int_{B_\M(x,r)} k_r(x,y) \abs{f(y)-f(x)}^2 d\vol(y).\]
Integrating this inequality with respect to $\rho\mu$ and using \eqref{eqn:krvseta} we obtain that
	 \begin{align*}
	 \norm{\Lambda_r & f- f}_{L^2(\M,\rho\mu)}^2 \\ & \leq \frac{1+CmKr^2}{\sigma_\eta r^m}  \int_\M \int_\M \eta\left(\frac{d(x,y)}{r}\right) \abs{f(x)-f(y)}^2 d\vol(y) \rho(x) d\mu(x)  \\
&	\leq \frac{C}{\sigma_\eta r^m} \alpha^2 E_r(f).
	\end{align*}
\end{proof}

\begin{lemma}[[cf.\ {\cite[Lemma 5.5]{BIK}}]
	\label{lem:normdLambdaf} 
	 Suppose that $h$ satisfies Assumptions \ref{stdassumptions}. 
	Then, there exists a universal constant $C>0$ such that 
		\[ D(\Lambda_rf) \leq (1+\alpha L_pr)\cdot (1+ C(1+ 1/\sigma_\eta)mKr^2  ) \frac{1}{\sigma_\eta r^{m+2}} E_r(f) \]
		\nc
	for every $f\in L^2(\M)$ and every $0< r < 2h$.
\end{lemma}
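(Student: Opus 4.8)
The plan is to follow the proof of \cite[Lemma 5.5]{BIK}, adapting it to a general kernel $\eta$ and a Lipschitz density $p$; the new ingredients are the surface tension $\sigma_\eta$ replacing explicit dimensional constants and the factor $(1+\alpha L_p r)$ produced by the non-constant density. By density it suffices to treat smooth $f$, since both sides are continuous for $L^2(\M,\mu)$-convergence ($\Lambda_r$ maps $L^2$ boundedly into $\Lip(\M)\subset H^1(\M)$ because $k_r$ and, by \eqref{eqn:d_xK}, $\nabla_x k_r$ are bounded). For smooth $f$ the function $\Lambda_r f$ is Lipschitz, hence in $H^1(\M)$, and differentiating under the integral gives $\nabla\Lambda_r f(x) = \theta(x)^{-1}\int_\M f(y)\nabla_x k_r(x,y)\,d\vol(y) - \theta(x)^{-2}(\Lambda_r^0 f)(x)\nabla\theta(x)$. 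Since $\int_\M\nabla_x k_r(x,y)\,d\vol(y) = \nabla\theta(x)$, since $\theta^{-1}\Lambda_r^0 f = \Lambda_r f$, and crucially since $\Lambda_r$ reproduces constants, I would rewrite this as $\nabla\Lambda_r f(x) = T_1(x) + T_2(x)$ with $T_1(x) := \theta(x)^{-1}\int_\M(f(y)-f(x))\nabla_x k_r(x,y)\,d\vol(y)$ the main term and $T_2(x) := \theta(x)^{-1}(f(x)-\Lambda_r f(x))\nabla\theta(x)$ a remainder.

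For the main term I would substitute \eqref{eqn:d_xK} and pass to geodesic normal coordinates at $x$ (writing $y = \exp_x v$, $|v| = d(x,y)$, $d\vol(y) = J_x(v)\,dv$), obtaining $T_1(x) = \frac{\theta(x)^{-1}}{\sigma_\eta r^{m+2}}\int_{B(r)}(f(\exp_x v)-f(x))\eta(|v|/r)\,v\,J_x(v)\,dv$. The sharp constant $\frac{1}{\sigma_\eta r^{m+2}}$, with \emph{no} spurious factor of $m$, is obtained if, instead of estimating componentwise, I test against the unit vector $e = T_1(x)/|T_1(x)|$ and apply the Cauchy--Schwarz inequality to $T_1(x)\cdot e$ with respect to the measure $\eta(|v|/r)J_x(v)\,dv$, using the moment identity $\int_{\R^m}(u\cdot e)^2\eta(|u|)\,du = \int_{\R^m}u_1^2\eta(|u|)\,du = \sigma_\eta$, valid for every unit $e$ by rotational invariance of $\eta(|\cdot|)$, together with $J_x(v)\le 1+CmKr^2$ from \eqref{eqn:EstimateJacobian}. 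This yields the pointwise bound $|T_1(x)|^2 \le \frac{(1+CmKr^2)\theta(x)^{-2}}{\sigma_\eta r^{m+2}}\int_{B_\M(x,r)}|f(y)-f(x)|^2\eta(d(x,y)/r)\,d\vol(y)$. Integrating against $p^2\,d\vol$, bounding $\theta^{-2}$ by Lemma \ref{lem:thetabound}, and replacing $p(x)^2$ by $p(x)p(y)$ at the cost of a factor $(1+\alpha L_p r)$ (Lipschitz continuity of $p$, bounded below, with $d(x,y)\le r$ on the support of $\eta(d(x,y)/r)$), I would arrive at $\int_\M|T_1|^2 p^2\,d\vol \le (1+\alpha L_p r)(1+CmKr^2)^3\,\frac{1}{\sigma_\eta r^{m+2}}E_r(f)$.

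For the remainder, Lemma \ref{lem:thetabound} gives $|\nabla\theta|\le CmKr/\sigma_\eta$ and $\theta^{-1}\le 1+CmKr^2$, so, writing $\|g\|^2 := \int_\M|g|^2 p^2\,d\vol$, one has $\|T_2\|^2 \le C(mKr/\sigma_\eta)^2\|f-\Lambda_r f\|^2$; bounding $\|f-\Lambda_r f\|^2$ by $\alpha^3\lVert f-\Lambda_r f\rVert_{L^2(\M,\rho\mu)}^2$ and invoking the first estimate of Lemma \ref{lem:Lambdaf-f} shows this is $\frac{1}{\sigma_\eta r^{m+2}}$ times a factor of order $(mKr^2)^2/\sigma_\eta^2$, hence of higher order in $r$ than the correction in the statement. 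Finally $D(\Lambda_r f) = \|T_1+T_2\|^2 \le (\|T_1\|+\|T_2\|)^2$, and collecting the two estimates, using $mKr^2<4$ from Assumptions \ref{stdassumptions}, all curvature- and Lipschitz-type errors combine into $1 + C(1+1/\sigma_\eta)mKr^2$, which gives the claim. The main obstacle is the middle step: getting the constant \emph{exactly} $\frac{1}{\sigma_\eta r^{m+2}}$, so that it cancels the factor $\sigma_\eta r^{m+2}$ of Lemma \ref{lem:Ervsdf} — this is precisely where the specific choice of the kernel $\psi$ (through $\psi' = -\eta(\cdot)\cdot/\sigma_\eta$) in the definition of $\Lambda_r$ enters, via the second-moment identity for $\eta$; everything else is careful bookkeeping of the error terms.
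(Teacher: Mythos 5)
Your decomposition $\nabla\Lambda_r f = T_1 + T_2$ is algebraically identical to the paper's $\theta^{-1}A_1 + A_2$ (a short computation shows $A_2 = \nabla(\theta^{-1})\cdot\int k_r(x,y)(f(y)-f(x))\,d\vol(y) = \theta^{-1}(f-\Lambda_r f)\nabla\theta = T_2$), and your treatment of the main term $T_1$ — test against the unit vector $T_1/|T_1|$, Cauchy--Schwarz against the measure $\eta(|v|/r)J_x(v)\,dv$, use the second-moment identity $\int_{B(r)}\langle v,w\rangle^2\eta(|v|/r)\,dv=\sigma_\eta r^{m+2}$, then pay $(1+\alpha L_p r)$ to replace $p^2(x)$ by $p(x)p(y)$ — is exactly the paper's argument, so this part is on the mark. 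The one genuine deviation is the remainder term: the paper estimates $A_2$ directly via Cauchy--Schwarz with the kernel $k_r$ and the bound $\abs{\nabla(\theta^{-1})}\le CmKr/\sigma_\eta$, arriving at $\norm{A_2}_{L^2(p^2\vol)}^2\leq C(1+\alpha L_p r)\,m^2K^2r^2\,(\sigma_\eta^3 r^m)^{-1}E_r(f)$ with a truly universal $C$; you instead route through the \emph{second} inequality of Lemma~\ref{lem:Lambdaf-f} (you wrote ``the first estimate'', but it is the one for $\norm{\Lambda_r f-f}$ that you need), which is stated in $L^2(\M,\rho\mu)$, so you must first convert $L^2(p^2\vol)$ to $L^2(\rho\mu)$ — the correct factor is $\alpha^2$ (since $p^2/(\rho p)=p/\rho\leq\alpha^2$), not $\alpha^3$ — and the $\alpha^2$ already inside Lemma~\ref{lem:Lambdaf-f} compounds it further, so your constant $C$ picks up an $\alpha$-dependence. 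This is harmless in context (every downstream constant already depends on $\alpha$), but it falls slightly short of the lemma's stated ``universal constant $C$''; the paper avoids the detour precisely to keep $C$ universal. Everything else — the higher power $(1+CmKr^2)^3$ absorbing into $1+C'mKr^2$ under Assumption~\ref{stdassumptions}, the identification of $1/\sigma_\eta r^{m+2}$ as the sharp prefactor cancelling Lemma~\ref{lem:Ervsdf} — is correct.
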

\begin{proof}
	We can write
	\[ \nabla (\Lambda_r f) = \frac{1}{\theta(x)} A_1(x) + A_2(x) \]
	where 
	\[ A_1(x) \coloneqq \int_{B_\M(x,r)} \nabla k_r(\cdot,y)(x) (f(y)-f(x)) d\vol(y)\]
	and 
	\[ A_2(x) \coloneqq \nabla (\theta^{-1})(x)\int_{B_\M(x,r)} k_r(x,y) (f(y)-f(x)) d\vol(y).\]
Regarding $A_1$ we have $\abs{A_1(x)}= \applied{A_1(x)}{w}$ for some unit vector $w\in T_x\M$. 
Therefore, using \eqref{eqn:d_xK},
\begin{align*}
 \abs{A_1(x)}&=\applied{ A_1(x)}{ w} \\
 &= \frac{1}{\sigma_\eta r^{m+2}} \int_{B_\M(x,r)}\eta\left( \frac{d(x,y)}{r} \right) (f(y) - f(x)) \applied{\exp_x^{-1}(y)}{w} d\vol(y) \\ 
 &= \frac{1}{\sigma_\eta r^{m+2}} \int_{B(r)} \eta\left(\frac{\abs{v}}{r}\right) \varphi(v) \applied{v}{w} J_x(v) dv .
\end{align*}
where $\varphi(v) \coloneqq f(\exp_x(v))-f(x)$. 
By the Cauchy-Schwartz inequality,
\begin{align*}
 \abs{A_1(x)}^2 &\leq  \frac{1}{\sigma_\eta^2 r^{2(m+2)}} \int_{B(r)} \abs{\varphi(v)}^2 J_x(v)^2 \eta \left(\frac{\abs{v}}{r}\right) dv \,  \int_{B(r)} \langle v,w \rangle ^2 \eta \left(\frac{\abs{v}}{r}\right) dv \\
 & = \frac{1}{\sigma_\eta r^{m+2}}\int_{B(r)} \abs{\varphi(v)}^2 J_x(v)^2 \eta \left(\frac{\abs{v}}{r} \right) dv  
 \end{align*}
where, in the last step, we used radial symmetry to conclude that
 \[ \int_{B(r)} \applied{ v}{w}^2 \eta \left(\frac{\abs{v}}{r}\right) dv  = r^{m+2} \int_{B(1)} u_1^2 \eta(\abs{u}) du = r^{m+2} \sigma_\eta.\]
Now we obtain from \eqref{eqn:EstimateJacobian} that 
\begin{align*}
 \abs{A_1(x)}^2 &\leq \frac{1+ CmKr^2}{\sigma_\eta r^{m+2}} \int_{B(r)} \abs{\varphi(v)}^2\eta \left(\frac{\abs{v}}{r}\right) J_x(v)  dv \\
 &= \frac{1+CmK r^2}{\sigma_\eta r^{m+2}} \int_\M \eta\left(\frac{d(x,y)}{r}\right) (f(y)-f(x))^2 d\vol(y)
\end{align*}
\nc
 Integrating this inequality with respect to the density $p^2$ 
	and using the Lipschitz continuity of $p$, we obtain 
	\begin{align*}
&	\norm{A_1}_{L^2(\M,p^2\vol)}^2 \\
	&\leq \frac{1+ CmKr^2}{\sigma_\eta r^{m+2}} \! \int_\M \!\int_{B_\M(x,r)} \eta\left(\frac{d(x,y)}{r}\right) \abs{f(y)-f(x)}^2 d \vol(y) p^2(x) d\vol(x) \\
	&\leq \frac{(1+\alpha L_p r)(1+CmKr^2)}{\sigma_\eta r^{m+2}} \!\! \int_\M \! \int_{B_\M(x,r)} \!\!\eta\left(\frac{d(x,y)}{r}\right) \abs{f(y)-f(x)}^2 d \mu(y) d\mu(x)\\
	&\leq \frac{(1+\alpha L_p r)(1+CmKr^2)}{\sigma_\eta r^{m+2}} E_r(f).
	\end{align*}
Regarding $A_2$, first note that $\abs{\nabla (\theta^{-1})}\leq CmKr/\sigma_\eta$ and $\theta \leq C$ by Lemma \ref{lem:thetabound}. Therefore,
by the Cauchy-Schwartz inequality and \eqref{eqn:krvseta}, we obtain
\begin{align*}
	\abs{A_2(x)}^2 &\leq \abs{ \nabla (\theta^{-1})}^2 \int_\M k_r(x,y)dy \int_\M \abs{f(y)-f(x)}^2k_r(x,y) d\vol(y) \\
	&= \abs{\nabla (\theta^{-1})}^2 \theta(x) \int_\M \abs{f(y)-f(x)}^2k_r(x,y) d\vol(y) \\
	&\leq  \frac{C m^2 K^2 r^2}{\sigma_\eta^{3} r^m} \int_\M \eta\left(\frac{d(x,y)}{r}\right) \abs{f(y)-f(x)}^2 d \vol(y)
\end{align*}
Integrating this inequality with respect to the density $p^2$ while using the Lipschitz continuity of $p$ shows that
\[  \norm{A_2}_{L^2(\M,p^2\vol)}  \leq  \frac{C(1+ \alpha L_p r) mK r^2}{\sigma_\eta} \sqrt{\frac{1}{\sigma_\eta r^{m+2}} E_r(f) } \]
for some universal constant $C$. By combining these estimates and the lower bound for $\theta$ from Lemma \eqref{lem:thetabound} we obtain that
\[ D(\Lambda_r f)^\frac{1}{2} \leq (1+\alpha L_p r)\cdot ( 1+ C(1+ 1/\sigma_\eta) mKr^2 )  \sqrt{\frac{1}{\sigma_\eta r^{m+2}} E_r(f) }. \]
Hence the claim follows.
\end{proof}

\section{Convergence of eigenvalues}
\label{sec:Eigenvalues}

In order to prove Theorem \ref{thm:conveigenvalues} we estimate the discrete Dirichlet form \eqref{eqn:discreteform} in terms of the continuous one \eqref{eqn:contform} while we interpolate and discretize between the graph and the manifold 
in an almost isometric manner using the mappings $P$, $P^*$ from \eqref{def:P}, \eqref{def:P*} and $\Lambda_r$ from \eqref{eqn:SmootheningOp}.
We start this section with some preliminary lemmas.
\nc

\medskip

\begin{lemma}
\label{lem:etaestimates}
Let us assume that the support of $\eta$ is contained in $[0,1]$ and that $\eta$ is Lipschitz in $[0,1]$. Then, for all $r,s>0$ and $t\geq 0$ we have
	\begin{enumerate}[(i)]
	\item $\eta\left(\frac{t}{r+s}\right) \leq \eta\left(\frac{(t-s)_+}{r}\right) \leq \eta\left(\frac{t}{r+s}\right) + L_\eta\frac{s}{r}\mathds{1}_{\{t\leq r+s\}}$
	\item $\eta\left( \frac{t+s}{r} \right) \geq \eta\left( \frac{t}{r-s} \right) - L_\eta \frac{s}{r} \mathds{1}_{\left\{ t \leq r-s \right\}}$ provided that $s < r$.\nc
	\end{enumerate}
where $L_\eta > 0$ denotes the Lipschitz constant of $\eta$ restricted to $[0,1]$.
\end{lemma}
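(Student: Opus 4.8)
Both estimates rest on just two features of $\eta$: it is non-increasing, and its restriction to $[0,1]$ is $L_\eta$-Lipschitz, while $\eta\equiv 0$ on $(1,\infty)$ since $\supp\eta\subseteq[0,1]$. In each inequality the plan is to first compare the \emph{arguments} of $\eta$ — monotonicity then transfers this to the values — and then to control the remaining defect by the Lipschitz bound, after separately handling the range in which an argument of $\eta$ exceeds $1$ (there the relevant value of $\eta$ is $0$ and the inequality is trivial).

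For part (i) I would argue as follows. For the left inequality, if $t>r+s$ then $\tfrac{t}{r+s}>1$ and the left-hand side vanishes, so nothing is to prove. If $t\le r+s$, then $\tfrac{(t-s)_+}{r}\le\tfrac{t}{r+s}$: this is clear when $t\le s$ (then $(t-s)_+=0$), and when $t>s$ it amounts to $(t-s)(r+s)\le tr$, i.e.\ to $s(t-r-s)\le 0$, i.e.\ to $t\le r+s$; monotonicity of $\eta$ then gives the claim. For the right inequality, if $t>r+s$ then $(t-s)_+>r$, so $\eta\bigl(\tfrac{(t-s)_+}{r}\bigr)=0$ and the bound holds; if $t\le r+s$, both arguments lie in $[0,1]$, and combining the inequality just shown with the Lipschitz bound gives
\[
\eta\Bigl(\tfrac{(t-s)_+}{r}\Bigr)-\eta\Bigl(\tfrac{t}{r+s}\Bigr)\ \le\ L_\eta\Bigl(\tfrac{t}{r+s}-\tfrac{(t-s)_+}{r}\Bigr)\ \le\ L_\eta\,\frac{s}{r},
\]
where the last step follows from $\tfrac{t}{r+s}\le\tfrac{s}{r+s}\le\tfrac{s}{r}$ when $t\le s$, and from $\tfrac{t}{r+s}-\tfrac{t-s}{r}=\tfrac{s(r+s-t)}{r(r+s)}\le\tfrac{s}{r+s}\le\tfrac{s}{r}$ when $t>s$ (using $r+s-t\le r$).

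Part (ii) is handled in parallel. Assume $s<r$, so $r-s>0$. If $t>r-s$ then $\tfrac{t}{r-s}>1$, so $\eta\bigl(\tfrac{t}{r-s}\bigr)=0$ and the asserted inequality is immediate. If $t\le r-s$, then both $\tfrac{t}{r-s}$ and $\tfrac{t+s}{r}$ lie in $[0,1]$ (since $t+s\le r$), and cross-multiplication shows $\tfrac{t}{r-s}\le\tfrac{t+s}{r}$ — equivalently $0\le s(r-t-s)$, which holds. Monotonicity together with the Lipschitz bound then yields
\[
\eta\Bigl(\tfrac{t}{r-s}\Bigr)-\eta\Bigl(\tfrac{t+s}{r}\Bigr)\ \le\ L_\eta\Bigl(\tfrac{t+s}{r}-\tfrac{t}{r-s}\Bigr)\ =\ L_\eta\,\frac{s(r-t-s)}{r(r-s)}\ \le\ L_\eta\,\frac{s}{r},
\]
using $0\le r-t-s\le r-s$; rearranging gives the statement.

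I do not expect any genuine difficulty here — every step is a one-line algebraic verification. The only thing that needs care is the bookkeeping of the case splits, in particular keeping track of precisely when an argument of $\eta$ leaves $[0,1]$ (so that the compact support of $\eta$, rather than its Lipschitz continuity, is the operative fact) and correctly dealing with the positive part $(t-s)_+$ appearing in part~(i).
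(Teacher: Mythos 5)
Your proof is correct and follows essentially the same route as the paper's: compare arguments using monotonicity of $\eta$ for the left-hand inequalities, invoke the Lipschitz bound for the right-hand ones, and use compact support to handle the range where an argument exceeds $1$. You are slightly more explicit than the paper in spelling out the $t\le s$ versus $t>s$ split in part (i) and in writing out part (ii), which the paper dismisses as ``completely analogous,'' but there is no substantive difference in approach.
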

\begin{proof}
Regarding assertion (i) first note that every term vanishes for $t> r+s$.
In order to prove the first inequality in the remaining case, we need to verify that
$(t-s)/r \leq t/(r+s)$ provided that $t \leq r+s$.
This follows from 
\begin{align*}
	\label{eqn:etaarguments}
	\frac{t}{r+s} - \frac{t-s}{r} = \frac{rt - (r+s)(t-s)}{r(r+s)} = \frac{s}{r}\frac{r+s-t}{r+s}= \frac{s}{r} \left(1-\frac{t}{r+s}\right) > 0.
\end{align*}
Combining this estimate with the Lipschitz continuity of $\eta$ shows that
\begin{align*}
0& \leq \eta\left(\frac{(t-s)_+}{r}\right) - \eta\left(\frac{t}{r+s}\right)  \leq  L_\eta \left( \frac{t}{r+s} - \frac{(t-s)_+}{r}  \right) \\
& \leq L_\eta \left( \frac{t}{r+s} - \frac{t-s}{r}  \right)  = L_\eta   \frac{s}{r} \left(1-\frac{t}{r+s} \right)  \leq L_\eta\frac{s}{r} 
\end{align*}
\nc
which implies the second inequality of assertion (i). The proof of assertion (ii) is completely analogous.
\end{proof}

The next results relate the operators $P$ and $P^*$ defined in \eqref{def:P} and \eqref{def:P*}. In particular, we show that $P$ and $P^*$ are almost adjoint to each other and that $P^*$ is almost an isometry. In case $\vec{m}\mu_n=\mu_n$ and $\rho \mu= \mu$, 
(i.e. in case $m = (1,\dots,1)$ and $\rho\equiv 1$) then $P$ and $P^*$ are truly adjoint to each other and $P^*$ is truly an isometry.
\begin{lemma}
\label{lem:P*almostisometric}
For all  $u\in L^2(X)$ and $f\in L^2(\M)$ 
\[  \left \lvert \applied{P^* u}{f}_{L^2(\M,\rho \mu)} \!- \!\applied{u}{Pf}_{L^2(X,\vec{m}\mu_n)}  \right \rvert  \leq \alpha (\lVert \vec{m}- \rho \rVert_{\infty} + \veps L_\rho )\applied{P^*\abs{u}}{\abs{f}}_{L^2(\M,\rho \mu)} \] 
and
\[ 	\abs[\big]{\norm{P^*u}^2_{L^2(\M,\rho \mu)} - \norm{u}^2_{L^2(X,\vec{m}\mu_n)} }  \leq  \alpha(\lVert \vec{m}- \rho \rVert_{\infty} + \veps L_\rho   ) \norm{P^*u}^2_{L^2(\M,\rho \mu)}. \]
\nc
Moreover, if we assume that  $\alpha \lVert \vec{m} - \rho\rVert_\infty \leq \frac{1}{2}$ then $\forall u \in L^2(X)$,
\[ \lVert P^* u \rVert^2_{L^2(\M, \rho \mu)} \leq 2(1+ \alpha L_\rho \veps) \norm{u}^2_{L^2(X, \vec{m} \mu_n)}  \]
for some universal constant $C>0$.
\end{lemma}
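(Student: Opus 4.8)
The plan is to unwind all four bilinear/quadratic expressions into sums over the transport cells $U_i$ from \eqref{def:Ui}, exploiting the two structural facts that $P^*u = u\circ T$ and $\mu(U_i)=\tfrac1n$. Two immediate consequences will be used repeatedly: $PP^* = \id$ (since $(PP^*u)(x_i) = n\int_{U_i} u(T(x))\,d\mu(x) = n\,u(x_i)\mu(U_i) = u(x_i)$) and $\abs{P^*u} = P^*\abs{u}$ pointwise. Expanding the two inner products in the first displayed inequality gives $\applied{P^*u}{f}_{L^2(\M,\rho\mu)} = \sum_{i} u(x_i)\int_{U_i} f\rho\,d\mu$ and $\applied{u}{Pf}_{L^2(X,\vec m\mu_n)} = \sum_i m_i\,u(x_i)\int_{U_i} f\,d\mu$, so their difference is exactly $\sum_i u(x_i)\int_{U_i} f(x)\bigl(\rho(x)-m_i\bigr)\,d\mu(x)$.

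The key estimate is a uniform bound on $\abs{\rho(x)-m_i}$ for $x\in U_i$. Since $T(x)=x_i$ and $d(x,T(x))\leq\veps$ for a.e.\ $x$, the Lipschitz continuity of $\rho$ gives $\abs{\rho(x)-\rho(x_i)}\leq L_\rho\veps$, while $\abs{\rho(x_i)-m_i}\leq\norm{\vec m-\rho}_\infty$ by definition \eqref{eq:mrho}; hence $\abs{\rho(x)-m_i}\leq\norm{\vec m-\rho}_\infty+L_\rho\veps$ on $U_i$. Pulling this scalar out of the sum and then using $\rho\geq 1/\alpha$ to pass from $\int_\M\abs{u\circ T}\,\abs f\,d\mu$ to $\alpha\applied{P^*\abs u}{\abs f}_{L^2(\M,\rho\mu)}$ (using $\abs{u\circ T}=\abs u\circ T=P^*\abs u$) yields the first assertion.

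The second assertion then follows by choosing $f=P^*u$ in the first: by $PP^*=\id$ the left-hand side becomes $\bigl\lvert\norm{P^*u}^2_{L^2(\M,\rho\mu)}-\norm{u}^2_{L^2(X,\vec m\mu_n)}\bigr\rvert$, while $\applied{P^*\abs u}{\abs{P^*u}}_{L^2(\M,\rho\mu)}=\norm{P^*u}^2_{L^2(\M,\rho\mu)}$ because $\abs{P^*u}=P^*\abs u$. (One can equally well redo the cell-by-cell computation directly with $(u(x_i))^2$ in place of $u(x_i)\int_{U_i}f\,d\mu$.) For the last assertion, note that $\alpha\norm{\vec m-\rho}_\infty\leq\tfrac12$ combined with $\rho(x_i)\geq1/\alpha$ forces $m_i\geq\rho(x_i)-\norm{\vec m-\rho}_\infty\geq\tfrac1{2\alpha}$ for every $i$; in particular $\norm{\vec m-\rho}_\infty\leq m_i$ and $L_\rho\veps\leq 2\alpha m_i L_\rho\veps$. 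Writing $\norm{P^*u}^2_{L^2(\M,\rho\mu)}=\sum_i(u(x_i))^2\int_{U_i}\rho\,d\mu$ and bounding, for $x\in U_i$, $\rho(x)\leq\rho(x_i)+L_\rho\veps\leq m_i+\norm{\vec m-\rho}_\infty+L_\rho\veps\leq 2m_i(1+\alpha L_\rho\veps)$ gives $\norm{P^*u}^2_{L^2(\M,\rho\mu)}\leq 2(1+\alpha L_\rho\veps)\sum_i(u(x_i))^2 m_i\mu(U_i)=2(1+\alpha L_\rho\veps)\norm{u}^2_{L^2(X,\vec m\mu_n)}$, since $\mu(U_i)=\tfrac1n$.

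All three steps are elementary; the only thing that takes a moment is spotting the identities $PP^*=\id$ and $\abs{P^*u}=P^*\abs u$, which let the second assertion drop out of the first, and being careful about when to invoke $\rho\geq1/\alpha$ to convert between the weighted and unweighted inner products on $\M$. No genuine obstacle is expected.
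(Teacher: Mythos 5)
Your proof is correct and follows essentially the same route as the paper: expand everything over the transport cells $U_i$, use the triangle inequality $\abs{m_i-\rho(x)} \leq \abs{m_i-\rho(x_i)} + \abs{\rho(x_i)-\rho(x)} \leq \norm{\vec m-\rho}_\infty + L_\rho\veps$ on each cell, and pass from $\mu$ to $\rho\mu$ via $\rho\geq 1/\alpha$. The one small deviation is that you obtain the second inequality from the first by substituting $f=P^*u$ and using $PP^*=\id$ together with $\abs{P^*u}=P^*\abs u$, whereas the paper simply redoes the cell-by-cell computation with $u(x_i)^2$; both are equally elementary, and you explicitly note the alternative yourself. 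Your handling of the third assertion ($m_i\geq 1/(2\alpha)$ so that both $\norm{\vec m-\rho}_\infty\leq m_i$ and $L_\rho\veps \leq 2\alpha m_i L_\rho\veps$, giving $\rho(x)\leq 2m_i(1+\alpha L_\rho\veps)$ on $U_i$) matches the paper's estimate exactly.
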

\nc
\begin{proof}
We infer from \eqref{LinfWeights} that
\begin{align*} 
\abs{\applied{u}{Pf&}_{{L^2(X,\vec{m}\mu_n)}}  -  \applied{P^*u}{f}_{L^2(\M,\rho \mu)}}  \\
	& =  \abs*{\sum_{i=1}^n \frac{m_i}{n} u(x_i) \cdot n \int_{U_i} f d \mu - \int_\M \sum_{i=1}^n u(x_i)\mathds{1}_{U_i} f \rho d \mu} \\
	& \leq  \int_\M \sum_{i=1}^n \abs{ u(x_i)} \mathds{1}_{U_i} \abs{f(x)} \cdot \abs[\big]{m_i - \rho(x_i) + \rho(x_i)-\rho(x)}  d \mu\\
	& \leq  \alpha (\lVert \vec{m}- \rho \rVert_{\infty} + \veps L_\rho ) \applied{P^*\abs{u}}{\abs{f}}_{L^2(\M,\rho \mu)}.
\end{align*}
%
%
and
\begin{align*}
	\abs[\big]{\norm{P^*u}^2_{L^2(\M,\rho \mu)} - & \norm{u}^2_{L^2(X,\vec{m}\mu_n)} } \\
	&= \abs*{ \sum_{i=1}^n \left( \int_{U_i} u^2(x_i) \rho d\mu - \int_{U_i}m_iu^2(x_i) d\mu \right)}\\
	&\leq \sum_{i=1}^n \int_{U_i} u^2(x_i) \abs[\big]{\rho(x)-\rho(x_i)+\rho(x_i)-m_i} d\mu \\
	&\leq \alpha(\lVert \vec{m}- \rho \rVert_{\infty} + \veps L_\rho   ) \norm{P^*u}^2_{L^2(\M,\rho \mu)}.
\end{align*}
To prove the last part of the lemma we notice that 
\begin{align*}
\begin{split}
\norm{P^* u }^2_{L^2(\M, \rho \mu)} = \sum_{i=1}^n u(x_i)^2 \int_{U_i}\rho(y) d \mu(y) & \leq \frac{2(1+ \alpha L_\rho \veps)}{n}\sum_{i=1}^n u(x_i)^2 m_i \\
& = 2(1+ \alpha L_\rho \veps) \lVert u \rVert_{L^2(X, \vec{m} \mu_n)}^2. 
\end{split}
\end{align*}
\end{proof}

The next lemma is a straightforward generalization of \cite[Lemma 4.2]{BIK}.
\begin{lemma}[cf.\ {\cite[Lemma 4.2]{BIK}}]
	\label{lem:P*Palmostinverse}
	For every $f \in L^2(\M)$ we have
	\[  \norm{P^*P f}_{L^2(\M, \rho \mu)}^2 \leq (1+ 2\alpha L_\rho \veps)\norm{f}_{L^2(\M , \rho\mu)}^2   \]
In addition, there exists a universal constant $C> 0$ such that
	\[ \norm{f-P^*Pf}_{L^2(\M,\mu)} \leq \frac{C(1+m\alpha L_p \veps )m 2^{m/2} \sigma_\eta^{1/2}}{\sqrt{\eta(1/2)\omega_m}}   
\veps D(f)^{\frac{1}{2}} \]
	for all $f\in H^1(\M)$.  \nc
	
\end{lemma}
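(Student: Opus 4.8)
The plan is to follow \cite[Lemma 4.2]{BIK}, replacing the Riemannian volume by the measure $\mu$ and carrying the weights $\rho$ and $p$ along. The starting point is the concrete form of $P^*Pf$: since $\mu(U_i)=1/n$ for every $i$, the definitions \eqref{def:P} and \eqref{def:P*} give
\[ (P^*Pf)(x)=\bar f_i:=\frac{1}{\mu(U_i)}\int_{U_i}f\,d\mu\qquad\text{for }x\in U_i, \]
so $P^*Pf$ is simply the $\mu$-average of $f$ on each transport cell. Moreover every $y\in U_i$ satisfies $d(y,x_i)\le\veps$ (because $T(y)=x_i$ and $\veps=d_\infty(\mu,\mu_n)$), hence $\diam(U_i)\le2\veps$, and Lipschitz continuity of $\rho$ together with \eqref{eqn:DensityBounds} yields $\bar\rho_i:=\frac{1}{\mu(U_i)}\int_{U_i}\rho\,d\mu\le\rho(x_i)+L_\rho\veps\le\rho(x)+2L_\rho\veps\le(1+2\alpha L_\rho\veps)\,\rho(x)$ for all $x\in U_i$.

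For the first inequality I would write $\norm{P^*Pf}_{L^2(\M,\rho\mu)}^2=\sum_i\bar f_i^{\,2}\int_{U_i}\rho\,d\mu=\sum_i\bar f_i^{\,2}\,\bar\rho_i\,\mu(U_i)$, bound $\bar f_i^{\,2}\le\frac{1}{\mu(U_i)}\int_{U_i}f^2\,d\mu$ by Jensen's inequality, and then invoke the pointwise bound on $\bar\rho_i$ to get $\bar f_i^{\,2}\,\bar\rho_i\,\mu(U_i)\le\bar\rho_i\int_{U_i}f^2\,d\mu\le(1+2\alpha L_\rho\veps)\int_{U_i}f^2\rho\,d\mu$; summing over $i$ gives the claim.

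For the second inequality I would write $\norm{f-P^*Pf}_{L^2(\M,\mu)}^2=\sum_i\int_{U_i}\abs{f(x)-\bar f_i}^2\,d\mu(x)$ and apply Lemma \ref{lem:meanvariation} to each $V=U_i$ — legitimate since $\mu(U_i)>0$ and $\diam(U_i)\le2\veps$ — for a radius $r$ with $\veps<r<2h$ to be chosen, obtaining
\[ \norm{f-P^*Pf}_{L^2(\M,\mu)}^2\le\frac{2(1+CmKr^2)}{\eta(1/2)\,\omega_m(r-\veps)^m}\sum_iE_{2r}(f,U_i)=\frac{2(1+CmKr^2)}{\eta(1/2)\,\omega_m(r-\veps)^m}\,E_{2r}(f), \]
where the last equality uses that $\{U_i\}$ partitions $\M$ and that $E_{2r}(f,\cdot)$ is additive in its second argument. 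Then I would bound $E_{2r}(f)\le(1+2\alpha L_pr)(1+CmKr^2)\sigma_\eta(2r)^{m+2}D(f)$ via Lemma \ref{lem:Ervsdf} (applied with radius $2r<2h$), and finally choose $r=\tfrac{(m+2)\veps}{2}$. The assumption $(m+5)\veps<h$ (together with $h<1/\sqrt{mK}$) guarantees $\veps<r<h$, so that $mKr^2<1$ turns all curvature factors into universal constants, $1+2\alpha L_pr\le C(1+m\alpha L_p\veps)$, and $\frac{(2r)^{m+2}}{(r-\veps)^m}=\frac{(m+2)^{m+2}2^m}{m^m}\,\veps^2\le Cm^2 2^m\veps^2$ since $(1+2/m)^m\le e^2$. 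Collecting constants and taking square roots produces exactly the stated bound.

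The argument is essentially bookkeeping; the only step needing genuine care is checking that the choice $r\asymp m\veps$ is admissible under Assumption \ref{stdassumptions} — this is precisely why that hypothesis reads $(m+5)\veps<h$ rather than merely $\veps<h$ — and tracking the $m$- and $2^m$-dependence through Lemmas \ref{lem:meanvariation} and \ref{lem:Ervsdf} so that the final constant appears in the advertised form.
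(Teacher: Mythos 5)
Your proof is correct and follows the same route as the paper: Jensen's inequality plus Lipschitz continuity of $\rho$ for the first estimate, and Lemma \ref{lem:meanvariation} cell-by-cell (with additivity of $E_{2r}(f,\cdot)$ over the partition $\{U_i\}$) followed by Lemma \ref{lem:Ervsdf} for the second. The only cosmetic difference is your choice $r=\tfrac{(m+2)\veps}{2}$ instead of the paper's $r=(m+1)\veps$; both make the ratio $r^m/(r-\veps)^m$ a universal constant and yield the same final bound.
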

\begin{proof}
The first assertion follows from Jensen's inequality and the Lipschitz continuity of $\rho$:
\begin{align*}
\int_{\M} (P^* P f (x))^2 \rho(x) d \mu(x) & \leq \sum_{i=1}^n \int_{U_i}  \int_{U_i}  n f(y)^2 \rho(x) d \mu(y)  d \mu(x)  
\\ & \leq (1+ 2\alpha L_\rho \veps) \sum_{i=1}^n \int_{U_i}  \int_{U_i}  n f(y)^2 \rho(y) d \mu(y)  d \mu(x)
\\&= (1+2 \alpha L_\rho \veps)\int_{\M} f(y)^2 \rho(y) d \mu(y).
\end{align*}

For the second assertion we can use Lemma \ref{lem:Ervsdf}, Lemma \ref{lem:meanvariation} and Assumptions \ref{stdassumptions} on $h$, to obtain 
\begin{align*}  \norm{f-P^*Pf}^2_{L^2(\M,\mu)} & \leq \frac{2(1+Cm Kr^2)}{\eta(1/2)\omega_m(r-\veps)^m}E_{2r}(f) \\
&\leq \frac{C (1+2\alpha L_pr)2^m \sigma_\eta}{\eta(1/2) \omega_m} \frac{r^m}{(r-\veps)^m} r^2 D(f) 
\end{align*}
for any $r\in (\veps,2h )$. When choosing $r=(m+1)\veps$ the quotient $\frac{r^m}{(r-\veps)^m}$ is bounded by $3$ and the assertion follows.
\end{proof}
\nc

The next lemma is a generalization of \cite[Lemma 4.3]{BIK}.
\begin{lemma}[cf.\ {\cite[Lemma 4.3]{BIK}}]
	\label{lem:discretization}
	The following assertions hold:	
\begin{enumerate}[(i)]
\item For every $f\in H^1(\M)$, 
\begin{align*}
\abs[\big]{\norm{Pf}_{L^2(X,\vec{m}\mu_n)}^2 - \norm{f}_{L^2(\M,\rho\mu)}^2 }  \leq & \alpha( \lVert \vec{m} - \rho \rVert_{\infty} + \veps L_\rho  )  \norm{f}_{L^2(\M,\rho\mu)}^2\\
&  +  \tilde C' \veps \lVert f  \rVert_{L^2(\M,\rho\mu)} D(f)^{\frac{1}{2}},
\end{align*}
where $\tilde{C}'$ has the form
\begin{equation}
\tilde C' = \frac{C \alpha (1+ \alpha L_\rho)  (1+m\alpha L_p )m 2^{m/2} \sigma_\eta^{1/2}}{\sqrt{\eta(1/2)\omega_m}},
\label{defC'}
\end{equation}
  for some universal constant $C>0$.
	
	\item For every $f\in H^1(\M)$, 
	\[ b(Pf)\leq (1+ C_1'  h+ C_2' \frac{\veps}{h} + C_3' h^2 )D(f),\]
	\end{enumerate}
	 where the constants $C_1'$, $C_2'$, $C_3'$ can be written in terms of geometric quantities as
	\[ C_1'= C \alpha L_p  , \quad    C_2'= C \left(m + \frac{2^{m+1}L_\eta(1+  \alpha L_p)}{\eta(1/2)}\right)  ,\quad C_3'=Cm \left(K + \frac{1}{R^2} \right), \]
	where $C$ is a universal constant. 
	\nc
\end{lemma}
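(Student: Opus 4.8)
We prove the two assertions separately, (ii) being the substantial one. For \textbf{(i)} the plan is to telescope through $\lVert P^*Pf\rVert_{L^2(\M,\rho\mu)}^2$:
\[ \lVert Pf\rVert_{L^2(X,\vec m\mu_n)}^2-\lVert f\rVert_{L^2(\M,\rho\mu)}^2
=\Big(\lVert Pf\rVert_{L^2(X,\vec m\mu_n)}^2-\lVert P^*Pf\rVert_{L^2(\M,\rho\mu)}^2\Big)+\Big(\lVert P^*Pf\rVert_{L^2(\M,\rho\mu)}^2-\lVert f\rVert_{L^2(\M,\rho\mu)}^2\Big). \]
The first bracket is handled by the second inequality of Lemma \ref{lem:P*almostisometric} with $u=Pf$ together with $\lVert P^*Pf\rVert_{L^2(\M,\rho\mu)}^2\le(1+2\alpha L_\rho\veps)\lVert f\rVert_{L^2(\M,\rho\mu)}^2$ from Lemma \ref{lem:P*Palmostinverse}; absorbing the harmless factor $(1+2\alpha L_\rho\veps)$ gives the contribution $\alpha(\lVert\vec m-\rho\rVert_\infty+\veps L_\rho)\lVert f\rVert_{L^2(\M,\rho\mu)}^2$. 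For the second bracket one writes it as $\applied{P^*Pf-f}{P^*Pf+f}_{L^2(\M,\rho\mu)}$ and uses Cauchy--Schwarz, the quasi-inverse estimate $\lVert f-P^*Pf\rVert_{L^2(\M,\mu)}\le C\veps D(f)^{1/2}$ with the explicit constant of Lemma \ref{lem:P*Palmostinverse}, and $\tfrac1\alpha\le\rho\le\alpha$ to pass between the $\mu$- and $\rho\mu$-norms; this produces $\tilde C'\veps\lVert f\rVert_{L^2(\M,\rho\mu)}D(f)^{1/2}$ with $\tilde C'$ as in \eqref{defC'}.

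For \textbf{(ii)}, unfolding \eqref{eqn:discreteform}, \eqref{eqn:DiscreteDiff}, \eqref{eqn:weights} and using that $\eta(\abs{x_i-x_j}/h)$ vanishes off $E$,
\[ b(Pf)=\frac{1}{n^2\sigma_\eta h^{m+2}}\sum_{i,j=1}^{n}\eta\!\left(\frac{\abs{x_i-x_j}}{h}\right)\big(Pf(x_j)-Pf(x_i)\big)^2 . \]
Since $\mu(U_i)=\tfrac1n$, the number $Pf(x_i)=n\int_{U_i}f\,d\mu$ is the $\mu$-average of $f$ on $U_i$, so $Pf(x_j)-Pf(x_i)=n^2\int_{U_i}\int_{U_j}(f(y)-f(x))\,d\mu(y)\,d\mu(x)$, and Jensen's inequality gives $(Pf(x_j)-Pf(x_i))^2\le n^2\int_{U_i}\int_{U_j}|f(y)-f(x)|^2\,d\mu(y)\,d\mu(x)$. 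Inserting this cancels $n^2$, and since $\{U_i\}_i$ partitions $\M$,
\[ b(Pf)\le\frac{1}{\sigma_\eta h^{m+2}}\sum_{i,j}\int_{U_i}\!\int_{U_j}\eta\!\left(\frac{\abs{x_i-x_j}}{h}\right)|f(y)-f(x)|^2\,d\mu(y)\,d\mu(x). \]

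It remains to pass to intrinsic distances and then to $D$. Fix $x\in U_i$, $y\in U_j$ with the integrand nonzero, so $\abs{x_i-x_j}\le h<R/2$ and, as $\veps=d_\infty(\mu,\mu_n)$, $d(x,x_i)\le\veps$, $d(y,x_j)\le\veps$. Proposition \ref{prop:metricestimates} and the triangle inequality give $d(x,y)\le d(x_i,x_j)+2\veps\le\abs{x_i-x_j}(1+\tfrac{8h^2}{R^2})+2\veps$, i.e.\ $\tfrac{\abs{x_i-x_j}}{h}\ge\tfrac{d(x,y)-2\veps}{r_1}$ with $r_1:=h(1+\tfrac{8h^2}{R^2})$; monotonicity of $\eta$ then yields $\eta(\abs{x_i-x_j}/h)\le\eta\big((d(x,y)-2\veps)_+/r_1\big)$, and Lemma \ref{lem:etaestimates}(i) (with ``$r$''$=r_1$, ``$s$''$=2\veps$) bounds this by $\eta(d(x,y)/\tilde r)+L_\eta\tfrac{2\veps}{r_1}\mathds1_{\{d(x,y)\le\tilde r\}}$, $\tilde r:=r_1+2\veps$. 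Using the partition once more and the notation of \eqref{eqn:ErfV} and Remark \ref{rem:Erwithkernels},
\[ b(Pf)\le\frac{1}{\sigma_\eta h^{m+2}}\Big(E_{\tilde r}(f)+L_\eta\tfrac{2\veps}{r_1}\,\tilde E_{\tilde r}(f)\Big), \]
where Assumptions \ref{stdassumptions} give $h\le\tilde r<2h$ and $\tilde r/h=1+\tfrac{2\veps}{h}+\tfrac{8h^2}{R^2}\le\tfrac32$. Applying Lemma \ref{lem:Ervsdf} to $\eta$, and in its indicator-kernel form (the first case of its proof) to $\mathds1_{[0,1]}$, bounds $E_{\tilde r}(f)$ and $\tilde E_{\tilde r}(f)$ by $(1+\alpha L_p\tilde r)(1+CmK\tilde r^2)\tilde r^{m+2}D(f)$ times $\sigma_\eta$ resp.\ $\tfrac{\omega_m}{m+2}$; dividing out $\sigma_\eta\tilde r^{m+2}$,
\[ b(Pf)\le(1+\alpha L_p\tilde r)(1+CmK\tilde r^2)\Big(\tfrac{\tilde r}{h}\Big)^{m+2}\Big(1+\tfrac{2L_\eta\omega_m}{(m+2)\sigma_\eta}\tfrac{\veps}{r_1}\Big)D(f). \]
Since $\sigma_\eta\ge\eta(1/2)\,\omega_m/\big((m+2)2^{m+2}\big)$ (integrate $\abs{y_1}^2\eta(\abs y)$ over $\abs y\le\tfrac12$) one has $\tfrac{\omega_m}{(m+2)\sigma_\eta}\le 2^{m+2}/\eta(1/2)$; moreover $\tfrac{2\veps}{h}+\tfrac{8h^2}{R^2}\le\tfrac{C}{m}$ by Assumptions \ref{stdassumptions}, so $(\tilde r/h)^{m+2}\le 1+Cm(\tfrac{\veps}{h}+\tfrac{h^2}{R^2})$, while $\tfrac{\veps}{r_1}\le\tfrac{\veps}{h}$ and $\tilde r\le\tfrac32 h$. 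Expanding the product to first order in the small quantities $h$, $\veps/h$, $h^2$ and collecting terms gives $b(Pf)\le(1+C\alpha L_p h+C(m+\tfrac{2^{m+1}L_\eta(1+\alpha L_p)}{\eta(1/2)})\tfrac\veps h+Cm(K+\tfrac1{R^2})h^2)D(f)$, the factor $(1+\alpha L_p)$ absorbing the cross term of $(1+\alpha L_p\tilde r)$ with $\tfrac{L_\eta}{\eta(1/2)}\tfrac\veps h$. This is the claim with $C_1',C_2',C_3'$ as stated.

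The \emph{main obstacle} is the step with no analogue in \cite{BIK}, where neighbourhoods are intrinsic from the start: the passage from the ambient Euclidean distance $\abs{x_i-x_j}$ in \eqref{eqn:weights} to the geodesic distance $d(x,y)$ governing $E_{\tilde r}$. This is exactly where the reach $R$ enters, through Proposition \ref{prop:metricestimates}, and it must be combined with the transportation displacement bound $d(x,x_i)\le\veps$ in a way that does not spoil the \emph{sharp} leading constant $1$; keeping that constant, rather than merely proving $b(Pf)\le C\,D(f)$, is what forces the careful use of $\sigma_\eta$, of Jensen's inequality, and of the precise form of Lemma \ref{lem:etaestimates}.
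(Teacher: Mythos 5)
Your proof is correct and follows essentially the same route as the paper's: part (i) telescopes through $\lVert P^*Pf\rVert^2$ and invokes Lemmas \ref{lem:P*almostisometric} and \ref{lem:P*Palmostinverse} exactly as the paper does, and part (ii) uses Jensen on the cell averages, passes from $\lvert x_i-x_j\rvert$ to $d(x,y)$ via Proposition \ref{prop:metricestimates} and the $\veps$-displacement bound, applies Lemma \ref{lem:etaestimates}(i), and concludes with Lemma \ref{lem:Ervsdf}. The one small deviation from the paper's computation is that for the error term you keep $\tilde E_{\tilde r}(f)$ and apply the indicator-kernel form of Lemma \ref{lem:Ervsdf} directly, whereas the paper first converts $\tilde E_{\tilde r}\le\frac{1}{\eta(1/2)}E_{2\tilde r}$ via Remark \ref{rem:Erwithkernels} and then uses the general form of Lemma \ref{lem:Ervsdf} at radius $2(\hat h+2\veps)$; your version avoids the doubling of the radius (so that the application of Lemma \ref{lem:Ervsdf} stays comfortably inside the $r<2h$ range guaranteed by Assumptions \ref{stdassumptions}), at the price of invoking an internal step of the proof of Lemma \ref{lem:Ervsdf} rather than the black-box statement. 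That, together with your sharper $r_1=h(1+8h^2/R^2)$ in place of the paper's looser $\hat h=h(1+27h^2/R^2)$, is a cosmetic tightening; both give the same form of $C_1',C_2',C_3'$ after absorbing universal constants.
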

\begin{proof}
Since $P^*$ is almost an isometry by Lemma \ref{lem:P*almostisometric}, we have
\begin{align*} 
\abs[\big]{\norm{Pf}_{L^2(X,\vec{m}\mu_n)}^2 & -  \norm{f}_{L^2(\M,\rho\mu)}^2} \\
\leq & \, \abs[\big]{ \norm{Pf}^2_{L^2(X,\vec{m}\mu_n)} - \norm{P^*Pf}_{L^2(\M,\rho\mu)}^2 } 
\\&+ \abs[\big]{\norm{P^*Pf}_{L^2(\M,\rho\mu)}^2-\norm{f}_{L^2(\M,\rho\mu)}^2}\\
\leq & \, \alpha( \lVert \vec{m}- \rho \rVert_\infty + \veps L_\rho )  \norm{P^*Pf}_{L^2(\M,\rho\mu)}^2 
\\& +  ( \norm{P^*Pf }_{L^2(\M,\rho\mu)} + \norm{f}_{L^2(\M,\rho\mu)} ) \norm{P^*Pf-f}_{L^2(\M,\rho\mu)}  \\
 \leq &\, \alpha( \lVert \vec{m}- \rho \rVert_\infty + \veps L_\rho )(1+ 2\alpha L_\rho \veps)\lVert f \rVert^2_{L^2(\M, \rho d \mu)}
\\&+ \frac{C \alpha (2+ \alpha L_\rho \veps)  (1+m\alpha L_p \veps )m 2^{m/2} \sigma_\eta^{1/2}}{\sqrt{\eta(1/2)\omega_m}}   
\veps    \lVert f \rVert_{L^2(\M, \rho d \mu)}  D(f)^{\frac{1}{2}}
\end{align*}
where the last inequality follows from Lemma \ref{lem:P*Palmostinverse} and from the boundedness of $\rho$. This proves the first assertion.

Regarding the second assertion we follow the proof of \cite[Lemma 4.3(ii)]{BIK} and obtain that
\[ \abs{Pf(x_j) -Pf(x_i)}^2 \leq n^2 \int_{U_i}\int_{U_j} \abs{f(y)-f(x)}^2 d\mu(y) d\mu(x).\]
Let  $\hat{h}:= (1+ \frac{27}{\rc^2}h^2)h$. Then, by Proposition \ref{prop:metricestimates}, Lemma \ref{lem:etaestimates} and by the monotonicity of $\eta$ we have
\begin{align*} b(Pf)  &\leq \frac{1}{\sigma_\eta h^{m+2}} \sum_i \sum_j \int_{U_i}   \int_{U_j} \eta\left( \frac{\abs{x_i-x_j}}{h}\right) \abs{f(y)-f(x)}^2 d\mu(y) d\mu(x) \\
&\leq \frac{1}{\sigma_\eta h^{m+2}} \sum_i\sum_j  \int_{U_i} \int_{U_j} \eta\left(  \frac{d(x_i,x_j)}{\hat{h}} \nc \right)   \abs{f(y)-f(x)}^2 d\mu(y) d\mu(x) \\
&\leq \frac{1}{\sigma_\eta h^{m+2}} \int_\M \int_\M \eta\left(  \frac{(d(x,y)-2\veps)_+}{\hat h} \nc \right)   \abs{f(y)-f(x)}^2 d\mu(y) d\mu(x) \\
&\leq \frac{1}{\sigma_\eta h^{m+2}} \int_\M \int_\M \left(\eta\left( \frac{d(x,y)}{  \hat h + 2 \veps \nc} \right) + 2L_\eta\frac{\veps}{\hat h}\mathds{1}_{B_\M(x, \hat{h}+ 2 \veps)}(y)  \nc \right)  \\
& \qquad \qquad \qquad \qquad \quad \abs{f(y)-f(x)}^2 d\mu(y) d\mu(x)\\
&= \frac{1}{\sigma_\eta h^{m+2}} \left(E_{  \hat{h} +2\veps \nc} (f) +  \frac{2L_\eta}{\eta(1/2)}\frac{\veps}{h} E_{2(\hat{h}+ 2 \veps)}(f)\right),
\end{align*}
where we refer to Remark \ref{rem:Erwithkernels} to justify the last step.
Due to Assumptions \ref{stdassumptions}, we obtain from Lemma \ref{lem:Ervsdf} that
\begin{align*}
\frac{1}{\sigma_\eta h^{m+2}} E_{\hat h + 2\veps}(f) & \leq  (1+ C \alpha L_p h ) (1+ Cm K h^2)  \left(1+\frac{27 h^2}{R^2}+2\frac{\veps}{h}\right)^{m+2} D(f)  
 \\ &  \leq  (1+ C \alpha L_p h ) (1+ Cm K h^2)   \left(1+Cm \frac{h^2}{R^2}+Cm \frac{\veps}{h}\right) D(f)   ,
 \end{align*}
where the last inequality is obtained from the fact that
\[ (1+ s)^m \leq 1+ Cs , \quad \forall  0 \leq s \leq \frac{3}{m},  \] 
 for some universal constant $C>0$. Likewise, we obtain 
\begin{align*}
 \frac{1}{\sigma_\eta h^{m+2}} \frac{2L_\eta}{\eta(1/2)}\frac{\veps}{h} E_{2(\hat{h}+ 2 \veps)}(f) & \leq \frac{2^{m+1}L_\eta}{\eta(1/2)} (1+ C \alpha L_p h ) (1+ Cm K h^2) 
 \\ & \quad\cdot \left(1+Cm \frac{h^2}{R^2}
 +Cm \frac{\veps}{h}\right)  \frac{\veps}{h} D(f).
 \end{align*}
\nc
The result follows directly from the previous estimates.
\end{proof}
We can now establish an upper bound for $\lambda_k(\Gamma)$ in terms of $\lambda_k(\M)$.
\begin{proof}[of upper bound of Theorem \ref{thm:conveigenvalues}]
	Fix $k\in\N$. By the minmax principle \eqref{eqn:Courant0} we have
	\[ \lambda_k(\Gamma) \leq \sup_{u\in L\setminus\{0\}} \frac{b(u)}{\norm{u}_{L^2(X,\vec{m}\mu_n)}^2}\]
	for every $k$-dimensional subspace $L\subseteq L^2(X,\vec{m}\mu_n)$.
	Following the proof of \cite[Prop 4.4]{BIK} we denote by
	$W\subset H^1(\M)$ the span of orthonormal (with respect to the $L^2(\M, \rho \mu)$ inner product) eigenfunctions of $\Delta$ corresponding to $\lambda_1(\M),\dots,\lambda_k(\M)$
	and we set $L\coloneqq P(W)$. 
	For every $f\in W$ we have $D(f)\leq \lambda_k(\M)\norm{f}_{L^2(\M, \rho \mu)}^2$. It thus follows from part (i) of Lemma \ref{lem:discretization} that 
	\begin{align}
	\label{eqn:normu-lowerbound}
	\norm{Pf}_{L^2(X,\vec{m}\mu_n)}^2\geq (1- \alpha( \lVert  \vec{m } -  \rho \rVert_\infty  + \veps L_\rho ) -  \tilde C' \sqrt{\lambda_k(\M)} \veps )\norm{f}^2_{L^2(\M,\rho\mu)}.
	\end{align}
	 Hence, provided that 
	 \[   \alpha( \lVert  \vec{m } +  \rho \rVert_\infty  + \veps L_\rho ) +  \tilde C' \sqrt{\lambda_k(\M)} \veps \leq \frac{1}{2} ,\] 
	 we can conclude that $P$ is injective on $W$ and therefore $\dim L = k$. \nc Moreover, in that case by applying part (ii) of Lemma \ref{lem:discretization} to $u=Pf \in L$ we obtain that
\begin{align*}
	&  \frac{b(u)}{\lVert  u \rVert^2_{L^2(\M, \vec{m}\mu_n)}}  \leq \frac{\left(1+C_1' h + C_2' \frac{\veps}{h} + C_3' h^2 \right)}{1 - \alpha (  \lVert \vec{m}- \rho \rVert_\infty + \veps L_\rho)- \tilde C' \sqrt{\lambda_k(\M) } \veps} \lambda_k(\M)
	\\& \leq  \left(1+C_1' h + C_2' \frac{\veps}{h} + C_3' h^2 + \alpha C (\lVert \vec{m} - \rho \rVert_\infty + \veps L_\rho) + \tilde C' \sqrt{\lambda_k (\M)} \veps \right) \lambda_k(\M).
	\end{align*} \nc
	  Since the previous inequality holds for every $u= P f$ with $f \in W$, the desired estimate now follows.
\end{proof}

\nc


\begin{lemma}[cf.\ {\cite[Lemma 6.2]{BIK}}]
\label{lem:interpolation}
 Suppose that $h$ satisfies Assumptions \ref{stdassumptions}. Then, \nc
	\begin{enumerate}[(i)]
	\item  For every $u \in L^2(X)$, 
\begin{align*}
\abs[\big]{\norm{Iu}^2_{L^2(\M,\rho\mu)}- & \norm{u}^2_{L^2(X,\vec{m}\mu_n)}} \leq  \tilde C'' h \lVert u \rVert_{L^2(X,\vec{m}\mu_n)}\cdot b(u)^{\frac{1}{2}} \\
& \quad + 2\alpha(1+\alpha L_\rho) \cdot (  \norm{\vec{m}- \rho}_\infty + L_\rho \veps ) \norm{u}_{L^2(X,\vec{m}\mu_n)}^2,
\end{align*}
where the constant $\tilde C''$ can be written as
\[  \tilde C''= C\alpha(1+ \alpha L_p)\cdot(1+\alpha L_\rho) \cdot \left( 1+ c'' \right), \quad c''= \frac{L_\eta 4^m \omega_m^2(1+\alpha L_p)^2 }{\eta(1/2)(m+2)}.\]
\nc
	\item   For every $u \in L^2(X)$, 
	\[ D(Iu) \leq (1+ C_1'' h + C_2'' \frac{\veps}{h} + C_3''h^2 )b(u),\]
	where the constants $C_1''$, $C_2''$, $C_3''$ have the form
	\[C_1''= \alpha L_p , \quad C_2''= C( m + C_2') , \quad C_3''= C(1+ 1/\sigma_\eta)m K.  \]
	\end{enumerate}
\end{lemma}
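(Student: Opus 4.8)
The plan is to reduce both assertions to the kernel estimates of Section \ref{sec:Kernelbased} applied to the piecewise-constant extension $g := P^*u = u\circ T$, together with the almost-isometry properties of $P^*$ (Lemma \ref{lem:P*almostisometric}). The one computation that does real work, and which is needed for both parts, is the deterministic bound
\[ E_r(P^*u)\ \le\ \sigma_\eta h^{m+2}\Bigl(1 + C(m+C_2')\tfrac{\veps}{h}\Bigr)\, b(u), \qquad r:=h-2\veps , \]
comparing the non-local energy $E_r$ of $g$ with the discrete Dirichlet form $b$.

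To prove this I would expand $E_r(P^*u)=\sum_{i,j}\int_{U_i}\int_{U_j}\eta(d(x,y)/r)\,|u(x_i)-u(x_j)|^2\,d\mu(y)\,d\mu(x)$, using that $P^*u\equiv u(x_i)$ on $U_i$ and $\mu(U_i)=n^{-1}$ (so that no density factors survive). Since $T$ displaces each point of $U_i$ by at most $\veps$, for $x\in U_i$, $y\in U_j$ one has $d(x,y)\ge d(x_i,x_j)-2\veps$, hence by the monotonicity of $\eta$ and Lemma \ref{lem:etaestimates}(i) with $s=2\veps$ (so that $r+2\veps=h$ exactly)
\[ \eta\!\left(\tfrac{d(x,y)}{r}\right)\ \le\ \eta\!\left(\tfrac{d(x_i,x_j)}{h}\right) + L_\eta\tfrac{2\veps}{r}\,\mathds{1}_{\{d(x_i,x_j)\le h\}}\ \le\ \eta\!\left(\tfrac{|x_i-x_j|}{h}\right) + L_\eta\tfrac{2\veps}{r}\,\mathds{1}_{\{d(x_i,x_j)\le h\}}, \]
where the last step uses only $|x_i-x_j|\le d(x_i,x_j)$ — no reach correction enters an upper bound for $\eta$, which is precisely why this estimate, and hence the eigenvalue lower bound, is insensitive to $R$. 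Summing over $i,j$, the first term contributes exactly $\sigma_\eta h^{m+2}b(u)$ by the definition of $b$; the correction term is $\tfrac{2L_\eta\veps}{r}$ times an indicator-kernel energy of $P^*u$, which by Remark \ref{rem:Erwithkernels} (cf.\ \eqref{eqn:tildeEvsE}) and one further expansion at scale $2h$ is controlled by a constant multiple of $\sigma_\eta h^{m+2}b(u)$, the factor $L_\eta/\eta(1/2)$ being the origin of the $C_2'$-type contribution. Finally $h^{m+2}/r^m=(1+2\veps/r)^{m+2}h^2\le(1+Cm\veps/h)h^2$ disposes of the discrepancy between $r$ and $h$.

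Granting the key estimate, assertion (ii) is immediate: since $r<2h$, Lemma \ref{lem:normdLambdaf} gives $D(Iu)=D(\Lambda_r P^*u)\le (1+\alpha L_p r)(1+C(1+1/\sigma_\eta)mKr^2)\,\sigma_\eta^{-1}r^{-(m+2)}E_r(P^*u)$, and substituting the key estimate and multiplying out the small factors $\alpha L_p h$, $C(1+1/\sigma_\eta)mKh^2$, $Cm\veps/h$ and $C(m+C_2')\veps/h$ (all controlled by Assumptions \ref{stdassumptions}) produces the stated form with $C_1''=\alpha L_p$, $C_2''=C(m+C_2')$, $C_3''=C(1+1/\sigma_\eta)mK$. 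For assertion (i), write $\norm{Iu}^2_{L^2(\M,\rho\mu)}-\norm{u}^2_{L^2(X,\vec m\mu_n)}$ as $\bigl(\norm{\Lambda_r g}^2_{L^2(\M,\rho\mu)}-\norm{g}^2_{L^2(\M,\rho\mu)}\bigr)+\bigl(\norm{g}^2_{L^2(\M,\rho\mu)}-\norm{u}^2_{L^2(X,\vec m\mu_n)}\bigr)$: the second bracket is at most $\alpha(\norm{\vec m-\rho}_\infty+\veps L_\rho)\norm{g}^2_{L^2(\M,\rho\mu)}\le 2\alpha(1+\alpha L_\rho)(\norm{\vec m-\rho}_\infty+L_\rho\veps)\norm{u}^2_{L^2(X,\vec m\mu_n)}$ by the last two estimates of Lemma \ref{lem:P*almostisometric} (the third of which needs $\alpha\norm{\vec m-\rho}_\infty\le\tfrac12$), which is precisely the second summand of the claim; for the first bracket, $\abs{\norm{a}^2-\norm{b}^2}\le\norm{a-b}(\norm{a}+\norm{b})$ in $L^2(\M,\rho\mu)$ together with the two bounds of Lemma \ref{lem:Lambdaf-f} gives $\le C\alpha\,\bigl(E_r(g)/(\sigma_\eta r^m)\bigr)^{1/2}\norm{g}_{L^2(\M,\rho\mu)}$, and now $\norm{g}_{L^2(\M,\rho\mu)}\le C\norm{u}_{L^2(X,\vec m\mu_n)}$ (Lemma \ref{lem:P*almostisometric}), the key estimate for $E_r(g)/(\sigma_\eta r^m)\le h^2(1+C(m+C_2')\veps/h)b(u)$, and $\veps<h$ (to turn $\sqrt{1+C(m+C_2')\veps/h}$ into a multiplicative constant of the form $1+c''$) combine to give $\tilde C''h\,\norm{u}_{L^2(X,\vec m\mu_n)}b(u)^{1/2}$ with $\tilde C''$ of the stated form, the prefactors $(1+\alpha L_p)(1+\alpha L_\rho)$ being inherited from Lemma \ref{lem:Lambdaf-f}.

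The main obstacle is entirely inside the key estimate: the indicator correction $\mathds{1}_{\{d(x_i,x_j)\le h\}}$ thrown off by Lemma \ref{lem:etaestimates}(i) must be re-expressed through the kernel $\eta$ and then dominated by a constant multiple of $b(u)$, which forces the detour through Remark \ref{rem:Erwithkernels} and a second, scale-$2h$, expansion — the same maneuver as in the proof of Lemma \ref{lem:discretization}(ii), but one layer more involved because one starts here from an already piecewise-constant function. Everything else is a direct citation of Lemmas \ref{lem:normdLambdaf}, \ref{lem:Lambdaf-f} and \ref{lem:P*almostisometric}, plus routine accounting of lower-order error terms under Assumptions \ref{stdassumptions}.
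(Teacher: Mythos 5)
Your overall architecture matches the paper's: reduce both parts to the comparison $E_{h-2\veps}(P^*u)\le\bigl(1+O(\veps/h)\bigr)\sigma_\eta h^{m+2}b(u)$, then feed it into Lemma \ref{lem:normdLambdaf} for (ii) and into the decomposition via $P^*u$ together with Lemmas \ref{lem:Lambdaf-f} and \ref{lem:P*almostisometric} for (i). Your observation that only $|x_i-x_j|\le d(x_i,x_j)$ is used, so no reach correction appears, is correct and is indeed why $R$ is absent from $C_1'',C_2'',C_3''$.

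The gap is in the indicator correction inside the key estimate. After applying monotonicity and Lemma \ref{lem:etaestimates}(i) you are left with $\tfrac{2L_\eta\veps}{r}\cdot n^{-2}\sum_{i,j}\mathds{1}_{\{d(x_i,x_j)\le h\}}\,|u(x_i)-u(x_j)|^2$, and you assert this is a constant multiple of $\sigma_\eta h^{m+2}b(u)$ ``by Remark \ref{rem:Erwithkernels} and one further expansion at scale $2h$.'' That route does not close. Since $\eta$ is only assumed positive at $1/2$ and may vanish at $1$, the indicator $\mathds{1}_{\{|x_i-x_j|\le h\}}$ is not pointwise dominated by any multiple of $\eta(|x_i-x_j|/h)$, so the discrete indicator form $\tilde b_h(u)$ cannot be bounded directly by $b_h(u)$. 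Remark \ref{rem:Erwithkernels} converts an indicator energy to a general-kernel energy at \emph{twice} the scale; applied here it produces either $b_{2h}(u)$ (not a form on the given graph, and not deterministically controlled by $b_h(u)$) or an energy $E_{2(h+2\veps)}(P^*u)$, which would require a larger-scale version of the very estimate you are proving --- a circularity. You correctly note that this is ``one layer more involved'' than in Lemma \ref{lem:discretization}(ii), because $P^*u$ is piecewise constant and the $D(f)$-route is unavailable, but the mechanism you propose is the one that worked for $f\in H^1(\M)$ and does not transfer.

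The paper's fix goes in the opposite direction on scales and uses a lemma you never invoke: it re-expresses the correction as the continuum energy $\tilde E_{h-2\veps}(P^*u)$, halves its scale \emph{twice} using Lemma \ref{lemma:fourthradius} to get down to $\tilde E_{h/2-2\veps}(P^*u)$, passes to the discrete form $\tilde b_{h/2}(u)$ via the analogue of \eqref{eqn:tildebhlower} at scale $h/2$, and only then bounds $\tilde b_{h/2}(u)$ by $b_h(u)$ using $\mathds{1}_{[0,1/2]}\le\eta/\eta(1/2)$ (\eqref{lemma:aux4.8}). The halving detour through Lemma \ref{lemma:fourthradius} is essential; without it the indicator correction is not controlled by $b(u)$, and both parts of the lemma remain unproven.
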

\begin{proof}
\nc
First, by Lemma \ref{lem:P*almostisometric}, 
\begin{align}
\begin{split}
 \abs[\big]{\norm{Iu&}^2_{L^2(\M,  \rho \mu)}- \norm{u}^2_{L^2(X,\vec{m} \mu_n)}} \\
  \leq &\, \abs[\big]{\norm{Iu}^2_{L^2(\M, \rho \mu)}- \norm{P^*u}^2_{L^2(\M,\rho \mu)}} +\abs[\big]{\norm{P^*u}^2_{L^2(\M,\rho \mu)}-\norm{u}^2_{L^2(X,\vec{m} \mu_n)}} 
 \\ \leq &\,  (  \norm{Iu}_{L^2(\M, \rho \mu)} +  \norm{P^*u}_{L^2(\M,\rho \mu)}   ) \norm{Iu-P^*u}_{L^2(\M,\rho \mu)} 
 \\ &+  \alpha (\lVert \vec{m}- \rho \rVert_{\infty} + \veps L_\rho ) \norm{P^*u}^2_{L^2(\M,\rho \mu)}.
 \end{split}
 \label{aux:Lem62}
\end{align}
Since $(m+2)\veps < h$ (by Assumption \ref{stdassumptions}), we conclude from Lemma \ref{lem:Lambdaf-f} that
\[ \norm{Iu - P^*u}^2_{L^2(\M,\rho\mu)} = \norm{\Lambda_{h-2\veps}P^*u - P^*u}^2 \leq  \frac{C\alpha^2 }{\sigma_\eta h^m} \nc E_{h-2\veps}(P^*u).\]
for some universal constant $C>0$. 

Let us now estimate $E_{h-2\veps}$ in terms of $b(u)$. First consider the kernel $\tilde{\eta}=\mathds{1}_{[0,1]}$. We use $\tilde{b}$ and $\tilde{E}$ to denote the discrete Dirichlet form and the energy $E$ when using the kernel $\tilde{\eta}$
and we write $b_h$ and $\tilde b_h$, respectively, to specify that the forms $b$ and $\tilde b$ are being constructed using the value $h$.
We claim that
\begin{equation}
\label{eqn:tildebhlower}
 \tilde{b}_h(u) \geq    \frac{m+2}{\omega_m h^{m+2}} \nc \tilde{E}_{h-2\veps} (P^*u).
\end{equation}
Indeed, let $T$ denote the transportation map 
introduced in Section \ref{sec:mainresults} satisfying $U_i = T^{-1}(x_i)$, then
\begin{align*}
&	\tilde{b}_h(u) = \frac{1}{\sigma_{\tilde{\eta}} h^{m+2}} \frac{1}{n^2} \sum_i \sum_j \tilde{\eta}\left(\frac{\abs{x_i-x_j}}{h}\right) \abs{u(x_i)-u(x_j)}^2 \\
	&= \frac{1}{\sigma_{\tilde{\eta}} h^{m+2}} \sum_{i,j}\int_{U_i} \int_{U_j} \tilde{\eta}\left(\frac{\abs{T(x) -T(y)}}{h}\right) \abs{(P^*u)(x) - (P^*u)(y)}^2 d\mu(y)d\mu(x)\\ 
	&\geq \frac{1}{\sigma_{\tilde{\eta}} h^{m+2}} \int_\M \int_\M \tilde{\eta}\left(\frac{d(T(x),T(y))}{h} \right)\abs{(P^*u)(x) - (P^*u)(y)}^2 d\mu(y)d\mu(x)\\
	&\geq  \frac{1}{\sigma_{\tilde{\eta}} h^{m+2}} \int_\M \int_\M  \tilde{\eta}\left(\frac{d(x,y)}{h-2\veps}\right)\abs{(P^*u)(x)-(P^*u)(y)}^2 d\mu(y)d\mu(x) \\
	&=  \frac{1}{\sigma_{\tilde{\eta}} h^{m+2}} \tilde{E}_{h-2\veps} (P^*u),
\end{align*}
where we note that the last inequality follows from the fact that  $d(T(x),T(y)) > h$ implies that $d(x,y) > h-2 \veps$;  we have used Remark \ref{rem:sigmaetanum} to rewrite $\sigma_{\tilde \eta}$ \nc.  We now consider general $\eta$. 
Since $\eta(t) \geq \eta(1/2) >0 $ for all $t \in [0,1/2]$, it follows that 
\begin{equation}
  \tilde{b}_{h/2}(u) \leq    \frac{\sigma_\eta \omega_m 2^{m+2}}{\eta(1/2) (m+2)}  \nc b_h(u).
  \label{lemma:aux4.8}
\end{equation}
On the other hand, by the monotonicity of $\eta$ and Lemma \ref{lem:etaestimates} we obtain
\begin{align*}
	b_h(u) &
	\geq \frac{1}{\sigma_\eta h^{m+2}} \! \int_\M \int_\M \! \eta\left(\frac{d(T(x),T(y))}{h} \right)\abs{(P^*u)(x) - \! (P^*u)(y)}^2 d\mu(y)d\mu(x)\\
	&\geq \frac{1}{\sigma_\eta h^{m+2}} \int_\M \int_\M \eta\left(\frac{d(x,y)+2\veps}{h}\right) \abs{(P^*u)(x) - (P^*u)(y)}^2 d\mu(y)d\mu(x)\\
	&\geq  \frac{1}{\sigma_\eta h^{m+2}} \int_\M \int_\M  \eta\left(\frac{d(x,y)}{h-2\veps}\right)\abs{(P^*u)(x)-(P^*u)(y)}^2 d\mu(y)d\mu(x)  \\
	& \quad\!-   \frac{L_\eta  }{\sigma_\eta} \nc \frac{ \veps}{h}\frac{1}{ h^{m+2}} \int_\M \int_\M   \mathds{1}_{\left\{ d(x,y)\leq h-2\veps \right\}}\abs{(P^*u)(x)-(P^*u)(y)}^2 d\mu(y)d\mu(x)\\
	&=  \frac{1}{\sigma_\eta h^{m+2}} E_{h-2\veps} (P^*u)  -  \frac{L_\eta  }{\sigma_\eta} \nc \frac{\veps}{h}\frac{1}{ h^{m+2}} \tilde{E}_{h-2\veps} (P^*u)\\
	& \geq \frac{1}{\sigma_\eta h^{m+2}} E_{h-2\veps} (P^*u)  -  \frac{C L_\eta 4^m(1+\alpha L_p)^2 }{\sigma_\eta}  \nc\frac{\veps}{h}\frac{1}{ h^{m+2}} \tilde{E}_{ \frac{h}{2}-2\veps \nc} (P^*u),
\end{align*}
where the last inequality follows after applying Lemma \ref{lemma:fourthradius} twice.
We conclude from \eqref{eqn:tildebhlower} that
\[  b_h(u) \geq \frac{1}{\sigma_\eta h^{m+2}} E_{h-2\veps} (P^*u) -   \frac{C L_\eta 2^m \omega_m(1+\alpha L_p)^2 }{(m+2)\sigma_\eta}  \nc  \frac{\veps}{h} \tilde{b}_{\frac{h}{2}}(u).\]
Combining this inequality with \eqref{lemma:aux4.8} we deduce that
\[ \left(1+   \frac{C L_\eta 4^m \omega_m^2 (1+\alpha L_p)^2}{\eta(1/2)(m+2)^2} \nc \frac{\veps}{h}\right)b_h(u)  \geq   \frac{1}{\sigma_\eta h^{m+2}} E_{h-2\veps} (P^*u)\]
which can be rewritten as
\begin{align}
\label{eqn:buVSEr}
E_{h-2\veps}(P^*u) \leq \left(1+   \frac{C L_\eta 4^m \omega_m^2 (1+\alpha L_p)^2}{\eta(1/2)(m+2)^2} \nc \frac{\veps}{h}\right)  \sigma_\eta h^{m+2} b(u).
\end{align}
Hence,
\begin{align}
\label{eqn:Iu-P*u}
 \norm{Iu-P^*u}^2 \leq  \frac{C\alpha^2 }{\sigma_\eta h^m}  \nc E_{h-2\veps}(P^*u) \leq  C \alpha^2 \!\left(\!1+  \frac{ L_\eta 4^m \omega_m^2 (1+\alpha L_p)^2}{\eta(1/2)(m+2)^2} \frac{\veps}{h}\right) \!  h^2 b(u).
\end{align}
 Finally, from Lemma \ref{lem:P*almostisometric} it follows that
\[ \norm{P^* u}_{L^2(\M, \rho \mu)}^2 \leq 2 (1+ \alpha L_\rho \veps) \norm{u}^2_{L^2(X, \vec{m}\mu)}  \]
and from Lemma \ref{lem:Lambdaf-f} 
\begin{align*} 
\norm{Iu}_{L^2(\M, \rho \mu )} & =  \norm{\Lambda_{h- 2\veps} P^* u  }_{L^2(\M, \rho \mu )} \\
& \leq C (1+ \alpha L_p h )^{1/2}\cdot (1+ \alpha  L_\rho h)^{1/2} \lVert P^* u \rVert_{L^2(\M, \rho \mu)}  
\\& \leq C (1+ \alpha L_p h )\cdot (1+ \alpha  L_\rho h) \lVert u \rVert_{L^2(X, \vec{m} \mu_n)}
\end{align*}
The assertion (i) follows by  inserting all these estimates back in \eqref{aux:Lem62}.

Regarding assertion (ii), we conclude from Lemma \ref{lem:normdLambdaf} that
\begin{align*}
& D(Iu) \leq    (1+\alpha L_p h)\cdot (1+ C(1 + \frac{1}{\sigma_\eta})mK h^2 ) \frac{1}{\sigma_\eta (h-2\veps)^{m+2}} E_{h-2\veps}(P^*u)  \nc\\
& \leq   (1+\alpha L_p h)\cdot \left(1+ C(1 + \frac{1}{\sigma_\eta})mK h^2 \right)\left(1+ Cm\frac{\veps}{h}\right) \frac{1}{\sigma_\eta h^{m+2}} E_{h-2\veps}(P^*u)  \nc\\
&\leq    \left(1+\alpha L_p h+ C(1 + \frac{1}{\sigma_\eta})mK h^2 + Cm \frac{\veps}{h}\right) \frac{1}{\sigma_\eta h^{m+2}} E_{h-2\veps}(P^*u).  \nc
\end{align*}
Combining with \eqref{eqn:buVSEr} we obtain the desired estimate.
\end{proof}

We can now establish a lower bound for $\lambda_k(\Gamma)$ in terms of $\lambda_k(\M)$. 
\begin{proof}[of lower bound of Theorem \ref{thm:conveigenvalues}] 
	Let $k\in\N$. It follows from \eqref{eqn:Courant} that for very $k$-dimensional subspace $L\subset H^1(\M)$ we have
	\[ \lambda_k(\M) \leq \sup_{f\in L\setminus\{0\}} \frac{D(f)}{\lVert f\rVert^2_{L^2(\M,\rho\mu)}}.\]
	As in the proof of \cite[Prop 6.3]{BIK}
	we denote by $W\subseteq L^2(X)$ the span of orthonormal eigenvectors of $\Delta_{\Gamma}$ corresponding 
	to $\lambda_1(\Gamma),\dots,\lambda_k(\Gamma)$
	and we set $L\coloneqq I(W)$. Then $b(u)\leq \lambda_k(\Gamma)\norm{u}^2_{L^2(X,\vec{m}\mu_n)}$ for all $u\in W$.  Using this, we conclude from Lemma \ref{lem:interpolation}
\begin{align} \label{eqn:Iulowerbound}
\begin{split}
	\norm{Iu}^2_{L^2(\M, \rho \mu)}\geq & \, \big(1 - 2\alpha(1+\alpha L_\rho)(\lVert  \vec{m} - \rho \rVert_\infty + L_\rho \veps  )   - \tilde C'' \sqrt{\lambda_k(\Gamma)} h   \big) \\
	& \: \norm{u}_{L^2(X, \vec{m}\mu_n)}^2  
\end{split}
\end{align}
for all $u \in W$. It follows that if
\[  2\alpha(1+\alpha L_\rho)(\lVert  \vec{m} - \rho \rVert_\infty + L_\rho \veps  )   + \tilde C'' \sqrt{\lambda_k(\Gamma)} h \leq \frac{1}{2}\]
	then the operator $I$ is injective on $W$ and thus $\dim L = k$; notice that this inequality is satisfied under condition \eqref{Prop2Condition} thanks to the upper bound for $\lambda_k(\Gamma)$ in terms of $\lambda_k(\M)$.  It follows from part (ii) of Lemma \ref{lem:interpolation} that for any $f= Iu $ with $u \in W$,
	\begin{align*}
	\begin{split}
	\label{eqn:DIuupperbound}
	 \frac{D(f)}{\lVert f \rVert_{L^2(\M, \rho \mu)}^2} & \leq \frac{1+ C_1'' h + C_2''\frac{\veps}{h} + C_3''h^2 }{1 - 2\alpha(1+\alpha L_\rho)(\lVert  \vec{m} - \rho \rVert_\infty + L_\rho \veps  )   - \tilde C'' \sqrt{\lambda_k(\Gamma)} h }\lambda_k (\Gamma)
	 \end{split}
	\end{align*}
The result now follows from the fact that the above inequality holds for arbitrary $u \in W$ and  the fact that $\lambda_k(\Gamma)$ can be bounded from above by a constant multiple of $\lambda_k(\M)$.
\end{proof}

\section{Approximation of eigenfunctions}
\label{sec:Eigenfunctions}
In this section we are concerned with the convergence of eigenvectors of $\Delta_\Gamma$.
We start by showing that the discretization and interpolation operators $P$ and $I$ are almost inverse of one another. 

\begin{lemma}[cf.\ {\cite[Lemma 6.4]{BIK}}]
\label{lem:almostinverse}
Under Assumptions \ref{stdassumptions}, there exists a constant $C'''$ only depending on $m,\alpha, \eta, L_p , L_\rho$ such that
	\begin{enumerate}[(i)]
	\item $\norm{IPf-f}_{L^2(\M,\rho\mu)} \leq  C''' \nc h D(f)^{\frac{1}{2}}$ for all $f \in H^1(\M)$.
\end{enumerate}
Moreover, if
\[\alpha \lVert \vec{m}- \rho \rVert_\infty + \veps L_\rho \leq \frac{1}{2}, \text{ then, } \]
	\begin{enumerate}[(ii)]
		\item $\norm{PIu-u}_{L^2(X,\vec{m}\mu_n)} \leq C'''h  b(u)^\frac{1}{2}$ for all $u\in L^2(X)$. 
	\end{enumerate}
\end{lemma}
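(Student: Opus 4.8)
The plan is to reduce both parts to estimates already established for $\Lambda_r$, $P$ and $P^*$, together with the inequality $E_{h-2\veps}(P^*u)\le (1+C\tfrac{\veps}{h})\,\sigma_\eta h^{m+2}b(u)$ derived inside the proof of Lemma~\ref{lem:interpolation} (see \eqref{eqn:buVSEr}). All constants appearing below depend only on $m,\alpha,\eta,L_p,L_\rho$; by Assumptions~\ref{stdassumptions} the quantities $h$, $\veps/h$ and $h^2(K+1/R^2)$ are bounded by a universal constant, so factors such as $(1+CmKh^2)$ may be absorbed into such constants. I will also repeatedly use $\norm{g}_{L^2(\M,\rho\mu)}\le\sqrt{\alpha}\,\norm{g}_{L^2(\M,\mu)}$ since $\rho\le\alpha$, and $\veps<h$.

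For part~(i), since $IPf=\Lambda_{h-2\veps}P^*Pf$, I would write
\[ IPf-f=\Lambda_{h-2\veps}\bigl(P^*Pf-f\bigr)+\bigl(\Lambda_{h-2\veps}f-f\bigr) \]
and bound the two terms separately. For the first term, boundedness of $\Lambda_{h-2\veps}$ on $L^2(\M,\rho\mu)$ (first estimate of Lemma~\ref{lem:Lambdaf-f}, applied with $r=h-2\veps<2h$) reduces matters to $\norm{P^*Pf-f}_{L^2(\M,\rho\mu)}$, which by Lemma~\ref{lem:P*Palmostinverse}(ii) is at most $C\veps D(f)^{1/2}\le Ch\,D(f)^{1/2}$. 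For the second term, the second estimate of Lemma~\ref{lem:Lambdaf-f} together with Lemma~\ref{lem:Ervsdf} gives
\[ \norm{\Lambda_{h-2\veps}f-f}_{L^2(\M,\rho\mu)}^2\le \frac{C\alpha^2}{\sigma_\eta(h-2\veps)^m}E_{h-2\veps}(f)\le C h^2 D(f), \]
using $E_{h-2\veps}(f)\le(1+L_p\alpha h)(1+CmKh^2)\sigma_\eta(h-2\veps)^{m+2}D(f)$ and $h-2\veps\le h$. Adding the two bounds proves (i).

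For part~(ii), I would first invert the almost-isometry of $P^*$: under the stated hypothesis (together with Assumptions~\ref{stdassumptions}) the error factor $\alpha(\norm{\vec{m}-\rho}_\infty+\veps L_\rho)$ in the second estimate of Lemma~\ref{lem:P*almostisometric} is less than $1$, so applying that estimate to $w:=PIu-u\in L^2(X)$ yields
\[ \norm{PIu-u}_{L^2(X,\vec{m}\mu_n)}^2\le 2\,\norm{P^*(PIu-u)}_{L^2(\M,\rho\mu)}^2. \]
Since $Iu\in\Lip(\M)\subseteq H^1(\M)$, I would then split $P^*(PIu-u)=\bigl(P^*P(Iu)-Iu\bigr)+\bigl(Iu-P^*u\bigr)$. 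The first summand is controlled by Lemma~\ref{lem:P*Palmostinverse}(ii), $\norm{P^*P(Iu)-Iu}_{L^2(\M,\rho\mu)}\le C\veps D(Iu)^{1/2}$, combined with $D(Iu)\le C b(u)$ from Lemma~\ref{lem:interpolation}(ii) and Assumptions~\ref{stdassumptions}, and $\veps<h$, giving $\le Ch\,b(u)^{1/2}$; the second summand is exactly \eqref{eqn:Iu-P*u} from the proof of Lemma~\ref{lem:interpolation}, i.e.\ $\norm{Iu-P^*u}_{L^2(\M,\rho\mu)}^2\le C\alpha^2 h^2 b(u)$. Combining these gives $\norm{P^*(PIu-u)}_{L^2(\M,\rho\mu)}\le Ch\,b(u)^{1/2}$ and hence (ii).

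I do not expect a genuinely new difficulty here: the analytic substance is entirely contained in Lemmas~\ref{lem:Lambdaf-f}, \ref{lem:P*Palmostinverse} and \ref{lem:interpolation} (most importantly the bound $E_{h-2\veps}(P^*u)\lesssim\sigma_\eta h^{m+2}b(u)$). The two points that require care are (a) choosing the telescoping splittings of $IPf-f$ and $P^*(PIu-u)$ so that each piece matches an already-proved estimate, and (b) verifying that the smallness hypothesis allows one to invert the ``almost isometry'' of $P^*$, so that the discrete norm of $PIu-u$ is indeed dominated by its continuum counterpart — this is where the exact form of the hypothesis $\alpha\norm{\vec{m}-\rho}_\infty+\veps L_\rho\le\tfrac12$ is used.
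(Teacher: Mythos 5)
Your proposal is correct and follows essentially the same route as the paper: for (i) the decomposition $IPf - f = \Lambda_{h-2\veps}(P^*Pf - f) + (\Lambda_{h-2\veps}f - f)$ treated with Lemmas~\ref{lem:Lambdaf-f}, \ref{lem:P*Palmostinverse} and \ref{lem:Ervsdf}, and for (ii) first passing from the discrete norm of $PIu-u$ to $\|P^*(PIu-u)\|_{L^2(\M,\rho\mu)}$ via Lemma~\ref{lem:P*almostisometric}, then splitting $P^*(PIu-u) = (P^*P(Iu)-Iu) + (Iu-P^*u)$ and using Lemma~\ref{lem:P*Palmostinverse}, Lemma~\ref{lem:interpolation}(ii), and \eqref{eqn:Iu-P*u}. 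One small imprecision you share with the paper: the smallness hypothesis is on $\alpha\|\vec{m}-\rho\|_\infty + \veps L_\rho$, while Lemma~\ref{lem:P*almostisometric}'s error factor is $\alpha(\|\vec{m}-\rho\|_\infty + \veps L_\rho)$, which contains the extra factor $\alpha$ multiplying $\veps L_\rho$; the claim that this factor is below $1$ (or below $3$, as the paper's ``factor $4$'' implicitly needs) does not literally follow from the hypothesis alone, but it is harmless because Assumptions~\ref{stdassumptions} force $\veps<1$ and the final constant $C'''$ is permitted to depend on $\alpha$ and $L_\rho$, so the almost-isometry still yields $\|w\|_{L^2(X,\vec{m}\mu_n)}\le C(\alpha,L_\rho)\|P^*w\|_{L^2(\M,\rho\mu)}$.
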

\begin{proof}

	By definition of $I$ we have
	\[ \norm{ IPf - f} \leq \norm{\Lambda_{h-2\veps} (P^*P f -f)} + \norm{\Lambda_{h-2\veps} f-f}.\]
	From Lemmas \ref{lem:Lambdaf-f} and \ref{lem:P*Palmostinverse}, and from Assumptions \ref{stdassumptions}, we know that for a constant $ C''>0$, depending on  $\eta$, $m$, $L_p$, $L_\rho$ and $\alpha$,
	\begin{align*}
	\norm{\Lambda_{h-2\veps}(P^*Pf-f)}_{L^2(\M,\rho\mu)} \leq  C''\norm{P^*Pf-f}_{L^2(\M,\mu)}  
	\leq  C''\veps D(f)^{\frac{1}{2}}.
	\end{align*}
	Likewise, from Lemma \ref{lem:Lambdaf-f} and Lemma \ref{lem:Ervsdf}, 
	\[ \norm{\Lambda_{h-2\veps} f -f}_{L^2(\M,\rho\mu)}^2 \leq \frac{C''}{(h-2\veps)^m} E_{h-2\veps}(f) \leq C''h^2 D(f),\]
	 and from this we deduce assertion (i).

	Regarding assertion (ii),  if we assume that $\alpha \lVert \vec{m}- \rho \rVert_\infty + \veps L_\rho \leq \frac{1}{2}$, we obtain from Lemma \ref{lem:P*almostisometric} that
	\begin{align*}
	 \norm{PIu-u}_{L^2(\M,\rho \mu)} &\leq 4 \norm{P^*(PIu-u)}_{L^2(\M,\rho \mu)} 
	 \\&\leq 4 \norm{P^*PIu-Iu}_{L^2(\M,\rho \mu)} + 4\norm{Iu-P^* u}_{L^2(\M,\rho \mu)}.
	 \end{align*}
	From Lemmas \ref{lem:P*Palmostinverse} and \ref{lem:interpolation}, and from Assumptions \ref{stdassumptions}, we obtain that
	\[ \norm{P^*PIu-Iu}_{L^2(\M,\rho \mu)} \leq C'\veps D(Iu)^\frac{1}{2} \leq C'\veps b(u)^\frac{1}{2} \]
	for a constant $C'$ depending on  $\eta$, $m$, $L_p$, $L_\rho$ and $\alpha$. Moreover, by \eqref{eqn:Iu-P*u} we know there exists $C'''>0$ (depending on  $\eta$, $m$, $L_p$, $L_\rho$ and $\alpha$)  such that
	\[ \norm{Iu-P^*u}_{L^2(\M, \rho \mu)} \leq C''' h b(u)^\frac{1}{2}. \]\nc
	This implies assertion (ii).
\end{proof}

Now we adopt some additional notation from \cite[Section 7]{BIK}. For a value $\lambda\in \R$ we denote by $H_\lambda(\M)$ the linear span in $H^1(\M)$ 
of all eigenfunctions of $\Delta$ corresponding to eigenvalues in the interval $(-\infty,\lambda)$. 
Similarly we define $H_\lambda(X)$ as the linear span of eigenvectors of $\Delta_\Gamma$ corresponding to eigenvalues in $(-\infty,\lambda)$.
We write $\P_\lambda$ for both, the orthogonal projection onto $H_\lambda(\M)$ and $H_\lambda(X)$.

\begin{lemma}[cf.\ {\cite[Lemma 7.1]{BIK}}]
\label{lem:formcomparison}
Suppose that $h$ satisfies Assumptions \ref{stdassumptions} and that 
\[\alpha \lVert \vec{m}- \rho \rVert_\infty + \veps L_\rho \leq \frac{1}{2}.\]
Then, for every $\lambda >0$ we have
	\begin{enumerate}[(i)]
	\item $b(Pf)^\frac{1}{2} \geq  \left(1-  (\sqrt{\lambda}C''' +  C_1'') h - C_2'' \frac{\veps}{h} - C_3'' h^2 \right)\nc D(f)^\frac{1}{2}$.
	\item $D(Iu)^{\frac{1}{2}} \geq  \left(1-  (\sqrt{\lambda}C''' +  C_1') h - C_2' \frac{\veps}{h} - C_3' h^2 \right)\nc b(u)^\frac{1}{2}$
	\end{enumerate}
	for all $f\in H_\lambda(\M)$ and $u\in H_\lambda(X)$. The constants $C_1'', C_2'', C_3''$ are as in Lemma \ref{lem:interpolation}, $C_1', C_2', C_3'$ are as in Lemma \ref{lem:discretization} and  the constant $C''' $  is as in Lemma \ref{lem:almostinverse}.
\end{lemma}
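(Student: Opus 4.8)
The plan is to derive both inequalities from the fact that $P$ and $I$ are almost inverse to one another (Lemma~\ref{lem:almostinverse}), combined with the ``upper bound'' estimates of Lemmas~\ref{lem:discretization}(ii) and~\ref{lem:interpolation}(ii), one elementary spectral inequality, and a Cauchy--Schwarz step; assertions (i) and (ii) are mirror images of each other, so I describe (i) in detail.

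The spectral inequality I would use first is: if $f\in H_\lambda(\M)$, expanding $f=\sum_{\lambda_i<\lambda}c_i\phi_i$ in an $L^2(\M,\rho\mu)$-orthonormal basis of eigenfunctions of $\Delta$ gives $\norm{\Delta f}_{L^2(\M,\rho\mu)}^2=\sum_i\lambda_i^2c_i^2\leq\lambda\sum_i\lambda_ic_i^2=\lambda\,D(f)$, hence $\norm{\Delta f}_{L^2(\M,\rho\mu)}\leq\sqrt{\lambda}\,D(f)^{1/2}$, and analogously $\norm{\Delta_\Gamma u}_{L^2(X,\vec{m}\mu_n)}\leq\sqrt{\lambda}\,b(u)^{1/2}$ for $u\in H_\lambda(X)$. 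For (i) I would fix $f\in H_\lambda(\M)$ (the case $D(f)=0$ being trivial, since then $b(Pf)\geq0=D(f)$) and write $IPf=f+e$ with $\norm{e}_{L^2(\M,\rho\mu)}\leq C'''\,h\,D(f)^{1/2}$ by Lemma~\ref{lem:almostinverse}(i). Since $IPf\in\Lip(\M)\subseteq H^1(\M)=\Dom(D)$ while $f\in\Dom(\Delta)$, Cauchy--Schwarz for the nonnegative form $D$ gives
\[ D(IPf)^{1/2}\;\geq\;\frac{D(f,IPf)}{D(f)^{1/2}}\;=\;\frac{D(f)+\applied{\Delta f}{e}_{L^2(\M,\rho\mu)}}{D(f)^{1/2}}\;\geq\;\bigl(1-\sqrt{\lambda}\,C'''\,h\bigr)D(f)^{1/2}, \]
using $\lvert\applied{\Delta f}{e}\rvert\leq\norm{\Delta f}\,\norm{e}\leq\sqrt{\lambda}\,C'''\,h\,D(f)$. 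I would then apply Lemma~\ref{lem:interpolation}(ii) with $u=Pf$ to obtain $D(IPf)=D(I(Pf))\leq\bigl(1+C_1''h+C_2''\tfrac{\veps}{h}+C_3''h^2\bigr)b(Pf)$; taking square roots, using $(1+x)^{-1/2}\geq1-x$ for small $x\geq0$ (legitimate since $\veps<h$ by Assumptions~\ref{stdassumptions}), and multiplying the two estimates, the cross term $(\sqrt{\lambda}C'''h)(C_1''h+C_2''\veps/h+C_3''h^2)\geq0$ is simply dropped, which yields exactly $b(Pf)^{1/2}\geq\bigl(1-(\sqrt{\lambda}C'''+C_1'')h-C_2''\tfrac{\veps}{h}-C_3''h^2\bigr)D(f)^{1/2}$, i.e.\ (i).

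For (ii) I would argue symmetrically: for $u\in H_\lambda(X)$ (again the case $b(u)=0$ is trivial), Lemma~\ref{lem:almostinverse}(ii)---this is precisely where the hypothesis $\alpha\norm{\vec{m}-\rho}_\infty+\veps L_\rho\leq\tfrac12$ enters---gives $PIu=u+\tilde e$ with $\norm{\tilde e}_{L^2(X,\vec{m}\mu_n)}\leq C'''h\,b(u)^{1/2}$; Cauchy--Schwarz for $b$ together with $\norm{\Delta_\Gamma u}\leq\sqrt{\lambda}\,b(u)^{1/2}$ gives $b(PIu)^{1/2}\geq(1-\sqrt{\lambda}C'''h)b(u)^{1/2}$; and Lemma~\ref{lem:discretization}(ii) applied to $Iu$ gives $b(PIu)=b(P(Iu))\leq(1+C_1'h+C_2'\tfrac{\veps}{h}+C_3'h^2)D(Iu)$. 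Combining exactly as above yields (ii). I do not expect a genuine obstacle here: the substance of the whole section is contained in Lemmas~\ref{lem:discretization}, \ref{lem:interpolation} and~\ref{lem:almostinverse}, and the only new ingredient is the spectral bound $\norm{\Delta f}\leq\sqrt{\lambda}D(f)^{1/2}$ on $H_\lambda$. The two minor points requiring care are checking that $IPf$ and $PIu$ lie in the relevant form domains so the Cauchy--Schwarz/``integration by parts'' step $D(f,e)=\applied{\Delta f}{e}$ is valid, and making sure the arithmetic of error terms leaves the constants in the final bound equal to $C_i'$, $C_i''$ from the earlier lemmas and $C'''$ from Lemma~\ref{lem:almostinverse} with no enlargement---which works automatically because every discarded quantity is a product of two of the small parameters $h,\veps/h,h^2$ and is dominated by a retained term under Assumptions~\ref{stdassumptions}.
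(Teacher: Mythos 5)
Your proof is correct, and the combination of the intermediate bound $D(IPf)^{1/2}\geq(1-\sqrt{\lambda}C'''h)D(f)^{1/2}$ with Lemma~\ref{lem:interpolation}(ii) (and symmetrically $b(PIu)^{1/2}\geq(1-\sqrt{\lambda}C'''h)b(u)^{1/2}$ with Lemma~\ref{lem:discretization}(ii)) is exactly the finishing step the paper uses, down to the same arithmetic simplification of $\frac{1-\sqrt{\lambda}C'''h}{1+C_1''h+C_2''\veps/h+C_3''h^2}$. Where you genuinely diverge is in how you obtain that intermediate bound. The paper applies the spectral projection $\P_\lambda$ to $IPf$, uses that $\P_\lambda$ does not increase $D$ (orthogonality of eigenspaces in both $L^2$ and $D$), then estimates $D(\P_\lambda(IPf-f))^{1/2}\leq\sqrt{\lambda}\,\norm{IPf-f}_{L^2}$ via the eigenvalue bound on the range of $\P_\lambda$, together with the triangle inequality for the seminorm $D^{1/2}$. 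You instead avoid the projection entirely: you use Cauchy--Schwarz for the form $D$, write $D(f,IPf)=D(f)+D(f,e)$, and then pass $D(f,e)=\applied{\Delta f}{e}_{L^2(\M,\rho\mu)}$ using that $f\in\Dom(\Delta)$, controlling the cross term by $\norm{\Delta f}_{L^2}\leq\sqrt{\lambda}D(f)^{1/2}$ on $H_\lambda$. Both are one-line spectral arguments of comparable difficulty; your route makes the use of the operator $\Delta$ explicit rather than routing through the projection, at the small cost of needing to record that $IPf\in V$ and $f\in\Dom(\Delta)$ so the pairing is legitimate, which you do. You also correctly flag the degenerate case $D(f)=0$ (resp.\ $b(u)=0$), which the Cauchy--Schwarz formulation requires you to dispose of separately and the paper's projection route handles silently. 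No gap.
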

\begin{proof}
	Fix some $\lambda>0$. First note that the projection $\P_\lambda$ does not increase the Dirichlet energy  (neither the graph one nor the continuum one) and hence we conclude  that
	\begin{align*}
	D(IPf)^\frac{1}{2} & \geq D( \P_\lambda IPf)^{1/2}\geq D(f)^{1/2} - D(\P_\lambda IPf-f)^{1/2}.
	\end{align*}
	From Lemma \ref{lem:almostinverse} (i) it follows that, \nc
	\[ D(\P_\lambda IPf-f)^\frac{1}{2} = D(\P_\lambda (IPf - f))^\frac{1}{2} \leq \sqrt{\lambda} \norm{IPf-f}_{L^2(\M,\rho\mu)} 
	\leq C'''\sqrt{\lambda} hD(f)^\frac{1}{2}\]
	for all $f\in H_\lambda(\M)$. Hence,
	\[ D(IPf)^\frac{1}{2} \geq (1-C'''h\sqrt{\lambda})D(f)^\frac{1}{2}.\]
	 Moreover, we know from Lemma \ref{lem:interpolation} (ii) that
	\[ D(IPf)^\frac{1}{2} \leq \left(1+C_1'' h + C_2'' \frac{\veps}{h} + C_3'' h^2 \right) b(Pf)^\frac{1}{2}\]
	 and thus 
	\begin{align*}
	 b(Pf)^\frac{1}{2}& \geq \frac{1-C'''h\sqrt{\lambda}}{1+C_1'' h + C_2'' \frac{\veps}{h} + C_3'' h^2 }D(f)^\frac{1}{2}
	 \\& \geq \left(1-  (\sqrt{\lambda}C''' +  C_1'') h - C_2'' \frac{\veps}{h} - C_3'' h^2 \right) D(f)^\frac{1}{2}
	\end{align*}
	for all $f\in H_\lambda(\M)$  as claimed in (i).  Regarding assertion (ii) we proceed similarly. First, we obtain that
\begin{align*}
 b(PIu)^\frac{1}{2}  \geq  b(u)^{1/2}  - b( \P_\lambda PIu - u  )^{1/2}
 \end{align*}
for $u\in H_\lambda(X)$. Since
\[ b(\mathds{P}_\lambda(PI u-u))^\frac{1}{2}  \leq \sqrt{\lambda} \norm{ PIu- u}_{L^2(X,\vec{m}\mu_n)}
\leq \sqrt{\lambda} C'''h b(u)^\frac{1}{2}\]
by part (ii) of Lemma \ref{lem:almostinverse}, we have 
\[ b(PIu)^\frac{1}{2}  \geq (1-C'''\sqrt{\lambda}h)b(u)^\frac{1}{2}.\]
Moreover, we know from part (ii) of Lemma \ref{lem:discretization} that
\[ b(PIu)^\frac{1}{2} \leq \left(1+C_1'h + C_2'\frac{\veps}{h} + C_3'h^2 \right) D(Iu)^\frac{1}{2}.\]
 Therefore,
 \begin{align*}
 \begin{split}
  D(Iu)^\frac{1}{2} & \geq \frac{1-C'''\sqrt{\lambda}h}{1+C_1'h + C_2'\frac{\veps}{h} + C_3'h^2} b(u)^\frac{1}{2}
 \\& \geq \left(1-(C'''\sqrt{\lambda} +C_1')h -C_2'\frac{\veps}{h} - C_3'h^2\right)b(u)^\frac{1}{2},
 \end{split}
 \end{align*}
which proves assertion (ii). 
\end{proof}

\begin{proof}[Theorem \ref{thm:Iuapproxf}]
	This theorem can now be proven word-for-word as \cite[Theorem 4]{BIK} together with the
	required Lemmas \cite[Lemma 7.2, 7.3, 7.4]{BIK} by replacing every application of Lemma 4.3, 6.2, 7.1 and Theorem 1 therein	with the previously proven Lemmas \ref{lem:discretization}, \ref{lem:interpolation}, \ref{lem:formcomparison} and Theorem \ref{thm:conveigenvalues}, respectively.
\end{proof}

We now focus on establishing Theorem \ref{thm:voronoiapprox}. To simplify our computations we set
\[ \theta :=  \left( \frac{\veps}{h} + (1 + \sqrt{\lambda_k(\M)}) h + \left(K + \frac{1}{R^2} \right)h^2  + \lVert \vec{m}- \rho \rVert_\infty\right).  \]
In the setting of Theorem \ref{thm:Iuapproxf} we have
\begin{align*}
\red \norm{P^* u - f} &\leq \norm{P^* u - Iu}+ \norm{Iu - f} \leq C'h b(u)^{1/2} + \frac{\tilde{C}}{g_{k,\rho\mu}} \theta \nc \\
&=  C'h\sqrt{\lambda_k(\Gamma)} + \red \frac{\tilde{C}}{g_{k,\rho\mu}} \theta \nc
\end{align*}
where the second inequality follows from \eqref{eqn:Iu-P*u}.  From Theorem \ref{thm:conveigenvalues}, for $h$ small enough we have 
\begin{align}
\label{eqn:P*approx}
\red \norm{P^* u - f}  \leq C' h \sqrt{\lambda_k (\M)}   + \frac{\tilde{C}}{g_{k,\rho\mu}} \theta,  
 \end{align}
Therefore, every extension of $u$ that approximates $P^* u$ in $L^2(\M,\rho\mu)$ (or equivalently in $L^2(\M,\mu)$) 
is also an approximation of the eigenfunction $f$. 

We recall the definition of sets $U_i \subset \M$ in \eqref{def:Ui},  Euclidean Voronoi cells $V_i$ in \eqref{eqn:voronoidef},  and of the extended vector $\bar u$ from  \eqref{eqn:barudef}.
Concerning the measure of such a Voronoi cell, we obtain the following bound.
\begin{lemma}
\label{lem:UivsVi}
 For every $\beta>1$ there exists a constant $C>0$ depending on $m$ and on $\ell$ from \eqref{winfest} such that
	\[ \frac{\mu(V_i)}{\mu(U_i)} \leq C\cdot \log^{mp_m} n \eqqcolon C(n) \]
	for all $i=1,\dots,n$ and all $n\in\N$ with probability at least $1-C_{K,\vol(\M),m,i_0} n^{-\beta}$.
\end{lemma}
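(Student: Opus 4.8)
The plan is to control the ratio $\mu(V_i)/\mu(U_i)$ by using that $\mu(U_i) = 1/n$ exactly (recorded right after \eqref{def:Ui}), so it suffices to bound $\mu(V_i)$ from above by $C \log^{mp_m}(n)/n$. Since the density $p$ is bounded above by $\alpha$, one has $\mu(V_i) \leq \alpha \vol(V_i)$, and $\vol(V_i)$ in turn is controlled by the volume of a geodesic ball containing $V_i$. Hence the whole matter reduces to a deterministic geometric estimate \emph{on the event from Theorem \ref{thm:transport}}: I would show that on that event each Euclidean Voronoi cell $V_i$ is contained in a geodesic ball of radius comparable to the transport scale $\ell$ from \eqref{winfest}.

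First I would fix the high-probability event $A$ of Theorem \ref{thm:transport}, which holds with probability at least $1 - C_{K,\vol(\M),m,i_0} n^{-\beta}$, and on which there is a transport map $T_n$ with $\sup_{x\in\M} d(x,T_n(x)) \le \ell$; recall $U_i = T_n^{-1}(\{x_i\})$. The key claim is: on $A$, $\diam_{\R^d}(V_i)$ (and hence $\diam_\M(V_i)$, via Proposition \ref{prop:metricestimates}, since $\ell \to 0$ so we are well within the reach) is $O(\ell)$. To see this, take $x \in V_i$; then $x_i$ is the Euclidean nearest neighbor of $x$ among $\{x_1,\dots,x_n\}$. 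Let $x_j$ be the sample point with $x \in U_j$, so $d(x,x_j) = d(x,T_n(x)) \le \ell$, whence $|x - x_i| \le |x - x_j| \le d(x,x_j) \le \ell$. Therefore $V_i \subseteq \{x \in \M : |x - x_i| \le \ell\}$, which, using $|x-x_i| \le d(x,x_i)$ and, conversely, Proposition \ref{prop:metricestimates} to pass from Euclidean to geodesic balls (valid because $\ell < R/2$ for $n$ large), is contained in $B_\M(x_i, 2\ell)$ say. Then by the volume estimate \eqref{eqn:EstimateVolumeBall2},
\[
\mu(V_i) \le \alpha \vol(B_\M(x_i, 2\ell)) \le \alpha(1 + CmK(2\ell)^2)\,\omega_m (2\ell)^m \le C_{m,\alpha,K}\, \ell^m .
\]

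Finally, substituting the value of $\ell$ from \eqref{winfest}: for $m \ge 3$, $\ell^m = A^m (\log n)/n$, so $\mu(V_i)/\mu(U_i) = n\,\mu(V_i) \le C A^m \log n = C \log^{mp_m} n$ since $m p_m = 1$; for $m = 2$, $\ell^m = A^2 (\log n)^{3/2}/n$, and $m p_m = 2 \cdot \tfrac34 = \tfrac32$, so again $\mu(V_i)/\mu(U_i) \le C A^2 (\log n)^{3/2} = C \log^{mp_m} n$. Taking the union over $i = 1,\dots,n$ costs nothing since the event $A$ is a single event covering all cells simultaneously, and the constant $C$ depends only on $m$ and on $\ell$ (equivalently on $A$ and the geometric constants) as claimed. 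The only mild subtlety — and the step I would be most careful about — is the passage from the Euclidean ball $\{|x - x_i| \le \ell\} \cap \M$ to a geodesic ball: this requires $\ell \le R/2$ so that Proposition \ref{prop:metricestimates} applies and gives $d(x,x_i) \le |x - x_i| + \tfrac{8}{R^2}|x-x_i|^3 \le 2\ell$ for $n$ large, which is harmless since we are free to discard finitely many $n$ or absorb them into the constant.
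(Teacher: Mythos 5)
Your proof is correct and follows essentially the same route as the paper's: you show $V_i$ sits inside an $O(\ell)$ Euclidean ball using the transport map (the paper states the same containment via the covering property of the $\veps$-balls, which is the contrapositive of your nearest-neighbor argument), pass to a geodesic ball via Proposition \ref{prop:metricestimates}, bound the volume via \eqref{eqn:EstimateVolumeBall2} and $\mu(U_i)=1/n$, and substitute $\ell$ from \eqref{winfest}. The only cosmetic difference is that the paper works with $\veps$ and the containment $V_i \subseteq B_\M(x_i,3\veps)$ while you use $\ell$ and $B_\M(x_i,2\ell)$; on the event of Theorem \ref{thm:transport} these are interchangeable.
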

\begin{proof}
	We first show that $V_i \subseteq \{ x\in \M  : \abs{x-x_i} \leq \veps\}$. To this end,
	suppose $x\in \M$ such that $\abs{ x-x_i } > \veps$. 
Then also $d(x,x_i) > \veps$.  Since the balls $B_\M(x_j,\veps)$ cover $\M$ by the choice of $\veps$,
there exists $x_j$ such that $d(x,x_j) < \veps$. 
Therefore,  $\abs{ x-x_j } < \veps < \abs{ x-x_i}$ and thus $x\not \in V_i$. This proves the claim.

Now we assume that the assertion of Theorem \ref{thm:transport} holds.
For $\veps\leq \frac{\rc}{2}$ it follows from Proposition \ref{prop:metricestimates} that
$V_i$ is contained in the ball $B_\M(x_i,3\veps)$.
Thus, we obtain from the bounds on the distortion of metric by the exponential map \eqref{eqn:EstimateVolumeBall2} that
\[ \frac{\mu(V_i)}{\mu(U_i)} \leq  \frac{\mu( B_\M(x_i,3\veps))}{\mu(U_i)} 
 \leq  \frac{\alpha \omega_m (3\veps)^m C}{1/n} = C \alpha \omega_m 3^m \ell^m \log^{p_m \cdot m}(n) \]
where $\ell$ defined in \eqref{winfest}, and $C>0$ is a universal constant. 
\end{proof}


\begin{proof}[Theorem \ref{thm:voronoiapprox}] 
%
Let $u\in L^2(X)$ be a normalized eigenvector of $\Delta_\Gamma$ corresponding to $\lambda_k(\Gamma)$ and let $f$ a normalized eigenfunction of $\Delta$ corresponding to $\lambda_k(\M)$ as in Theorem \ref{thm:Iuapproxf} (or as in \eqref{eqn:P*approx}). Let
\[ V\coloneqq \norm{ \bar u - f }^2_{L^2(\M,\mu)} = \sum_{i=1}^n \int_{V_i} \abs{ u(x_i) - f(y) }^2 d\mu(y)\]
and
\[ U\coloneqq \sum_{i=1}^n \frac{\mu(V_i)}{\mu(U_i)} \int_{U_i} \abs{ u(x_i)-f(x) }^2 d\mu(x).\]
Then, by Lemma \ref{lem:UivsVi} and 
\eqref{eqn:P*approx},
\[ \sqrt{U} \leq \sqrt{C(n)} \cdot \lVert P^* u - f\rVert_{L^2(\M,\mu)} \leq \sqrt{C(n)} \left( C' h \sqrt{\lambda_k (\M)}   + \frac{\tilde{C}}{g_{k,\rho\mu}} \theta \right). \] 
On the other hand,
\begin{align*}
&\abs{ V-U } 
\leq \sum_{i=1}^n \mu(V_i) \abs*{ \int_{V_i} \abs{u(x_i)-f(y)}^2 \frac{d\mu(y)}{\mu(V_i)}-\int_{U_i}\abs{ u(x_i) -f(x)}^2 \frac{d\mu(x)}{\mu(U_i)}}
\\& = \sum_{i=1}^n \mu(V_i) \abs*{ \int_{V_i} \int_{U_i} \bigl(\abs{ u(x_i) - f(y) }^2 - \abs{ u(x_i) - f(x) }^2 \bigr)\frac{d\mu(y)}{\mu(V_i)} \frac{d\mu(x)}{\mu(U_i)}}  
\\& =  \sum_{i=1}^n \mu(V_i) \Bigg| \int_{V_i} \int_{U_i} \bigl(  2(u(x_i) - f(x))(f(x) - f(y)) + (f(x)-f(y))^2 \bigr)\\
& \hspace*{60pt} \frac{d\mu(y)}{\mu(V_i)} \frac{d\mu(x)}{\mu(U_i)} \Bigg|\\
&\leq  8 \lVert \nabla f\rVert_\infty \veps  \left( \sum_{i=1}^n \mu(V_i)  \int_{V_i} \int_{U_i}  \abs{  u(x_i)  -f(x)    }   \frac{d\mu(y)}{\mu(V_i)} \frac{d\mu(x)}{\mu(U_i)}  \right) +   16 \lVert \nabla f\rVert_\infty^2  \veps^2
\\& =  8 \lVert \nabla f\rVert_\infty \veps  \left( \sum_{i=1}^n \mu(V_i) \int_{U_i}  \abs{  u(x_i)  -f(x)     }   \frac{d\mu(x)}{\mu(U_i)}  \right) +   16 \lVert \nabla f\rVert_\infty^2  \veps^2
\\&\leq 8 \lVert \nabla f\rVert_\infty \veps  \sqrt{U} +   16 \lVert \nabla f\rVert_\infty^2  \veps^2
\end{align*}
where in the second equality we have used the fact that for all $y \in V_i$ and all $x \in U_i$,
$d(x,y) \leq d(x, x_i)+ d(x_i,y) \leq 3\veps + \veps$; the last inequality follows from Jensen's inequality.
Thus,
\begin{align*}
V \leq \abs{ V-U} + U & \leq 16(  \veps \lVert \nabla f \rVert_\infty  + \sqrt{U} )^2,
\end{align*}
and from this it follows that
\[\norm{ \bar u - f }_{L^2(\M,\mu)} = \sqrt{V} \leq 4\veps \lVert \nabla f \rVert_\infty + 4 \sqrt{U}.  \]
Using \cite{ShiXu10} we know that 
\[ \lVert \nabla f\rVert_\infty \leq  C_\M\lambda_k(\M)^{\frac{m+1}{4}} \lVert f\rVert_{L^2(\M,\mu)} = C_\M \lambda_k(\M)^{\frac{m+1}{4}},  \]
for a constant $ C_\M>0$ that depends on the manifold $\M$. Putting everything together we deduce that 
\[ \norm{ \bar u - f }_{L^2(\M,\mu)} \leq C_\M \lambda_k(\M)^{\frac{m+1}{4}} \veps+ \tilde{C} \sqrt{C(n)}\left( \sqrt{\lambda_k(\M)} h + \frac{\theta}{g_{k,\rho\mu}}  \right) ,\]
which is the desired estimate.
\end{proof}

\begin{acknowledgements}
The authors thank Yaroslav Kurylev for stimulating discussions. DS is grateful for support of the  National Science Foundation under the  grant DMS 1516677. MH is grateful for the support by the ERC Grant NOLEPRO. The authors are also grateful to the Center for Nonlinear Analysis (CNA) for support.

\end{acknowledgements}

\appendix

\section{Kernel density estimates via transportation} \label{sec:kdeA}

Here we use the estimates on infinity transportation distance established in Section \ref{sec:transport} to show the kernel density estimates we need. 
While the estimates we prove are not optimal, they do not affect the rate of convergence of eigenvalues and eigenfunctions in our main theorems. We chose to present the proof below as it highlights how the optimal transportation estimates can be used to provide general kernel density estimates in a simple and direct way. 

\begin{lemma} \label{lem:kde}
\red
Consider $\eta: \R \to \R$, nonincreasing, supported on $[0,1]$, and normalized: $ \int_{\R^m} \eta(\abs{x}) dx =1$. \nc Consider $h>0$ satisfying Assumption \ref{stdassumptions}.  Then  \eqref{LinfWeights} holds. That is 
there exists a universal constant $C>0$ such that 
\begin{equation}
 \max_{i=1, \dots, n}  \lvert m_i - p(\x_i) \rvert  \leq  CL_p h  +  C \alpha \eta(0) m \omega_m \frac{\veps}{h} + C \alpha m \left(K + \frac{1}{R^2} \right)h^2 ,  
 \end{equation}
where  $\veps$ is the $\infty$-OT distance between $\mu_n$ and $\mu$ (see Section \ref{sec:transport}). 
\end{lemma}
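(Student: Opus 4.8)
The quantity $m_i = \frac{1}{nh^m}\sum_{j=1}^n \eta\!\left(\frac{|x_i-x_j|}{h}\right)$ is, up to the normalization $\frac1n\sum_j \delta_{x_j} = \mu_n$, exactly $\int_\M \frac{1}{h^m}\eta\!\left(\frac{|x_i-y|}{h}\right) d\mu_n(y)$. The plan is to compare this integral against $\mu$ by transporting, and then against the pointwise value $p(x_i)$ by a local change of variables in normal coordinates. Concretely, let $T$ be an $\infty$-OT map with $T_\sharp \mu = \mu_n$ and $\esssup_x d(x,T(x)) \le \veps$. Then
\[
m_i = \int_\M \frac{1}{h^m}\eta\!\left(\frac{|x_i-T(x)|}{h}\right) d\mu(x),
\]
and I would split $|m_i - p(x_i)|$ into three terms: (a) the error from replacing $|x_i - T(x)|$ by $|x_i - x|$, controlled by the transport displacement $\veps$ and the Lipschitz-type behavior of $\eta$; (b) the error from replacing the Euclidean distance $|x_i - x|$ by the geodesic distance $d(x_i,x)$, controlled via Proposition \ref{prop:metricestimates} by the reach-dependent term $h^2/R^2$; (c) the error $\big|\int_\M \frac{1}{h^m}\eta\!\left(\frac{d(x_i,x)}{h}\right) p(x)\,d\vol(x) - p(x_i)\big|$, which is the genuine kernel-density bias term and is handled by passing to normal coordinates at $x_i$.

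For step (c), I would write the integral using the exponential chart $\exp_{x_i}\colon B(h)\subset T_{x_i}\M \to B_\M(x_i,h)$ (valid since $h < \min\{i_0, 1/\sqrt{K}\}$ by Assumption \ref{stdassumptions}), obtaining $\int_{B(h)} \frac{1}{h^m}\eta\!\left(\frac{|v|_g}{h}\right) p(\exp_{x_i} v)\, J_{x_i}(v)\, dv$. Here $|v|_g$ is the $g$-metric length, but by \eqref{metdist} this differs from the Euclidean $|v|$ only at order $K|v|^2$; combined with the Lipschitz estimate on $\eta$ this produces an $O(K h^2)$ error (up to the factor $\eta(0)$ coming from the crude bound on $\eta$'s variation, which I'd absorb into $C$). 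Then I use the Jacobian bound \eqref{eqn:EstimateJacobian}, $|J_{x_i}(v)-1| \le CmK|v|^2 \le CmKh^2$, to replace $J_{x_i}(v)$ by $1$ at the cost of $O(\alpha m K h^2)$, and the Lipschitz continuity of $p$, $|p(\exp_{x_i}v) - p(x_i)| \le L_p\,|v|_g \lesssim L_p h$, to replace $p(\exp_{x_i}v)$ by $p(x_i)$ at the cost of $O(L_p h)$. After these substitutions what remains is $p(x_i)\int_{\R^m}\frac{1}{h^m}\eta\!\left(\frac{|v|}{h}\right)dv = p(x_i)$ by the normalization \eqref{eq:normalizedkernel}, so the contributions collected are precisely the stated $CL_p h + C\alpha m(K + 1/R^2)h^2$ pieces.

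For steps (a) and (b), I would use Lemma \ref{lem:etaestimates}-type estimates: if $x$ is displaced by at most $\veps$ in geodesic distance, then $\eta\!\left(\frac{d(x_i,x)}{h}\right)$ and $\eta\!\left(\frac{d(x_i,T(x))}{h}\right)$ differ in $L^1$-against-$\mu$ by at most $\frac{\veps}{h}$ times the total variation of $\eta$ on its support \emph{plus} the mass near the sphere $|x_i - \cdot| \approx h$; the crude bound here is $|\eta(s) - \eta(s')| \le \eta(0)$ on a shell of thickness $\sim \veps/h$ with measure $\sim \vol(B_\M(x_i,h))\cdot \frac{\veps}{h} \sim \omega_m h^m \frac{\veps}{h}$, which after division by $h^m$ gives the $C\alpha\eta(0)m\omega_m \frac{\veps}{h}$ term (the density $p\le\alpha$ giving the $\alpha$, and a dimensional constant $m$ from the shell-volume estimate). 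The Euclidean-versus-geodesic step (b) is similar but uses $d(x_i,x) - |x_i-x| \le \frac{8}{R^2}|x_i-x|^3 \le \frac{8h^2}{R^2}\cdot h \cdot$(dimensionless) from Proposition \ref{prop:metricestimates}, valid since $h \le R/\sqrt{27m} \le R/2$; this moves the argument of $\eta$ by $O(h^2/R^2)$ relative to $h$, producing the $h^2/R^2$ contribution. The main obstacle is purely bookkeeping: making sure the shell-measure estimate in (a) is clean (one must be careful that $\eta$ need only be Lipschitz on $(0,1)$ and may jump at $1$, so the $\eta(0)$-times-thin-shell argument, rather than a Lipschitz bound, is the right tool) and that all the geometric error terms are genuinely of the claimed orders with constants depending only on $m$ (not on $\M$) times the explicit geometric quantities $\alpha, K, 1/R^2, L_p$. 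Summing (a), (b), (c) yields the claimed bound.
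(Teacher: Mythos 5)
Your plan follows the same underlying strategy as the paper (pass to an integral against $\mu$ via the transport map, compare Euclidean to geodesic distance via the reach estimate, pass to normal coordinates and use the Jacobian bound together with Lipschitz continuity of $p$ and the normalization of $\eta$), with a slightly different organization: you split $|m_i - p(x_i)|$ into three additive error pieces, while the paper chains monotonicity-based inequalities $\eta(|x_i-x_j|/h) \le \eta(d(x_i,x_j)/\hat h) \le \eta((d(x_i,y)-\veps)_+/\hat h)$ into a single integral before doing the radial change of variables and a binomial expansion of $(r \pm \veps/\hat h)^{m-1}$. Two concrete points to fix. First, in step (c) you write the kernel in normal coordinates as $\eta(|v|_g/h)$ and claim an $O(K|v|^2)$ discrepancy between $|v|_g$ and the Euclidean $|v|$; this term does not exist. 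By Gauss's lemma, $d(x_i,\exp_{x_i}(v)) = |v|$ exactly, so the argument of $\eta$ is the Euclidean $|v|$ and the metric distortion \eqref{metdist} enters only through the Jacobian $J_{x_i}(v)$, which you already handle. Including a spurious $O(Kh^2)$ term is harmless as an overestimate, but the underlying picture is wrong. Second, the lemma assumes $\eta$ only nonincreasing and normalized — no Lipschitz hypothesis, not even on $(0,1)$, as the paper's closing remark emphasizes — so any step appealing to a Lipschitz estimate on $\eta$ (as in your step (c)) must be dropped entirely. In steps (a)/(b) you correctly identify that an $\eta(0)$-times-thin-shell argument is the right replacement, but note the shell picture is not literally correct for a general nonincreasing $\eta$: the difference $\eta((s-\delta)_+) - \eta(s+\delta)$ can be nonzero anywhere $\eta$ decreases on $[0,1+\delta]$, not just near $s\approx 1$. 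It is saved by the layer-cake representation $\eta = \int_0^{\eta(0)} \mathds{1}_{[0,\eta^{-1}(\lambda)]}\,d\lambda$ reducing to indicators (each of which genuinely has a thin-shell support), or equivalently by the paper's change of variables in $r$ and binomial expansion. This is the one place where the ``purely bookkeeping'' step actually needs a small idea.
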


The weights $\vec{m}$ are defined by
\[  m_i = \frac{1}{n h^m } \sum_{j=1}^n \eta \left( \frac{\lvert  x_i - x_j \rvert}{h} \right), \quad i =1 , \dots ,n ,   \]
 $p$ is   the density of $\mu$ with respect to $\M$'s volume form.
 \red We remark that  we do not require $\eta$ to be Lipschitz on $[0,1]$.
\nc

\begin{proof}
First, notice that for every $i,j$ with $\lvert x_i - x_j \rvert \leq h $ we have $\lvert x_i - x_j \rvert \leq \frac{R}{2} $ and hence Proposition \ref{prop:metricestimates} implies that
\[ d(x_i, x_j)  \leq \lvert x_i - x_j \rvert + \frac{8}{R^2} \lvert x_i - x_j \rvert^3  \leq \left(1+ \frac{8 h^2}{R^2} \right) \lvert x_i- x_j \rvert. \]
Therefore, for every $i,j$ and every $y \in U_j$,
\[ \eta \left( \frac{\lvert x_i - x_j  \rvert}{ h} \right) \leq \eta \left( \frac{d (x_i , x_j) }{ \hat h} \right)  \leq  \eta \left( \frac{(d( x_i, y) - \veps )_+ }{ \hat{h}} \right), \]
where we recall that $\veps$ is the $\infty$-OT distance between $\mu_n$ and $\mu$ and where $\hat{h}:= h+ \frac{27 h^3}{R^2}$. From this it follows that 
\begin{align}
\begin{split}
\label{AuxAppendix0}
 m_i = \frac{1}{n h^m} \sum_{j=1}^{n} \eta \left( \frac{\lvert \x_i - \x_j \rvert}{h} \right) & \leq \frac{1}{ h^m} \int_{\M} \eta \left( \frac{ (d( x_i, y) -\veps )_+}{ \hat h} \right) p(y) d Vol(y)
 \\& \leq  (p(x_i) + 10L_p h) \frac{1}{ h^m} \int_{\M} \eta \left( \frac{(d (x_i , y ) -\veps )_+}{\hat h} \right)  d Vol(y), 
\end{split}
\end{align}
where the last inequality follows using the Lipschitz continuity of $p$, the fact that $\veps< h $ and the fact that $h < \frac{R}{2}$ (so that in particular $\hat{h} + \veps < 10 h $). Now,
\begin{align}
\begin{split}
\label{AuxAppendix1}
\frac{1}{ h^m} \int_{\M} \eta \left( \frac{ ( d(x_i , y) -\veps )_+}{\hat h} \right)  d Vol(y) & =  \frac{1}{ h^m} \int_{B( \hat{h} + \veps)} \eta \left( \frac{(\lvert z \rvert -\veps )_+}{\hat{h}} \right)  J_{x_i}(z) d z 
\\&  \leq  \frac{1}{ h^m} \int_{B( \hat{h} + \veps)} \eta \left( \frac{(\lvert z \rvert -\veps )_+}{\hat h} \right)  J_{x_i}(z) d z 
\\& \leq  (1+ C mK h^2)  \frac{1}{ h^m} \int_{B( \hat{h} + \veps)} \eta \left( \frac{(\lvert z \rvert -\veps )_+}{ \hat h} \right)  d z, 
\end{split}
\end{align}
where $C$ is a universal constant. The last integral above can be estimated as follows
\begin{align}
\begin{split}
\label{AuxAppendix2}
\frac{1}{h^m}\int_{\R^m} \eta \left( \frac{(\lvert z \rvert - \veps)_+}{ \hat h} \right) dz & =    \eta(0)  \omega_m  
\frac{\veps^m}{h^m}  +  \frac{1}{h^m}\int_{B( \hat{h}+\veps ) \setminus B( \veps)} \eta \left( \frac{\lvert z \rvert-\veps}{ \hat h} \right) dz
\\& = \eta(0)  \omega_m  
\frac{\veps^m}{h^m}  +  \frac{\hat{h}^m}{h^m}\int_{0}^{1} m \omega_m \left(r+ \frac{\veps}{\hat h} \right)^{m-1}  \eta \left( r \right) dr
\\& \leq \eta(0)  \omega_m  
\frac{\veps^m}{h^m}  +  \left(1 + \frac{16 mh^2 }{R^2}\right)  \int_{0}^{1} m \omega_m \left(r+ \frac{\veps}{ h} \right)^{m-1}  \eta \left( r \right) dr
\end{split}
\end{align}
Using the binomial theorem we obtain
\begin{align*}
\begin{split}
m \omega_m\! \int_{0}^1 \! \left(r+ \frac{\veps}{h}\right)^{m-1} \!\eta(r)dr & \leq m \omega_m \int_{0}^1 \! r^{m-1} \eta(r) dr  +  m \omega_m \eta(0) \sum_{k=1}^{m-1}  \binom{m-1}{k} \left( \frac{\veps}{h} \right)^k \!\frac{1}{m-k}  
\\ &= 1 +  \omega_m \eta(0) \sum_{k=1}^{m-1}  \binom{m}{k} \left( \frac{\veps}{h} \right)^k 
\\& = 1+ \omega_m \eta(0) \left( \left( 1+ \frac{\veps}{h}  \right)^{m}  - 1 - \frac{\veps^m}{h^m} \right)
\\&  \leq  1+  2 m\eta(0)\omega_m\frac{\veps}{h}  -   \eta(0)  \omega_m\frac{\veps^m}{h^m}
\end{split}
\end{align*}
where in the first equality we have used the fact that $\eta$ was assumed to be normalized and in the last inequality we have used  
\begin{align*} \label{2simest1}
 (1+s)^m \leq 1 + 2ms \quad   \te{ whenever } 0 \leq s < \frac{1}{m}.
\end{align*}
Combining \eqref{AuxAppendix0}, \eqref{AuxAppendix1} and \eqref{AuxAppendix2} we conclude that 
\[ m_i - p(x_i) \leq  p(x_i) +   CL_p h + C \alpha \eta(0) m \omega_m \frac{\veps}{h}    +C\alpha m \left( K + \frac{1}{R^2} \right) h^2,\]
for a universal constant $C>0$.

In a similar fashion we can find an upper bound for $p(x_i) -m_i$. Indeed, observe that for every $i,j$ and $ y\in U_i$ we have
\[ \eta\left( \frac{\lvert x_i - x_j \rvert}{h} \right) \geq  \eta\left( \frac{d(x_i, x_j) }{h} \right) \geq  \eta\left( \frac{d(x_i, y) + \veps}{h} \right) \]
and so
\begin{align}
\begin{split}
m_i & \geq  \frac{1}{h^m} \int_{\M} \eta \left( \frac{d(x_i, y) + \veps}{h}   \right) p(y) d Vol(y) 
\\ & \geq   \frac{1}{h^m} \int_{\M} \eta \left( \frac{d(x_i, y) + \veps}{h}   \right)(p(x_i)-  L_p d(x_i, y)) d Vol(y)
\\&\geq   (p(x_i)- L_p  h )  \frac{1}{h^m} \int_{\M} \eta \left( \frac{d(x_i, y) + \veps}{h}   \right) d Vol(y).
\end{split}
\end{align}
The above integral can be estimated from below by
\begin{align}
\begin{split}
\frac{1}{h^m}\int_{\M} \eta \left( \frac{d(x_i, y) + \veps}{h}   \right) d Vol(y) &  = \frac{1}{h^m} \int_{B( h- \veps)} \eta \left( \frac{\lvert  z \rvert + \veps}{h}   \right) J_{x_i}(z) d z
\\& \geq (1- Cm K h^2) \frac{1}{h^m} \int_{B( h- \veps)} \eta \left( \frac{\lvert  z \rvert + \veps}{h}   \right)  d z
\\& = (1- Cm K h^2) \int_{\veps/h}^1 m \omega_m \eta(r)(r - \frac{\veps}{h})^{m-1}dr
\end{split}
\end{align}
where the second equality follows using polar coordinates and a change of variables; the last inequality follows from the fact that $\eta$ is assumed to be normalized. In turn, 
\begin{align*}
\begin{split}
\int_{\veps/h}^1 m \omega_m \eta(r)\left(r - \frac{\veps}{h}\right)^{m-1}dr  &  \geq   \int_{\veps/h}^1 m \omega_m \eta(r)r^{m-1} dr-  m \omega_m  \frac{\veps}{h}\int_{\veps/h}^1 (m-1) \eta(r)r^{m-2} dr
\\& \geq 1- 2\eta(0)m \omega_m \frac{\veps}{h},
\end{split}
\end{align*}
where we have used the fact that $\eta$ was assumed to be normalized. Combining the above inequalities we deduce that 
\[   p(x_i) - m_i \leq   L_p h + C \alpha m \omega_m K h^2  + C \alpha  m \omega_m  \eta(0) \frac{\veps}{h}.  \]
\end{proof}



\bibliographystyle{siam}
\bibliography{bib_gghs}

\end{document}